\documentclass[10pt,english]{article}
\usepackage[margin=1in]{geometry} 
\usepackage[T1]{fontenc}
\usepackage[latin9]{inputenc}
\usepackage{bm}
\usepackage{amsmath}
\usepackage{amssymb} 
\usepackage{tabularx}
\usepackage{pifont}
\usepackage{tablefootnote}
\usepackage{longtable}
\usepackage{lipsum} 
\usepackage{caption}
\usepackage{bbding}
\usepackage{makecell}
\usepackage{threeparttable}
\usepackage[labelformat=simple]{subcaption}

\usepackage[unicode=true,
 bookmarks=false, 
 breaklinks=false,pdfborder={0 0 1},colorlinks=false]
 {hyperref}
\hypersetup{
 colorlinks,citecolor=blue,filecolor=blue,linkcolor=blue,urlcolor=blue}
\usepackage{xcolor,colortbl}
\definecolor{Gray}{gray}{0.85}
\usepackage{enumitem}
\makeatletter
\usepackage{amsthm}  
\usepackage{comment}
\usepackage{booktabs,mathtools}
\usepackage{graphicx}
\usepackage{algorithm}
\usepackage{algpseudocode}
\usepackage{comment}
\newcommand{\mycomment}[1]{}
\usepackage{thmtools}
\usepackage{thm-restate}

\usepackage{multirow}
\usepackage{dsfont}
\usepackage{color}
\usepackage{float}

\definecolor{full}{RGB}{225,0,100}
\definecolor{hew}{RGB}{0,47,167}

\definecolor{own_pink}{RGB}{217,25,169}
\definecolor{own_blue}{RGB}{0,100,223}

\allowdisplaybreaks

\newcommand{\cf}{cf.}
\newcommand{\ie}{i.e.}
\newcommand{\eg}{e.g.}
\newcommand{\alg}{\texttt{PISCO}}

\newcommand{\bx}{\boldsymbol{x}}
\newcommand{\bv}{\boldsymbol{v}}
\newcommand{\bxi}{\boldsymbol{\xi}}
\newcommand{\by}{\boldsymbol{y}}
\newcommand{\bz}{\boldsymbol{z}}
\newcommand{\bX}{\boldsymbol{X}}
\newcommand{\bY}{\boldsymbol{Y}}

\newcommand{\bG}{\boldsymbol{G}}
\newcommand{\bg}{\boldsymbol{g}}

\newcommand{\bW}{\boldsymbol{W}}
\newcommand{\bJ}{\boldsymbol{J}}

\newcommand{\bA}{\boldsymbol{A}}
\newcommand{\bB}{\boldsymbol{B}}
\newcommand{\bC}{\boldsymbol{C}}
\newcommand{\be}{\boldsymbol{e}}
\newcommand{\bI}{\boldsymbol{I}}
\newcommand{\bPhi}{\boldsymbol{\Phi}}

\newcommand{\bDelta}{\boldsymbol{\Delta}}

\newcommand{\barx}{\overline{\boldsymbol{x}}}

\newcommand{\barg}{\overline{\boldsymbol{g}}}
\newcommand{\barX}{\overline{\boldsymbol{X}}}
\newcommand{\barY}{\overline{\boldsymbol{Y}}}
\newcommand{\barG}{\overline{\boldsymbol{G}}}

\newcommand{\E}{\mathbb{E}}
\newcommand{\F}{\mathsf{F}}

\newcommand{\calZ}{\mathcal{Z}}
\newcommand{\calD}{\mathcal{D}}

\newcommand{\normF}{\mathsf{F}}

\theoremstyle{plain} 
\newtheorem{lemma}{\textbf{Lemma}}
\newtheorem{corollary}{\textbf{Corollary}}
\newtheorem{proposition}{\textbf{Proposition}}
\newtheorem{definition}{\textbf{Definition}}
\newtheorem{thm}{\textbf{Theorem}} 
\newtheorem{assump}{Assumption}
\theoremstyle{remark}\newtheorem{remark}{\textbf{Remark}}

\title{Communication-Efficient Federated Optimization\\ over Semi-Decentralized Networks}

 \author{
 	He Wang\thanks{Department of Electrical and Computer Engineering, Carnegie Mellon University, Pittsburgh, PA 15213, USA.}\\
 	Carnegie Mellon University\\
 	\texttt{hew2@andrew.cmu.edu}
 	\and
 	Yuejie Chi\footnotemark[1] \\ 	 
  	Carnegie Mellon University  \\
 	\texttt{yuejiechi@cmu.edu}
	\thanks{A preliminary version of this work was presented at 2024 IEEE International Conference on Acoustics, Speech and Signal Processing (ICASSP). Initial submission: September 2023.}
 	}

\date{September 2023; Revised \today}

\begin{document}

\maketitle

 \begin{abstract} 
	In large-scale federated and decentralized learning, communication efficiency is one of the most challenging bottlenecks. While gossip communication---where agents can exchange information with their connected neighbors---is more cost-effective than communicating with the remote server, it often requires a greater number of communication rounds, especially for large and sparse networks. To tackle the trade-off, we examine the communication efficiency under a \textit{semi-decentralized} communication protocol, in which agents can perform both agent-to-agent and agent-to-server communication \textit{in a probabilistic manner}. We design a tailored communication-efficient algorithm over semi-decentralized networks, referred to as \alg, which inherits the robustness to data heterogeneity thanks to gradient tracking and allows multiple local updates for saving communication. We establish the convergence rate of \alg~for nonconvex problems and show that \alg~enjoys a linear speedup in terms of the number of agents and local updates. Our numerical results highlight the superior communication efficiency of \alg~and its resilience to data heterogeneity and various network topologies.
 \end{abstract}

\noindent \textbf{Keywords:} communication efficiency; semi-decentralized networks; probabilistic communication models; local updates 

\allowdisplaybreaks
\setcounter{tocdepth}{2}
\tableofcontents

\section{Introduction}

Consider a networked system that $n$ agents collectively solve the following federated or distributed optimization problem:
\begin{equation}\label{eq:prob}
  \min _{\boldsymbol{x} \in \mathbb{R}^d} f(\boldsymbol{x})\coloneqq \frac{1}{n} \sum_{i=1}^n f_i(\boldsymbol{x}), \; \text{where}~f_i(\boldsymbol{x}) \coloneqq \frac{1}{m}\sum_{\boldsymbol{z}\in \mathcal{D}_i} \ell(\boldsymbol{x};\boldsymbol{z}).
\end{equation}
Here, $\boldsymbol{x}\in\mathbb{R}^d$ denotes the optimization variable, $f_i(\boldsymbol{x})$ denotes the local and private objective function at agent $i$, and $f(\boldsymbol{x})$ denotes the global objective function. In addition,  let $\boldsymbol{z}$ represent one data sample, $\mathcal{D}_i$ stand for the dataset with $|\mathcal{D}_i|=m$ samples at agent $i$, and $\ell(\boldsymbol{x};\boldsymbol{z})$ denote the empirical loss of $\boldsymbol{x}$ w.r.t. the data sample $\boldsymbol{z}$. Such problems have a wide range of applications, including but not limited to estimation in sensor networks~\cite{rabbat2004sensor}, resource allocation in smart grids~\cite{beck2014resource}, and coordination in multi-agent systems~\cite{cao2013coordiation}.

In order to tackle this problem, agents have to communicate with one another for cooperation, since every agent $i\in [n]$ only has access to its own local dataset $\mathcal{D}_i$. There are two main communication protocols, consisting of agent-to-agent communication model (in decentralized ML) and agent-to-server communication model (in federated ML). Commonly, they are formulated via different network topologies~\cite{nedich2018networktopology}, as shown in Figure \ref{fig:communication_models}.
More specifically, prior works in decentralized ML often use a general graph to capture the local communication, where every agent is only allowed to exchange information with its connected neighbors (\cf~Figure~\ref{fig:communication_models_decentralized}). In federated ML, the star graph is commonly used to depict the communication between agents and the centralized coordinator (\ie, server) who can both collect information from and broadcast to each agent (\cf~Figure~\ref{fig:communication_models_federated}). 

\begin{figure}[!htbp]
    \centering 
    \begin{subfigure}[b]{0.45\textwidth}
    \centering
    \includegraphics[width = 0.6\linewidth,trim={7cm 21cm 44cm 15cm},clip]{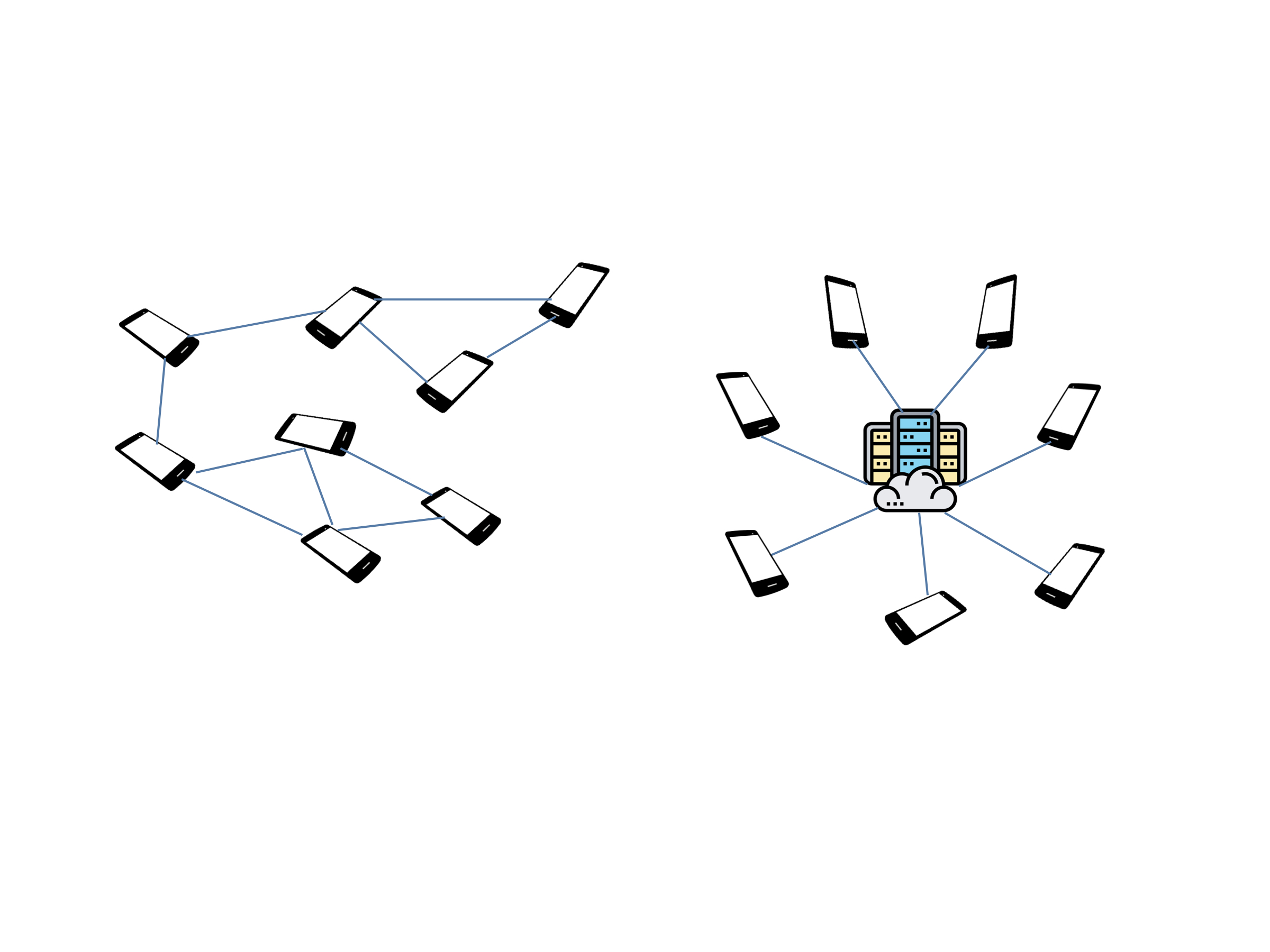}
    \caption{Agent-to-agent communication model with a general connected graph, where each agent only communicates with its adjacent agents. }
    \label{fig:communication_models_decentralized}
    \end{subfigure}
    \qquad
    \begin{subfigure}[b]{0.45\textwidth}
    \centering
    \includegraphics[width = 0.6\linewidth,trim={43cm 19cm 10cm 16cm},clip]{fig/communication_models.jpeg}
    \caption{Agent-to-server communication model with a star graph, where each agent can send messages to and receive messages from the server.}
    \label{fig:communication_models_federated}
    \end{subfigure}
    \caption{Two communication models for distributed ML.}
    \label{fig:communication_models}
\end{figure}

As the network size increasingly grows, communication efficiency becomes so critical that significantly hinders both decentralized and federated ML from being applied to real-world applications. Compared with agent-to-server communication, agent-to-agent communication is much more affordable and more applicable to large-scale networks. However, without the coordination of the server, decentralized approaches may need more communication rounds to reach consensus, especially for large and sparse networks.

Given that the communication complexity depends on the trade-off between the communication rounds and the per-round cost, emerging works focus on heterogeneous communication over \textit{semi-decentralized networks}, to gain the best from both agent-to-agent and agent-to-server communication \cite{chen2021accelerating,wang2022accelerating,guo2022hybrid}. Such semi-decentralized networks---consisting of a centralized server and a network of agents---widely exist in many applications, such as autonomous vehicles \cite{carvajal2021autovehicle}, energy systems \cite{navidi2021semi} and ML systems \cite{miao2023paralleltraining}. It has been observed that heterogeneous communication largely alleviates the heavy network dependence of distributed learning and tackles the communication bottleneck of the server \cite{chen2021accelerating}. However, to the best of our knowledge, all of them rely on the assumption of bounded data dissimilarity across agents and a complete characterization of the convergence behavior with respect to the network heterogeneity is still lacking. More detailed discussions on communication-saving strategies and semi-decentralized approaches are provided in Section \ref{sec:related_work}.

\begin{figure}[t]
    \centering
    \includegraphics[width = 0.6\linewidth,trim={2.5cm 15cm 4cm 9cm},clip]{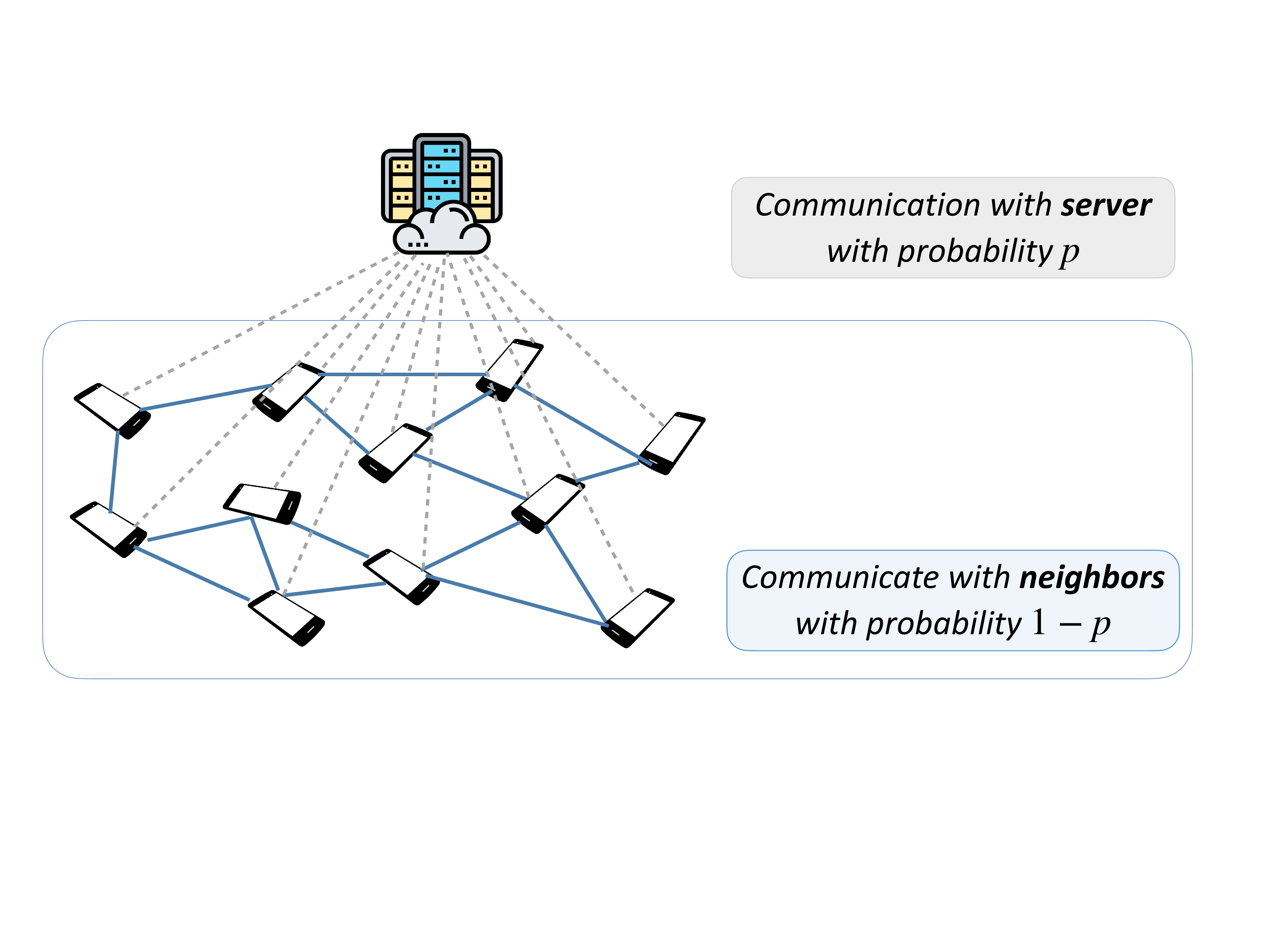}
    \caption{The semi-decentralized communication protocol, where the server can be accessed with probability $p$ and agents can communicate with their neighbors whenever the server is not available. Here, dotted lines represents the agent-to-server communication, while the solid ones are for agent-to-agent communication.}
    \label{fig:semidecentralized}
\end{figure}

\subsection{Our contributions}

To fill the void, we propose a  communication-efficient algorithm called \alg, which incorporates \textit{gradient-tracking techniques} \cite{zhu2010discrete} and \textit{multiple local updates} \cite{mcmahan2017communication} for solving federated nonconvex optimization over semi-decentralized networks modeled by a probabilistic connection model (shown in Figure \ref{fig:semidecentralized}). Such a semi-decentralized communication model (with local updates) allows \alg~to be viewed as a special form of gradient-tracking-based algorithms with time-varying networks. However, existing convergence guarantees for nonconvex optimization \cite{lorenzo2016next,lu2020decentralized,huang2022optimal}, cannot fully characterize the benefits of agent-to-server communication and multiple local updates. Specifically, applying previous analyses would yield convergence results that depend on the spectral gap of the least connected network---\ie, the underlying gossip communication network---while failing to capture the value of agent-to-server communication. To quantify these benefits, our analysis is of independent interest and can be readily extended to time-varying networks.
The highlights of our contributions are as follows.
\begin{enumerate}
    \item We prove that \alg~converges at a rate of  $O\left(1/\sqrt{nT_oK}\right)$ for sufficiently large $K$, where $K$ is the number of communication rounds and $T_o$ is the number of local updates. Our result does not impose the strong assumptions on data heterogeneity. Moreover, increasing the number of local updates accelerates the convergence over semi-decentralized networks. See Table \ref{tab:semi-ML} for a detailed comparison with prior art.
    
    \item We show that the communication heterogeneity offered by a semi-decentralized network  largely alleviates the network dependency of communication overheads in decentralized networks via a few agent-to-server communication rounds. For large and sparse networks (\ie, the mixing rate $\lambda_w\to 0$), with a small probability $p=\Theta\left(\sqrt{\lambda_w}\right)$ of agent-to-server communication, the network dependency improves from $O(\lambda_w^{-2})$ to $O(\lambda_w^{-1})$. 
    \item We corroborate the superior communication efficiency of \alg~through simulations on real-world datasets. The results substantiate the convergence speedup brought by multiple local updates and the robustness of  \alg~to data heterogeneity and various topologies, even for locally disconnected networks.
    
\end{enumerate}

\begin{table*}[!t]
    \centering
    \renewcommand{\arraystretch}{2}
     \begin{tabular}{|c |c|c| c| c| c|} 
     \hline
     \multirow{2}{*}{Algorithm} & \multicolumn{2}{c|}{Convergence rate \footnotemark[1]}& \multirow{2}{*}{\makecell[ct]{Bounded data\\heterogeneity\\assumption}} & \multirow{2}{*}{\makecell[ct]{Accessibility\\of server}} & \multirow{2}{*}{\makecell[ct]{Multiple\\local updates}}\\
     \cline{2-3}
     & Mini-batch & Large-batch& & & \\
     \hline
     \makecell{Gossip-PGA\\\cite{chen2021accelerating}} & $O\left(\frac{1}{\sqrt{nK}}\right)$ & $O\left(\left(\frac{G}{K}\right)^{\frac{2}{3}}\right)$ &   \ding{51}  & \makecell{every $H$ rounds} &\ding{55} \\
     \hline
     \makecell{HL-SGD\\\cite{guo2022hybrid}} &$O\left(\frac{1}{\sqrt{nK}}\right)$& $O\left(\left(\frac{G_1+G_2}{K}\right)^{\frac{2}{3}}\right)$  & \ding{51} &\makecell{every $H$ rounds}  &\ding{55}\\
     \hline
     \rowcolor{Gray} 
     \textbf{This paper} &$O\left(\frac{1}{\sqrt{nT_oK}}\right)$ & $O\left(\frac{1}{K}\right)$   & \ding{55} & w.p. $p$ &\ding{51}\\
     \hline
     \end{tabular}
     \caption{Comparison of ours and other semi-decentralized algorithms in nonconvex optimization using the same batch size, regarding the convergence rate, algorithm design and data heterogeneity assumptions. Here, $K$ is the number of communication rounds, $n$ is the number of agents, $T_o$ is the number of multiple local updates within a single gossip/global communication round, $G$ is the quantity in bounded dissimilarity between local objective and the global objective, $G_1$ is the quantity in bounded intra-cluster dissimilarity as \cite[Assumption 4]{guo2022hybrid}, $G_2$ is the quantity in bounded inter-cluster dissimilarity as \cite[Assumption 5]{guo2022hybrid}. }
     \label{tab:semi-ML}
\end{table*}
\footnotetext[1]{Here we only present the leading term of the rate for simplicity.}

\subsection{Related works}\label{sec:related_work}
Over the past few years, distributed optimization has attracted growing attention and has been extensively explored. For the convenience of our readers, we provide a review of the most related works below.
 
\paragraph{Distributed nonconvex optimization.}
As the size of the networked system increases, there are considerable algorithms developed for solving distributed nonconvex optimization. Roughly speaking, they can be categorized into two classes --- decentralized algorithms \cite{nedic09dgd,lorenzo2016next,hong2017prox_pda,tang2018d2} where agents are only allowed to exchange information with neighbors, and federated algorithms \cite{Li2020On,karimireddy2020scaffold,wang22fedadaptive} where agents are able to communicate with the server directly. Early attempts apply (stochastic) gradient descent to distributed optimization, which performs well in practice \cite{nedic09dgd,Li2020On}. However, the dissimilarity among local objectives could degenerate the performance under heterogeneous data and thus requires additional assumptions like bounded gradient or diminishing step-sizes. To eliminate such strong assumptions, many following works \cite{lorenzo2016next,hong2017prox_pda,karimireddy2020scaffold} have been developed, including gradient tracking (GT) techniques.
The key idea of GT is to utilize dynamic average consensus \cite{zhu2010discrete} for global gradient estimation, which has been incorporated with many distributed optimization algorithms to achieve faster convergence rates in nonconvex settings~\cite{scutari2019distributed,sun2020improving,sun2022inference,zhao2022beer}. 
Our proposed \alg~also takes advantage of GT to inherit its robustness against data heterogeneity.

\paragraph{Communication-efficient distributed ML.}
 Communication efficiency is one of the most important bottlenecks in distributed ML. In the decentralized setting, the communication complexity largely depends on the network topology, \ie, poor connectivity slows down the information mixing and thus requires more communication rounds to consensus \cite{nedich2018networktopology}. As for the federated ML, the communication burden of the centralized server may be unaffordable. 
To overcome such bottlenecks, a number of strategies are proposed for improving communication efficiency~\cite{cao2023review}, including: 1) \textit{compression methods}: compressing the information for communication~\cite{Koloskova*2020Decentralized,zhao2022beer}; 2) \textit{multiple local communication and updates}: executing multiple gossip communication~\cite{li2020communication,chen2021accelerating} or successive local updates within one communication round~\cite{nguyen2022performance,ge2023gradient,Li2020On,karimireddy2020scaffold}; 3) \textit{heterogeneous communication model over semi-decentralized networks}: allowing both agent-to-agent and agent-to-server communication \cite{chen2021accelerating,sun2023semi,lin2021semi-decentralized}; and \textit{4) adaptive communication strategies}: utilizing event-triggered communication mechanisms \cite{george2020distributed,wen2018distributed,he2023asymptotic} and tailored adaptive communication topologies \cite{song2022communication,ding2023dsgd,tupitsa2024federated,liu2024efficient} for saving unnecessary communications.
In this paper, we aim to gain the best communication efficiency from probabilistic heterogeneous communication over semi-decentralized networks and enable multiple local updates for more communication savings.

\paragraph{Semi-decentralized ML.}
As mentioned above, semi-decentralized ML, resorting to heterogeneous communication, tackles both the communication bottleneck of the centralized server in federated ML and the heavy network dependency of decentralized ML. 
We mainly summarize the works in nonconvex setting that are mostly related to this paper, while referring readers to \cite{chen2021accelerating,parasnis2023connectivity} for the (strongly) convex setting.
For nonconvex problems, Gossip-PGA \cite{chen2021accelerating} first integrates Gossip SGD \cite{nedic09dgd} with periodical global averaging and obtains a better scalability. It shows that intermittently communicating with the server can largely alleviate the heavy dependence on the network connectivity, especially for large or sparse networks. However, the theoretical results depend on the assumption of bounded similarity between local objectives.
Moreover, HL-SGD~\cite{guo2022hybrid} extends Gossip-PGA to the hierarchical networked structure with multiple clusters, while HA-Fed \cite{wang2022accelerating} can be viewed as HL-SGD with momentum.
Both of them enable intra-cluster gossip averaging and inter-cluster averaging, but also rely on the data heterogeneity assumptions which may be impractical in many real-world applications. 
Noted that all of them consider deterministic heterogeneous communication, \ie, agents/cluster can only communicate with the server every $H$ communication rounds, but the synchronization largely depends on the availability of the server. To this end, we consider the probabilistic communication model, where agents only exchange information with the server at the probability $p$. Furthermore, to the best of our knowledge, none of these approaches enable multiple local updates within a single communication round, whereas ours benefits from the linear speedup provided by the local updates. More detailed comparison can be found in Table \ref{tab:semi-ML}. Note that this comparison is based on the same batch size, while \cite{chen2021accelerating} and \cite{guo2022hybrid} may use fewer batch data per round.

\subsection{Notation}
Throughout this paper, we use the lowercase and uppercase boldface letters to represent vectors and matrices, respectively. 
We use $\|\boldsymbol{A}\|_\normF$ for the Frobenius norm of a matrix $\boldsymbol{A}$, $\|\boldsymbol{A}\|_2$ for the largest singular value of a matrix $\boldsymbol{A}$, $\|\boldsymbol{a}\|_2$ for the $l_2$ norm of a vector $\boldsymbol{a}$, and $\otimes$ for the Kronecker product. In addition, we use $\boldsymbol{I}_n$ for the identity matrix of dimension $n$, $\boldsymbol{1}_n$ for the all-one vector of dimension $n$ and $\boldsymbol{O}_{d\times n}$ for the all-zero matrix of dimension $(d\times n)$. For any two real functions $f(\cdot)$ and $g(\cdot)$ defined on $\mathbb{R}^{+}$, $f(x) = O(g(x))$ if there exists a positive real constant $M$ and $x_0$ such that $f(x)\le M g(x)$ for any $x\ge x_0$. Similarly,  $f(x) = \Theta(g(x))$ if there exist positive real constants $M_1,M_2$ and $x_0$ such that $M_1 g(x)\le f(x)\le M_2 g(x)$ for any $x\ge x_0$.
Note that ``$\le$'' can be interpreted in an  element-wise fashion, if it is applied to vectors or matrices with the same dimension.

\section{Preliminaries}\label{sec:preliminaries}

\subsection{Communication graph and mixing matrix}
Consider a \textit{semi-decentralized network} that has a centralized server to coordinate all $n$ agents and an undirected \textit{communication graph} $\mathcal{G} = (\mathcal{V},\mathcal{E})$, where $\mathcal{V} = \{1,\ldots,n\}$ denotes the set of $n$ agents and $\mathcal{E} \subseteq \{\{i,j\}|i,j\in\mathcal{V}\}$ represents the local communication between agents. For every agent $i\in\mathcal{V}$, let $\mathcal{N}_i = \{j|\{i,j\}\in\mathcal{E}\}$ denote agent $i$'s neighbors whom the agent $i$ can communicate with. 

Moreover, for any communication graph $\mathcal{G}$, the mixing of local communication can be formally characterized by the \textit{mixing matrix} $\boldsymbol{W} = [w_{ij}]_{1\le i,j\le n}$ defined in Definition \ref{def:mixing_matrix}.
\begin{definition}[Mixing matrix and mixing rate]\label{def:mixing_matrix}
    Given an undirected communication graph $\mathcal{G}$, a nonnegative matrix $\boldsymbol{W}\in\mathbb{R}^{n\times n}$ is the mixing matrix, whose element $w_{ij}=0$ if and only if $\{i,j\}\notin \mathcal{E}$ and $i\ne j$ and $\boldsymbol{W}$ is doubly stochastic, \ie, $\boldsymbol{W1}_n = \mathbf{1}_n ~and~\mathbf{1}^{\top}_n\boldsymbol{W}=\mathbf{1}_n^{\top}$.
     The mixing rate of $\bW$ is a nonnegative constant, \ie,
    \begin{equation*}
        \lambda_w \coloneqq 1-\|\bW-\frac{1}{n}\mathbf{1}_n\mathbf{1}_n^{\top}\|_2^2.
    \end{equation*}
\end{definition}
Note that the doubly stochasticity implies $\|\bW\|_2\le 1$ and the mixing rate $\lambda_w = 1-\lambda^2 \in[0,1]$, where $\lambda$ denotes the second largest eigenvalue. The mixing rate can depict the connectivity of the communication graph $\mathcal{G}$, or to say, the speed of information mixing. Mathematically, 
\begin{equation*}
    \|\bW\bx-\barx\|_2^2\le  (1-\lambda_w)\|\bx-\barx\|_2^2, \quad \forall \bx\in\mathbb{R}^d,
\end{equation*}
where $\barx = \frac{1}{n}\mathbf{1}_n\mathbf{1}_n^{\top}\bx\in\mathbb{R}^{n}$. In other words, a larger mixing rate $\lambda_w$ indicates a better connectivity as well as a faster process of information mixing, while disconnected graphs have $\lambda_w=0$.  For example,  considering a fully connected graph where every agent can communicate with each other, the mixing matrix can be defined as $$\bJ \coloneqq \frac{1}{n}\mathbf{1}_n\mathbf{1}_n^{\top},$$ and its mixing rate is equal to $1$. Specifically, in this paper, we use $\bJ$ to describe the mixing of the agent-to-server communication.
\subsection{Stochastic gradient methods}
To improve the computational efficiency, one popular approach is to replace the full-batch gradients with stochastic gradients on the mini-batch data samples. In distributed setting, we define the local stochastic gradient for each agent $i\in[n]$ as:
\begin{equation}\label{eq:stochastic_grad}
    \boldsymbol{g}_i = \frac{1}{b}\sum_{\boldsymbol{z}_i\in\mathcal{Z}_i} \nabla \ell(\boldsymbol{x_i};\boldsymbol{z}_i), \quad\forall \bx_i\in\mathbb{R}^d,
\end{equation}
where $\mathcal{Z}_i\subset\mathcal{D}_i$ denotes the sampled data batch. Here, we assume that $\mathcal{Z}_i$ is drawn i.i.d. from $\mathcal{D}_i$  with the same mini-batch size $b<m$ for every agent $i\in [n]$ for simplicity, while using an adaptive batch size could be of interest for better controlling the variance of stochastic gradients \cite{bollapragada2018adaptive}. Note that the local stochastic gradient $\bg_i$ is an unbiased estimate of $\nabla f_i(\bx_i)$, \ie, 
\begin{equation*}
    \E [\bg_i] = \nabla f_i(\bx_i), \quad\forall \bx_i\in\mathbb{R}^d.
\end{equation*}
\subsection{Gradient-tracking techniques}
In many real-world applications, the local dataset $\mathcal{D}_i$ on every agent $i\in[n]$ may be quite different from each other, referred to as \textit{data heterogeneity}. Accordingly, there exists some local stationary solution $\bx$ satisfying $\nabla f_i(\bx) = 0$ for some $i\in[n]$, but not necessarily with $\nabla f(\bx) = \sum_{i=1}^n \nabla f_i(\bx)= 0 $.
Under such circumstances, directly incorporating stochastic gradient methods with gossip or global averaging may not converge to the global stationary solution \cite{nedic09dgd} without the strong assumption like bounded data dissimilarity.

To address this issue, gradient-tracking (GT) techniques \cite{lorenzo2016next,nedic2017achieving,qu2018harnessing} have been proposed, which utilizes gossip communication for global gradient estimation leveraging dynamic average consensus \cite{zhu2010discrete}. 
Recently, DSGT \cite{pu2021distributed} incorporates GT with stochastic gradient methods for computational efficiency. The updates at the $k$-th iteration are defined as: every agent $i\in[n]$ updates its optimization variable $\bx_i^k$ and gradient-tracking variable $\by_i^k$ by 
\begin{equation*}
    \begin{aligned}
    \bx^{k+1} &= \sum_{j=1}^n w_{ij}(\bx_j^{k}-\eta\by_j^k),\\
    \by_i^{k+1} &= \sum_{j=1}^n w_{ij}\by_j^{k} + \bg_i^{k+1}-\bg_i^{k},
    \end{aligned}
\end{equation*}
where $\eta>0$ is the step-size and the initialization $\by_i^0 = \bg_i^0$. In addition, \cite{xin2019variance} incorporates GT with variance-reduced techniques and \cite{li2020communication} develops an approximate Newton-type methods with variance-reduced GT to further accelerate the convergence. More recent works \cite{liu2023decentralized,ge2023gradient} prove that DSGT with multiple local updates is able to converge under high data heterogeneity in nonconvex setting.
\section{Proposed \alg~Algorithm}
In this section, we introduce \alg, which exploits communication heterogeneity from the probabilistic communication model and inherits the robustness to data heterogeneity from GT. Before the depiction of \alg, we first introduce some compact-form notations for convenience. Let the matrices $
\bX = \left[\bx_1,\bx_2,\ldots,\bx_n\right]\in\mathbb{R}^{d\times n}$ and $\bY = \left[\by_1,\by_2,\ldots,\by_n\right]\in\mathbb{R}^{d\times n}$ represent the collection of all the optimization variables and gradient-tracking variables, respectively.
We also denote the gradient of empirical loss given the sampled batch dataset $\mathcal{Z} = \{\mathcal{Z}_i\}_{i=1}^n$ as
\begin{align*}
    &\nabla \ell(\bX;\mathcal{Z}) = \left[\sum_{\bz_1\in\mathcal{Z}_1}\nabla \ell (\bx_1;\bz_1),\ldots,\sum_{\bz_n\in\mathcal{Z}_n}\nabla \ell (\bx_n;\bz_n)\right].
\end{align*}
 With the local stochastic gradient as \eqref{eq:stochastic_grad} in hand, the distributed stochastic gradient can be represented by 
$$\bG = \left[\bg_1,\bg_2,\ldots,\bg_n\right] = \frac{1}{b}\nabla \ell(\bX;\calZ).$$

\begin{algorithm}[!t] 
    \caption{\alg~for semi-decentralized nonconvex optimization}
    \begin{algorithmic}[1]
        \State \textbf{input:} $\bX^0=\bx^0\boldsymbol{1}_n^{\top}$, local-update and communication step sizes $\eta_l, \eta_c$, number of iterations $K$, number of local updates $T_o$, mini-batch size $b$.
       \State \textbf{initialize:} Draw the mini-batch $\mathcal{Z}^0 = \{\mathcal{Z}_i^0\}_{i=1}^{n}$ randomly  and  set $\boldsymbol{Y}^0 =  \bG^0 = \frac{1}{b} \nabla \ell(\bX^0;\calZ^0)$.
       \For{$k = 0,1,\cdots, K-1$}
       \State Set $\bX^{k+1,0} = \bX^k$, $\bY^{k+1,0} = \bY^k$ and $\bG^{k+1,0} = \bG^{k}$.
            \For{$t = 1,2\cdots,T_o$}
            \State Draw the mini-batch $\mathcal{Z}^{k+1,t}$ and compute
            \begin{subequations}\label{eq:local_update}
                \begin{align}
                        \bX^{k+1,t} &=  \bX^{k+1,t-1} -\eta_l \bY^{k+1,t-1}\label{eq:local_update_x}\\
                        \bG^{k+1,t} &= \frac{1}{b}\nabla \ell(\bX^{k+1,t};\calZ^{k+1,t})\\
                        \bY^{k+1,t} &= \bY^{k+1,t-1} + \bG^{k+1,t}-  \bG^{k+1,t-1}\label{eq:local_update_y}.
                \end{align}
                \end{subequations}
            \EndFor
            \State Define $\bW^k = \begin{cases}
                \bJ & \text{with probability}~p, \\
                \bW & \text{otherwise}.
            \end{cases}$
            \State Draw the mini-batch $\mathcal{Z}^{k+1}$ and update
            \begin{subequations}\label{eq:communication_update}
                \begin{align}
                        \bX^{k+1} &= \left((1-\eta_c)\bX^{k} \!+\! \eta_c (\bX^{k+1,T_o}-\eta_l\bY^{k+1,T_o})\right)\bW^k\label{eq:communication_update_x}\\
                        \bG^{k+1} &= \frac{1}{b}\nabla \ell(\bX^{k+1};\calZ^{k+1})\\
                        \bY^{k+1} &= \left(\bY^{k+1,T_o} + \bG^{k+1}-  \bG^{k+1,T_o}\right)\bW^k\label{eq:communication_update_y}.
                \end{align}
            \end{subequations}
       \EndFor
    \end{algorithmic}
    \label{alg:GT-PCM}
\end{algorithm}

Then, we are ready to describe \alg~detailed in Algorithm \ref{alg:GT-PCM}, using the above compact notations. At the beginning of the $k$-th communication round, \alg~maintains the model estimate $\bX^k$, the global gradient estimate $\bY^k$ and the distributed stochastic gradient $\bG^k$. 
It then boils down to the following two stages for achieving both communication efficiency and exact convergence under data heterogeneity.
\begin{itemize}
    \item The first stage is to execute $T_o$ local steps without any communication (\cf~line 4-7). The key idea is to utilize the local computational resources to facilitate the convergence. 
    At the beginning of the local updates, initialize the local-update variables $\bX^{k+1,0} = \bX^k$, $\bY^{k+1,0} = \bY^k$ and $\bG^{k+1,0} = \bG^{k}$.
    At the $t$-th local update,  update $\{\bX^{k+1,t},\bY^{k+1,t},\bG^{k+1,t}\}$ via \eqref{eq:local_update}, maintaining the fashion of gradient-tracking techniques.
    \item The second stage is to perform the information exchange over the semi-decentralized network via a probabilistic communication model (\cf~line 8-10), \ie, there are two possible communication schemes --- agent-to-server communication with probability $p$ and agent-to-agent communication otherwise. Different schemes correspond to different mixing matrices (\cf~line 8), \ie, if agents implement the global communication, set $\bW^k = \bJ$; otherwise, set $\bW^k = \bW$. Then, agents update $\{\bX^{k+1},\bY^{k+1},\bG^{k+1}\}$ via \eqref{eq:communication_update}, using the output of local updates $\{\bX^{k+1,T_o},\bY^{k+1,T_o},\bG^{k+1,T_o}\}$ via the selected communication scheme.
\end{itemize}

\section{Theoretical Guarantees}
In this section, we provide the convergence results of our \alg~under different settings: \alg~converges at a rate of $O(1/\sqrt{nT_o K})$ using mini-batch gradients and $O(1/(nK))$ with full-batch gradients.

\subsection{Assumptions}
Before proceeding to the results, we first impose the following assumptions on the network model, objective functions and data sampling.
\begin{assump}[Semi-decentralized network model]\label{assump:graph}
    Given the undirected graph $\mathcal{G}$ and its mixing matrix $\bW$ following the Definition \ref{def:mixing_matrix}, then $\bW^k$ defined in Algorithm \ref{alg:GT-PCM} satisfies 
    \begin{equation*}
        \E[\|\bW^k\bx-\barx\|_2^2] \le  (1-\lambda_p)\|\bx-\barx\|_2^2, \quad\forall\bx\in\mathbb{R}^n,
    \end{equation*}
 where $\barx = \bJ\bx\in\mathbb{R}^n$ and the expected mixing rate $\lambda_p = \lambda_w + p(1-\lambda_w)\in(0,1]$.
\end{assump}
Note that Assumption \ref{assump:graph} is weaker than the connected assumption in prior semi-decentralized literatures \cite{chen2021accelerating,guo2022hybrid,wang2022accelerating}, \ie, $\lambda_w>0$. More specifically, Assumption \ref{assump:graph} implies that the underlying graph can be disconnected if and only if $p>0$.  Only in the case that the centralized server is unavailable (\ie, $p=0$), Assumption \ref{assump:graph} presumes the connectivity of $\mathcal{G}$.

Regarding the objective functions, we assume that the optimal value $f^{\star} \coloneqq \min_{\bx} f(\bx)$ exists and $f^\star>-\infty$. 
The local objective functions $\{f_i\}_{i=1}^n$ could be nonconvex but satisfy the standard smoothness assumption provided below.
\begin{assump}[$L$-smooth]\label{assump:smooth}
Each local function $f_i(\bx)$ is differentiable and there exists a constant $L$ such that 
\begin{equation*}
    \|\nabla f_i(\bx)-\nabla f_i(\by)\|_2 \le L\|\bx-\by\|_2, \quad \forall \bx,\by\in\mathbb{R}^d.
\end{equation*}
\end{assump}

In addition, we assume that the local stochastic gradient $\bg_i$ is an unbiased estimate with a bounded variance, which is widely used in the literature \cite{chen2021accelerating,wang2022accelerating,guo2022hybrid,ge2023gradient,ana21gt,zhao2022beer}.
\begin{assump}[Bounded variance]\label{assump:bounded_variance}
    For every agent $i\in[n]$, there exists a constant $\sigma\ge 0$ s.t. 
    $$\E_{\calZ_i\sim\calD_i}[\|\bg_i-\nabla f_i(\bx)\|_2^2]\le \sigma^2/b, \quad\forall \bx\in\mathbb{R}^d.$$
\end{assump}

Note that in the case of the full-batch gradients, \ie, the mini-batch size $b=m$, we can simply set $\sigma = 0$ and thus Assumption \ref{assump:bounded_variance} always holds.

\subsection{Convergence analysis of \alg} 
Now, we are ready to present our main results.
First, the following theorem demonstrates that our proposed \alg~ is able to converge to the neighborhood of the stationary solution to the  problem \eqref{eq:prob} at the rate of $O(1/K)$ with constant step-sizes. The proof is postponed to the Appendix \ref{appendix:proof_thm1}.

    \begin{thm}[Convergence rate]\label{thm:stochastic}
        Suppose Assumption \ref{assump:graph}, \ref{assump:smooth} and \ref{assump:bounded_variance} hold. Let $\tilde{f} = f(\barx^0) - f^\star$  and $\bPhi_y^0 = \bY^0 - \bY^0\bJ$.
        For any $\alpha\ge 0.1$ s.t. $\eta_c = \alpha\sqrt{(1+p)}\lambda_p$ and $\eta_l \le \frac{\sqrt{(1+p)}\lambda_p}{360\alpha L(T_o+1)}$,
        it holds that  $\frac{1}{K}\sum_{k=0}^{K-1}\E[\|\nabla f(\barx^k)\|_2^2]$ converges at the rate of
        \begin{equation} \label{eq:thm1_rate}
        \begin{aligned}
            &\underset{\text{terms due to SGD and local updates}}{\underbrace{O\left(\frac{\tilde{f}}{\eta T_oK} +\left(L^2T_o^2\eta_l^2+\frac{L\eta}{n}\right)\frac{\sigma^2}{b}\right)}}+ \underset{\text{terms due to decentralized overhead}}{\underbrace{O\left(\frac{(1-p)L^2T_o^2\eta^2}{(1+p)^2\lambda_p^4}\frac{\sigma^2}{b}+ \frac{1}{nK}\E[\|\bPhi_y^0\|_{\normF}^2]\right)}},
        \end{aligned}
        \end{equation}
        where the average model estimate $\barx^k = \frac{1}{n}\sum_{i=1}^n \bx_i^k\in\mathbb{R}^d$ and $\eta = \eta_c\eta_l$.
    \end{thm}
Note that the above convergence result can hold even under significant data heterogeneity across agents, since we do not assume any bounded similarity between local objectives.

Due to the existence of the variance $\sigma^2$, we fine-tune the local-update step-size to obtain the exact convergence rate with the leading term $O(1/\sqrt{nT_o K})$, based on Theorem \ref{thm:stochastic}. Specifically, the following corollary considers the scenarios with mini-batch gradients (i.e., $b \le O(\sigma^2 K)$ and $\sigma>0$), while the case of large or full batch gradients (i.e., the batch size $b\ge \Theta(\sigma^2 K)$ or $\sigma = 0$) will be discussed later in Corollary \ref{corollary:large-batch}. The proof of Corollary \ref{corollary:communication_complexity_minibatch} is postponed to Appendix \ref{appendix:proof_corollary}.
    \begin{corollary}[Convergence rate with mini batch]\label{corollary:communication_complexity_minibatch} Suppose all the conditions in Theorem \ref{thm:stochastic} hold. Consider that the number of communication rounds $K$ is sufficiently large, i.e., $K\ge  \Theta(\frac{nbT_o L\tilde{f}}{\lambda_p^4\sigma^2})$,
        and the mini-batch size $b\le\!O(\sigma^2K)$, where $\sigma>\!0$. 
        If the step-sizes $\eta_c = \alpha\sqrt{1+p}\lambda_p$,  
        $\eta_l= \frac{1}{\alpha T_o}\min\big\{\sqrt{\frac{n\alpha^2bT_o\tilde{f}}{\eta_c^2L\sigma^2K}},\sqrt[3]{\frac{\eta_c b\tilde{f} }{\alpha L^2\sigma^2K}}\big\}$.
    Then, $\frac{1}{K}\sum_{k=0}^{K-1}\E[\|\nabla f(\barx^k)\|_2^2]$ can converge at the rate of $$O\left(\left(\frac{L\sigma^2}{nT_obK}\right)^{\frac{1}{2}}\!\!+ \left(\frac{L\sigma}{\lambda_p^2\sqrt{b}K}\right)^{\frac{2}{3}} \!\!+ \!\frac{1}{nK}\right).$$
    \end{corollary}
    From Corollary \ref{corollary:communication_complexity_minibatch}, \alg~can achieve the \textit{$\epsilon$-accuracy}, \ie, $\frac{1}{K}\sum_{k=0}^{K-1}\E[\|\nabla f(\barx^k)\|_2^2]\le\epsilon^2$ after
            \begin{align*}\label{eq:communication_complexity}
                O\left(\frac{L\sigma^2}{nT_ob\epsilon^4} + \frac{L\sigma}{\lambda_p^2\sqrt{b}\epsilon^3}+ \frac{1}{n\epsilon^2} \right)
            \end{align*}
    communication rounds.
    Notice that if $K$ is sufficiently large and the first term $\left(\frac{L\sigma^2}{nT_obK}\right)^{\frac{1}{2}}$ correspondingly becomes dominant, increasing the number of agents $n$ or the number of local updates $T_o$ can accelerate the convergence. Such a linear speedup matches the findings in the special cases of semi-decentralized ML, \ie, decentralized setting \cite{liu2023decentralized,ge2023gradient} when $p=0$ and federated setting \cite{karimireddy2020scaffold} when $p=1$. 
    
    In fact, \alg~can be generalized to the decentralized case and federated case by setting $p=0$ and $p=1$ respectively, while maintaining comparable convergence guarantees.
    \begin{table*}[!t]
        \centering
        \renewcommand{\arraystretch}{1.5}
         \begin{tabular}{|c | c |c |} 
         \hline
         Algorithm  & \# Agent-to-server communication & \# Agent-to-agent communication \\
         \hline
         \begin{tabular}{@{}c@{}}SCAFFOLD\\\cite{karimireddy2020scaffold} \end{tabular}  & $O\left(\dfrac{\sigma^2}{nT_o\epsilon^4}+\dfrac{1}{\epsilon^2}\right)$ & 0\\
         \hline
         \begin{tabular}{@{}c@{}}LSGT\\\cite{ge2023gradient}\end{tabular}   &0 & $\begin{aligned}
            O\left(\dfrac{\sigma^4}{nT_o\lambda_w^8\epsilon^4} + \dfrac{1}{nT_o^{1/3}\lambda_w^{8/3} \epsilon^{4/3}}+ \dfrac{1}{nT_o\epsilon^2}\right)\end{aligned}$\\
         \hline
         \begin{tabular}{@{}c@{}}Periodical-GT\\\cite{liu2023decentralized}\end{tabular}  &0  & $O\left(\dfrac{\sigma^2}{nT_o\epsilon^4} +\dfrac{\sigma}{\lambda_w^2\epsilon^3} + \dfrac{1}{\lambda_w^2\epsilon^2}\right)$ \\
         \hline
         \begin{tabular}{@{}c@{}}$K$-GT \\\cite{liu2023decentralized}\end{tabular} &0  & $O\left(\dfrac{\sigma^2}{nT_o\epsilon^4} +\dfrac{\sigma}{\lambda_w^2\sqrt{T_o}\epsilon^3} + \dfrac{1}{\lambda_w^2\epsilon^2}\right)$ \\
         \hline
         \rowcolor{Gray}
         \textbf{This paper} & $\begin{aligned}
            &O\left(\dfrac{p\sigma^2}{nT_o\epsilon^4} + \dfrac{p\sigma}{(\lambda_w+p)^2\epsilon^3}+ \dfrac{p}{n\epsilon^2}\right)\\
        \end{aligned}$
          &$\begin{aligned}
            &O\left(\dfrac{(1-p)\sigma^2}{nT_o\epsilon^4} + \dfrac{(1-p)\sigma}{(\lambda_w+p)^2\epsilon^3}+  \dfrac{1-p}{n\epsilon^2}\right)
        \end{aligned}$
        \\
         \hline
         \end{tabular}
         \caption{The number of the expected agent-to-server/agent communication rounds of ours and existing decentralized and federated ML algorithms with stochastic gradients and local updates, to achieve the $\epsilon$-accuracy,  where 
         $\epsilon$ is sufficiently small. Here, $n$ is the number of agents, $T_o$ is the number of local updates, $\lambda_w$ is the mixing rate of the underlying graph.}
         \label{tab:communication_rounds}
    \end{table*}
    \begin{figure}[!t]
        \centering
        \includegraphics[width = 0.6\linewidth,trim={5cm 14cm 10cm 10cm},clip]{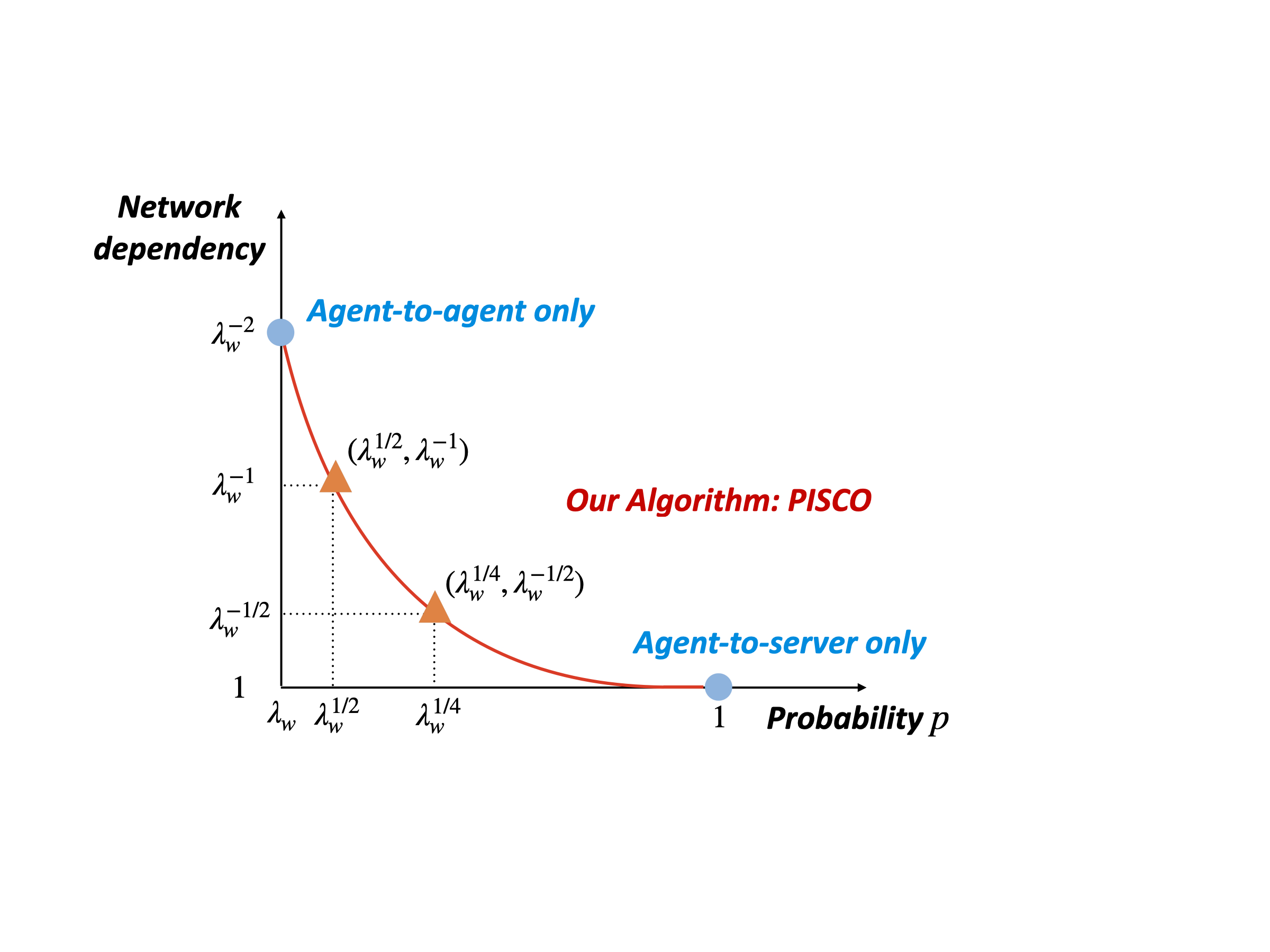}
        \caption{The network dependency of  \alg~regarding agent-to-server communication probability $p$.}
        \label{fig:network_dependency}
    \end{figure}
     
\begin{remark}[Decentralized case]
    When $p=0$,  Algorithm \ref{alg:GT-PCM} becomes fully decentralized, \ie, agents only perform local communication. Then, the communication complexity to achieve $\epsilon$-accuracy becomes
    \begin{equation*}
        O\left(\frac{L\sigma^2}{nT_ob\epsilon^4} + \frac{L\sigma}{\lambda_w^2\sqrt{b}\epsilon^3}+ \frac{1}{n\epsilon^2} \right),
    \end{equation*}
    which is better than the rate of Periodical-GT in \cite{liu2023decentralized} and LSGT \cite{ge2023gradient}, since the network dependency is $O(1/\lambda_w^2)$ and only appears in the second term (see Table \ref{tab:communication_rounds}).
    The second term is slightly worse than $K$-GT, the variance-reduced Periodical-GT \cite{liu2023decentralized},  since it corrects the descent direction with the average of $T_o$ local updates instead of the last local update at communication. However, they require that the initial local correction variables are settled in a centralized way. Combining our analysis with such a variance-reduction method while avoiding the centralized initialization would be a promising future direction of this paper. 
\end{remark}

\begin{remark}[Federated case]
When $p=1$, every agent can communicate with the server directly and thus \alg~performs in the federated fashion at any iteration $k\ge 1$. Then, the communication complexity becomes
\begin{equation*}
    O\left(\frac{L\sigma^2}{nT_ob\epsilon^4} + \frac{L\sigma}{\sqrt{b}\epsilon^3}+ \frac{1}{n\epsilon^2} \right),
\end{equation*}
where the leading term is the same as that of SCAFFOLD in \cite{karimireddy2020scaffold} with the linear speedup in terms of the network size $n$ and the number of local updates $T_o$.
\end{remark}

Moreover, the highlight of our work is to fill the void of semi-decentralized ML with the probabilistic communication model and gain the best communication efficiency from both agent-to-agent communication and agent-to-server communication, as shown in Table \ref{tab:communication_rounds}.
In addition, \alg~is able to improve the network dependency of the communication overheads from $O(\lambda_w^{-2})$ \cite{liu2023decentralized} to $O(\lambda_p^{-2})$, where the trade-off between the communication probability and the network dependency is illustrated in Figure \ref{fig:network_dependency}. The flexible heterogeneous communication brings the superior communication efficiency of \alg~in both well-connected and poorly-connected networks.

\begin{remark}[For well-connected networks]
As gossip communication is efficient to mix information for well-connected networks, \alg~is able to achieve a comparable convergence rate with much fewer agent-to-server communication rounds compared with using only agent-to-server communication. Therefore, our \alg~can significantly reduce the communication costs for well-connected networks whenever local agent-to-agent communications are inexpensive.
\end{remark}

\begin{remark}[For poorly-connected  networks] When $\lambda_w\to 0$, performing agent-to-agent only communication often results in a large number of communication rounds and prohibitive communication costs. As shown in Figure \ref{fig:network_dependency}, with any probability $p \ge \lambda_w$, the network dependency can be reduced to $O(p^{-2})$. More specifically, even a small agent-to-server probability $p = \Theta(\sqrt{\lambda_w})$ can significantly improve the network dependency from $O(\lambda_w^{-2})$ to $O(\lambda_w^{-1})$. Take the large-scale path graph as an example, where the mixing rate $\lambda_w$ scales on the order of $O(1/n^2)$~\cite{nedich2018networktopology}. Our \alg~with $p= \Theta(1/n)$ can improve the network dependency from $O(n^4)$ to $O(n^2)$. Moreover, if $p=\Theta(1)$, the communication complexity can be network-independent like Gossip-PGA \cite{chen2021accelerating}, but our theoretical analysis does not require the additional assumption of the bounded dissimilarity between local objectives.
\end{remark}

In many real-world scenarios, it is also popular to choose large mini-batch size $b$ to guarantee the exact convergence to the stationary point. As the terms related to the variance $\sigma^2$ on the right hand side of \eqref{eq:thm1_rate} scale on the order of $O(\sigma^2/b)$, if we choose a large enough mini-batch size $b \ge \Theta(\sigma^2/\epsilon^2)$, the following desirable result holds.
    \begin{corollary}[Communication complexity with large batch]\label{corollary:large-batch}
            Suppose all the conditions in Theorem \ref{thm:stochastic} holds.
            If the batch size $b$ is sufficiently large, \ie, $b\ge \Theta(\frac{\sigma^2}{\epsilon^2})$, it holds  $\frac{1}{K}\sum_{k=0}^{K-1}\E[\|\nabla f(\bar{\bx}^k)\|_2^2]\le\epsilon^2$ after
            \begin{equation*}
                O\left(\frac{L}{(1+p)\lambda_p^2\epsilon^2}+\frac{1}{\epsilon^2}\right)
            \end{equation*}
            communication rounds. In addition, if the mini-batch size $b = m$, \ie, we take the full-batch gradient, the comunication complexity will be improved to
            \begin{equation*}
                O\left(\frac{L}{(1+p)\lambda_p^2\epsilon^2}+\frac{1}{n\epsilon^2}\right).
            \end{equation*}
    \end{corollary}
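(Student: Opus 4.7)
The plan is to instantiate Theorem~\ref{thm:stochastic} with step sizes that do not shrink with $K$ --- namely, the largest values permitted by its hypotheses --- and then read off the communication complexity by requiring the bound in \eqref{eq:thm1_rate} to be at most $\epsilon^2$. Unlike Corollary~\ref{corollary:communication_complexity_minibatch}, no $K$-dependent tuning of $\eta_l$ is needed here because the large batch already suppresses the stochastic variance.

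Concretely, I would take $\eta_c = \sqrt{1+p}\,\lambda_p$ and $\eta_l = \Theta\bigl(\sqrt{1+p}\,\lambda_p/(L\,T_o)\bigr)$ (i.e.\ $\alpha=1$ and $\eta_l$ saturating the upper bound of Theorem~\ref{thm:stochastic}), producing $\eta = \Theta\bigl((1+p)\lambda_p^2/(L T_o)\bigr)$, and substitute into the RHS of \eqref{eq:thm1_rate}. A direct calculation shows that each of the three variance coefficients $L^2T_o^2\eta_l^2$, $L\eta/n$, and $(1-p)L^2T_o^2\eta^2/((1+p)^2\lambda_p^4)$ is bounded by an absolute constant after cancellation, so their combined contribution collapses to $O(\sigma^2/b)$. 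Meanwhile the first term simplifies to $\tilde f/(\eta T_o K) = O(L/((1+p)\lambda_p^2 K))$, and the initialization term $\E[\|\bPhi_y^0\|_{\normF}^2]/(nK)$ is unchanged. The aggregate bound is thus
\begin{equation*}
\frac{1}{K}\sum_{k=0}^{K-1}\E\bigl[\|\nabla f(\barx^k)\|_2^2\bigr]
\;\le\; O\!\left(\frac{L}{(1+p)\lambda_p^2\,K} + \frac{\sigma^2}{b} + \frac{\E[\|\bPhi_y^0\|_{\normF}^2]}{n\,K}\right).
\end{equation*}

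For the large-batch case, choosing $b = \Theta(\sigma^2/\epsilon^2)$ forces the middle term to $O(\epsilon^2)$, after which the conditions $L/((1+p)\lambda_p^2 K) \le \epsilon^2$ and $\E[\|\bPhi_y^0\|_{\normF}^2]/(nK) \le \epsilon^2$ jointly yield $K = O\bigl(L/((1+p)\lambda_p^2\epsilon^2) + 1/\epsilon^2\bigr)$; here the $n$ factor is lost from the last summand because $\E[\|\bPhi_y^0\|_{\normF}^2]$ is inflated by an $O(n\sigma^2/b)$ contribution coming from the noise in the initial mini-batch gradient estimates. For the full-batch case ($b=m$), Assumption~\ref{assump:bounded_variance} holds with $\sigma = 0$, so all three variance contributions vanish identically and $\bPhi_y^0$ becomes deterministic, reflecting only the initial local/global gradient dissimilarity $\sum_i \|\nabla f_i(\bx^0) - \nabla f(\bx^0)\|_2^2$. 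Treating this quantity as an $O(1)$ constant (no longer inflated by batch noise), the initialization term yields only $O(1/(n\epsilon^2))$, producing the claimed improved complexity $O\bigl(L/((1+p)\lambda_p^2\epsilon^2) + 1/(n\epsilon^2)\bigr)$.

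The only genuinely technical step is the second one --- verifying that each of the three variance coefficients in \eqref{eq:thm1_rate} collapses to $O(1)$ under the chosen maximal step sizes, so that their combined contribution equals $O(\sigma^2/b)$ with no hidden dependence on $p$, $\lambda_p$, or $T_o$. This is essentially bookkeeping (plug $\eta_l,\eta_c,\eta$ into each coefficient and cancel $L$, $T_o$, and $(1+p)\lambda_p^2$ factors), and the rest of the proof is just substituting into Theorem~\ref{thm:stochastic} and comparing each term with $\epsilon^2$.
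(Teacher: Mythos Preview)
The paper does not supply an explicit proof of Corollary~\ref{corollary:large-batch}; it is stated as an immediate consequence of Theorem~\ref{thm:stochastic}. Your overall strategy---saturate the step-size bounds of Theorem~\ref{thm:stochastic}, verify that each variance coefficient in \eqref{eq:thm1_rate} collapses to an absolute constant, and then read off $K$ from the residual $1/K$ terms---is exactly the natural route, and your check that $L^2T_o^2\eta_l^2$, $L\eta/n$, and $(1-p)L^2T_o^2\eta^2/((1+p)^2\lambda_p^4)$ are each $O(1)$ under $\eta_c=\sqrt{1+p}\,\lambda_p$ and $\eta_l=\Theta(\sqrt{1+p}\,\lambda_p/(LT_o))$ is correct.

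The gap is in your accounting for the distinction between the large-batch and full-batch complexities. You attribute the loss of the factor $n$ in the large-batch case to the noise contribution $O(n\sigma^2/b)$ inside $\E[\|\bPhi_y^0\|_\normF^2]$, but with $b=\Theta(\sigma^2/\epsilon^2)$ this contribution equals $O(n\epsilon^2)$, so after dividing by $nK$ it becomes $O(\epsilon^2/K)$---a term that is at most $\epsilon^2$ for every $K\ge 1$ and therefore imposes no constraint on the communication complexity. Conversely, in the full-batch case you treat the residual heterogeneity $\sum_i\|\nabla f_i(\bx^0)-\nabla f(\bx^0)\|_2^2$ as an $O(1)$ quantity, but this is a sum over $n$ agents and is $\Theta(n)$ in general; under that scaling the term $\|\bPhi_y^0\|_\normF^2/(nK)$ again forces $K=O(1/\epsilon^2)$, not $O(1/(n\epsilon^2))$. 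In short, the batch noise in $\bPhi_y^0$ cannot explain the $1/\epsilon^2$ term, and the deterministic part of $\bPhi_y^0$ does not automatically shrink to $O(1)$ when $\sigma=0$. The corollary as stated is effectively treating $\E[\|\bPhi_y^0\|_\normF^2]$ (or its per-agent average) as a hidden problem-dependent constant; whichever convention you adopt, it must be applied consistently to both clauses, and your current argument does not do so.
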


Note that Corollary \ref{corollary:large-batch} also matches the result in decentralized setting \cite{nguyen2022performance} and federated setting \cite{karimireddy2020scaffold}, by setting $p=0$ and $p=1$ respectively.

\section{Numerical Experiments}

In this section, we present the numerical performance of  \alg~on real-world datasets, to substantiate its superior performance in terms of communication efficiency and robustness to various topologies and data heterogeneity.

\subsection{Logistic regression with nonconvex regularization}\label{subsection:logistic}
To investigate communication efficiency of \alg, we conduct experiments on logistic regression with a nonconvex regularization term \cite{wang2019spiderboost} using the \texttt{a9a} dataset \cite{chang2011libsvm}. Given the model parameter $\bx$ and data sample $\bz = (\boldsymbol{a},y)$, the empirical loss $\ell(\bx;\bz)$ is defined as:
\begin{equation*}
    \ell(\bx;\bz) = \log\left(1+\exp(-y\boldsymbol{a}^{\top}\bx)\right) + \rho\sum_{l=1}^d \frac{\bx(l)^2}{1+\bx(l)^2},
\end{equation*} 
where $\boldsymbol{a}\in\mathbb{R}^{d}$ is the feature vector, $y\in\{-1,1\}$ is the corresponding label, the regularizer coefficient $\rho$ is set as $0.01$, and $\bx(l)$ denotes the $l$-th coordinate of $\bx$.

In this subsection, we consider a ring topology with $n=10$ agents and evenly partition the sorted \texttt{a9a} dataset to $10$ agents to augment the data heterogeneity. Roughly speaking, every agent will receive  $m=3256$ training samples of dimension $d=124$, where $5$ agents will receive data with label $1$ and the others will receive data with label $0$. Regarding the mixing matrix, we follow the symmetric FDLA matrix \cite{xiao2004fast} to aggregate information among neighbors. In addition, we set the batch size $b=256$ for the following experiments. To reduce the impact of randomness, we run every experiment with 5 different seeds and show the average results.

\paragraph{The impact of different agent-to-server probabilities.}
 First, we study the influence of the agent-to-server communication probability $p$ on the training and test performance. 
 To this end, we vary the probability $p$ from $\{1,1/10^{0.5},1/10^{0.75},1/10,1/10^{1.25},1/10^{1.5},1/10^{1.75},1/10^{2}, 0\}$ and present the number of communication rounds of \alg~with different $p$ to achieve $0.05$ training accuracy (\ie, $\frac{1}{K}\sum_{k=0}^{K-1}\|\nabla f(\barx^k)\|_2^2\le 0.05$) and $80\%$ test accuracy ($\ge 95\%$ of the peak accuracy within 1000 communication rounds), in Figure \ref{fig:iter_p}.

 \begin{figure}[!t]
    \centering 
    \begin{subfigure}[b]{0.475\textwidth}
    \centering
    \includegraphics[width = \textwidth]{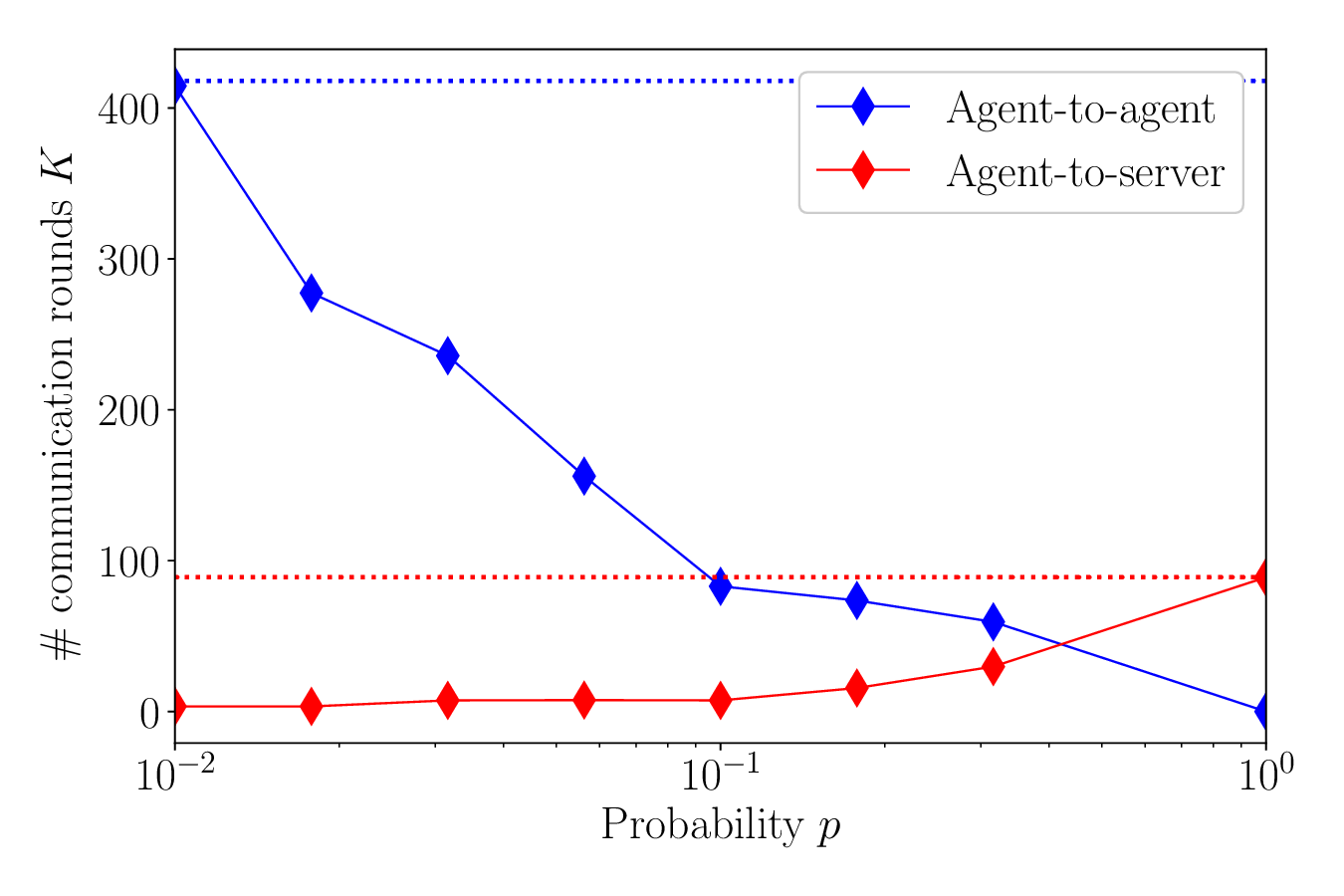}
    \caption{Training accuracy $\frac{1}{K+1}\sum_{k=0}^K\|\nabla f(\barx^k)\|_2^2\le 0.05$.}
    \label{fig:iter_p_trian}
    \end{subfigure}
    \qquad
    \begin{subfigure}[b]{0.475\textwidth}
    \centering
    \includegraphics[width = \textwidth]{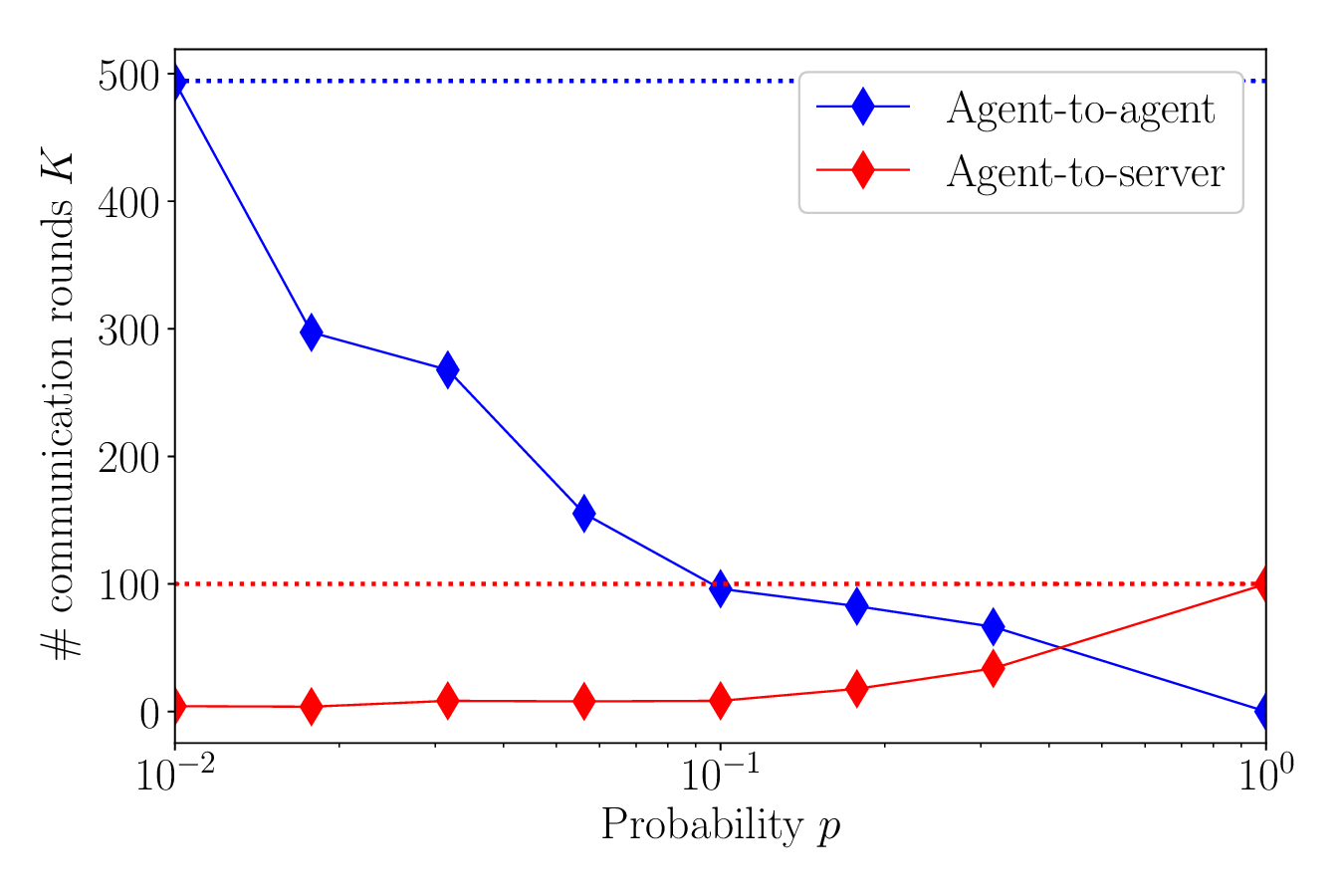}
    \caption{Test accuracy $\le 80\%$.}
    \label{fig:iter_p_test}
    \end{subfigure}
    \caption{The number of agent-to-agent and agent-to-server communication rounds required to achieve $0.05$ training accuracy (the left panel) and  $80\%$ test accuracy (the right panel) for \alg~with $T_o=1$ and different $p \in\{0,10^{-2},10^{-1.75},10^{-1.5}, 10^{-1.25},10^{-1},10^{-0.75},10^{-0.5},1\}$. Here, the blue (red) dotted line represents the number of agent-to-agent (agent-to-server) communication rounds that \alg~with $p=0$ (with $p=1$) requires.} 
    \label{fig:iter_p}
\end{figure}

From Figure \ref{fig:iter_p}, we observe that just a small agent-to-server probability (\eg, $p\le 0.1$) can considerably reduce the number of communication rounds required to attain a specific accuracy during both training and testing phases. For instance, \alg~with $p=10^{-1.25}\approx 0.06$ can reduce agent-to-agent communication rounds by $60\%$, with several agent-to-server communication rounds. Moreover, even if the server is more accessible (\eg, $p\ge 0.1$), increasing the agent-to-server communication probability $p$ might not further save the total communication rounds. This indicates that not all costly communications between agents and the server are crucial for accelerating the convergence compared with decentralized methods. Therefore, by leveraging heterogeneous communication, we can reduce the average per-round communication expense while preserving a comparable rate of convergence.

\paragraph{The speedup of multiple local updates.}
 To verify the speedup of multiple local updates, we plot the training accuracy and test accuracy of \alg~with different numbers of local updates $T_o=1$ (\cf~Figure \ref{fig:K_1}) and $T_o=10$ (\cf~Figure \ref{fig:K_10}). In both cases, we vary the probability $p\in\{1,10^{-0.5},10^{-1},0\}$. It is worth noting that with only $p = 0.1$ or $p=10^{-0.5}$, \alg~already achieves almost the same performance as \alg~with $p=1$. Comparing Figure \ref{fig:K_10} with Figure \ref{fig:K_1}, we can clearly observe the speedup brought by multiple local updates for different probabilities. For example, for \alg~with $p=0.1$, the number of communication rounds required to attain $0.05$ training accuracy or $80\%$ testing accuracy decreases roughly by $50\%$ if we increase $T_o$ from $1$ to $10$.  

\begin{figure}[!t]
    \centering 
    \begin{subfigure}[b]{\textwidth}
    \centering
    \includegraphics[width = 0.9\textwidth]{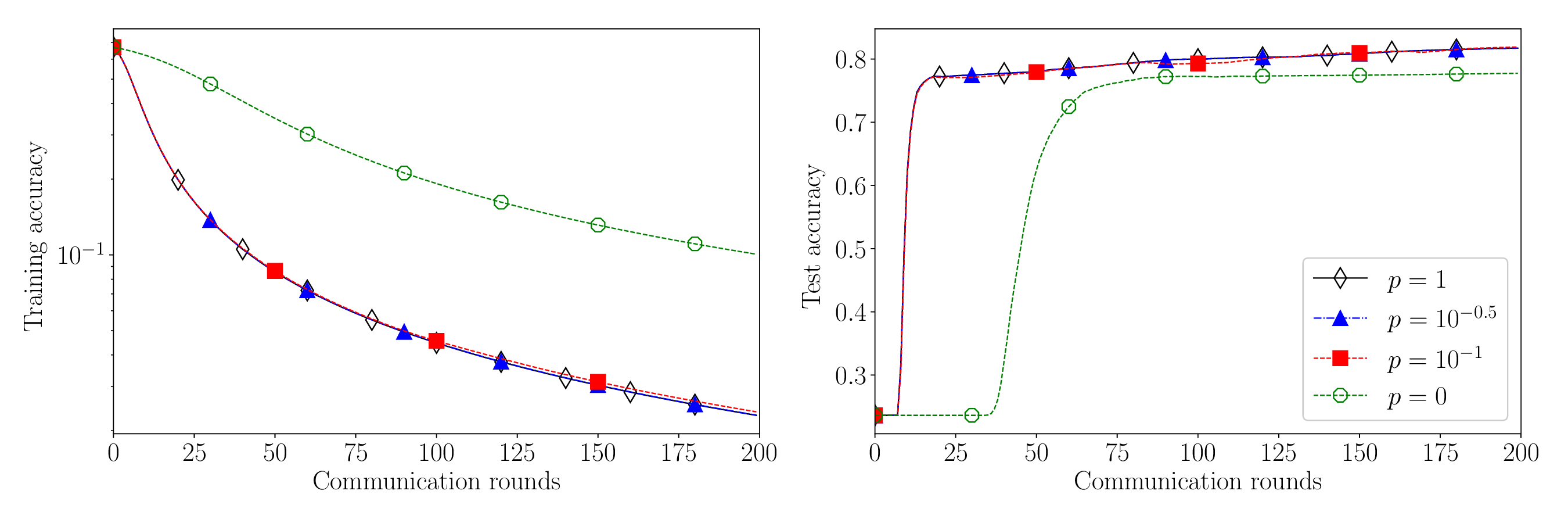}
    \caption{The number of local updates $T_o=1$.}
    \label{fig:K_1}
    \end{subfigure}
    \\
    \begin{subfigure}[b]{\textwidth}
    \centering
    \includegraphics[width = 0.9\textwidth]{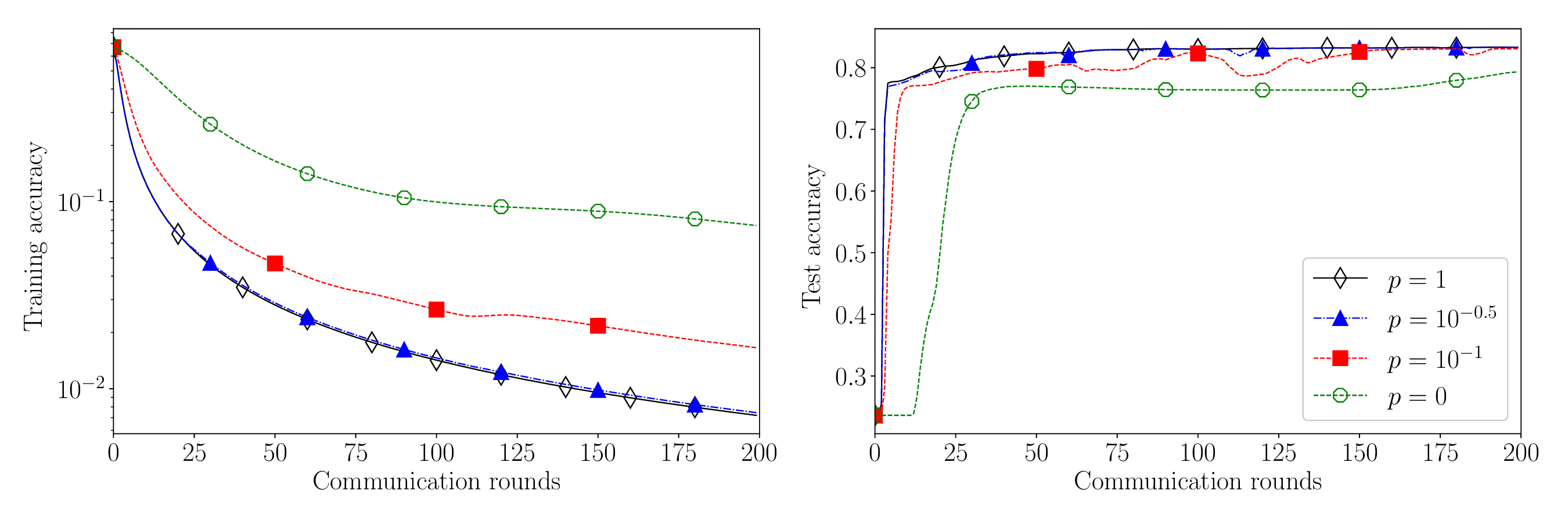}
    \caption{The number of local updates $T_o=10$.}
    \label{fig:K_10}
    \end{subfigure}
    \caption{The training accuracy (left two panels) and testing accuracy (right two panels) against communication rounds with different probabilities $p=1,10^{-0.5},10^{-1},0$ and different number of local updates $T_o=1,10$, over a ring topology for logistic regression with a nonconvex regularizer on the sorted \texttt{a9a} dataset.} 
    \label{fig:different_p}
\end{figure}

\subsection{Neural network training}
\begin{figure}[h]
    \centering 
    \begin{subfigure}[b]{\textwidth}
        \centering
        \includegraphics[width = 0.9\textwidth]{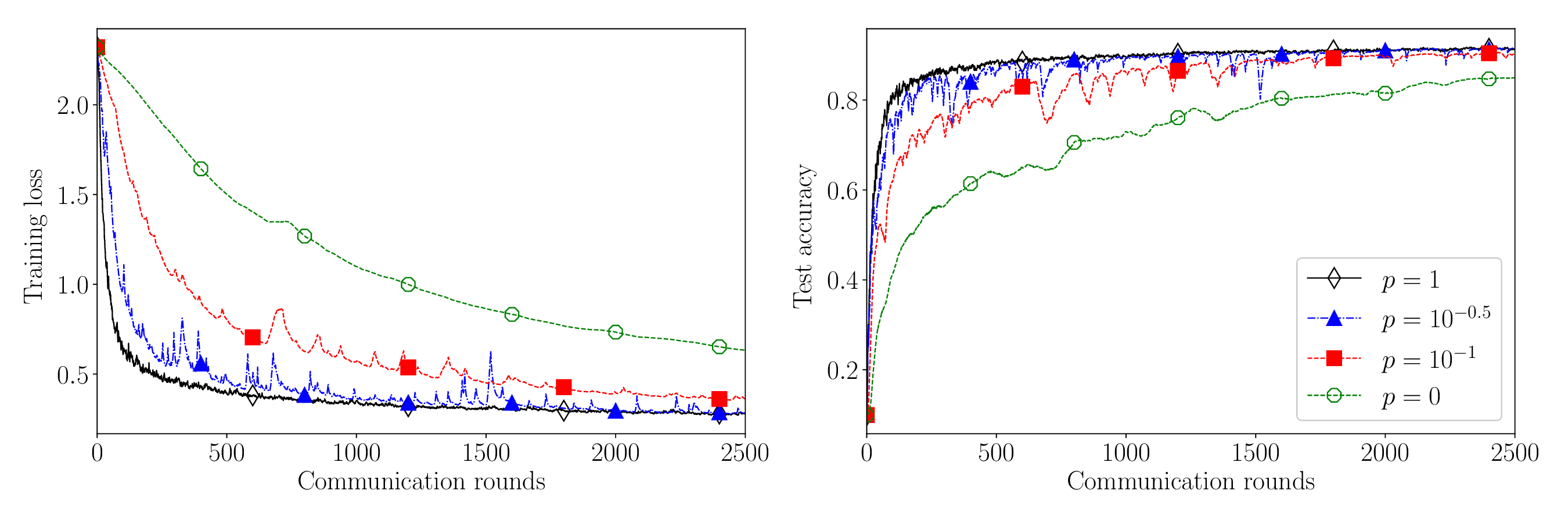}
        \caption{A connected Erd\H{o}s-R\'enyi graph with $0.3$ connectivity probability.}
        \label{fig:er3}
    \end{subfigure}
    \\
    \begin{subfigure}[b]{\textwidth}
    \centering
    \includegraphics[width = 0.9\textwidth]{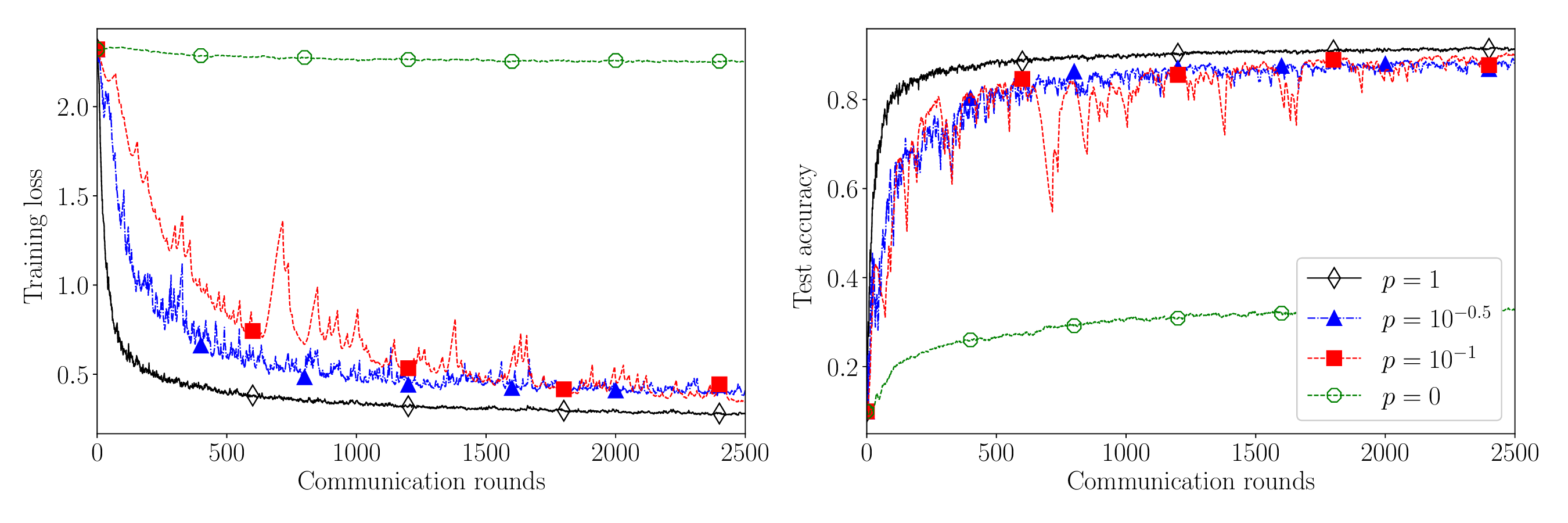}
    \caption{A disconnected Erd\H{o}s-R\'enyi graph with $0.1$ connectivity probability.}
    \label{fig:dis}
    \end{subfigure}
    \caption{The training loss (the left two panels) and testing accuracy (the right two panels) against communication rounds with different probabilities $p=1,10^{-0.5},10^{-1},0$ and the number of local updates $T_o=10$ over both well-connected and  disconnected Erd\H{o}s-R\'enyi graphs for 1-hidden-layer network training on the sorted \texttt{MNIST} dataset.} 
    \label{fig:different_topology}
\end{figure}

Further, we run the single hidden-layer neural network training with 32 hidden neurons on the  \texttt{MNIST} dataset \cite{li2012minist}.
More specifically, we use the sigmoid and softmax function as the activation function, where the empirical loss w.r.t. the training parameter $\bx = \text{vec}\left(\bW_1,\boldsymbol{c}_1,\bW_2,\boldsymbol{c}_2\right)$ and the sample $\bz = (\boldsymbol{a},y)$ is defined using the cross entropy loss as:
\begin{equation*}
\mathtt{CrossEntropy}\!\left(\mathtt{softmax}\!\left(\bW_2~\mathtt{sigmoid}(\bW_1\boldsymbol{a}\!+\!\boldsymbol{c}_1)\!+\!\boldsymbol{c}_2\right)\!,y\right)\!,
\end{equation*}
where the training weights $\bW_1\in\mathbb{R}^{32\times 784}$, $\bW_2\in\mathbb{R}^{10\times 32}$, $\boldsymbol{c}_1\in\mathbb{R}^{32}$, and $\boldsymbol{c}_2\in\mathbb{R}^{10}$.

To verify the robustness of \alg~to diverse topologies, we consider a well-connected Erd\H{o}s-R\'enyi topology with a connectivity probability of  $0.3$ (corresponding to $\lambda_w=0.38$) and a disconnected Erd\H{o}s-R\'enyi topology with a connectivity probability of $0.1$  (corresponding to $\lambda_w=0$). To simulate the highly data-heterogeneous scenario, we evenly split the sorted \texttt{MNIST} dataset to $n=10$ agents, where agent $i\in[n]$ will receive the training data associated with label $i$. Moreover, we set the batch size $b=100$, the number of local updates $T_o=10$ and the agent-to-server communication probability $p \in\{1,1/\sqrt{n},1/n,0\} = \{1,10^{-0.5},10^{-1},0\}$. To reduce the impact of randomness, we run every experiment with 3 different seeds and show the average results in Figure \ref{fig:different_topology}.

In Figure \ref{fig:different_topology}, our \alg~shows impressive robustness to high data heterogeneity and different topologies, including the well-connected network (\cf~Fig \ref{fig:er3}) and the disconnected network (\cf~Figure \ref{fig:dis}). By comparing Figure \ref{fig:er3} with Figure \ref{fig:dis}, we observe that better connectivity makes gossip communication sufficiently efficient to mix information. As a result, heterogeneous communication with a smaller $p$ can attain a comparable performance to that of \alg~with $p=1$ in Figure \ref{fig:er3}. Notice that the performance of \alg~with no agent-to-server communication degenerates remarkably when the network is disconnected. In contrast, semi-decentralized \alg~(\ie, $0<p<1$) maintains performance levels similar to \alg~with $p=1$. It illustrates that a few number of agent-to-server communication rounds can largely mitigate the impact of the network connectivity, even for disconnected graphs.

We also evaluate the performance of \alg~by training a convolutional neural network (CNN) on the unshuffled \texttt{CIFAR10} dataset \cite{krizhevsky2009learning}. The network architecture includes three sequential CNN modules, each containing two 2D convolutional layers with ReLU activation, followed by max pooling (kernel size 2, stride 2) and dropout (rate 0.2) for regularization. Specifically, in the first module, the initial convolutional layer transforms the input from 3 to 32 channels, and the second convolutional layer maintains 32 channels; the second module follows this pattern, mapping 32 to 64 channels; and the third similarly increases from 64 to 128 channels. After feature extraction, the fully connected layers process the 2048 flattened output, first mapping it to 128 features with ReLU activation and dropout, and then to 10 outputs for classification. We use a ring topology with $n=5$ agent and set the batch size $b=20$ and the number of local updates $T_o=4$. To introduce data heterogeneity,  we split the sorted \texttt{CIFAR10} dataset across the 5 agents, so that each agent $i\in[n]$ obtains training data with label $i$ and $i+5$.

In Figure \ref{fig:cnn}, we illustrate the effectiveness of \alg, in terms of training loss and test accuracy across epochs. We observe that, due to sparse agent-to-agent communication in the ring topology and extremely high data heterogeneity, \alg~with $p=0$ converges more slowly than \alg~with $p>0$. Notably, \alg~with $p=1/\sqrt{5}$ achieves performance comparable to \alg~with $p=1$, demonstrating the efficiency of the heterogeneous communication protocol in reducing costly agent-to-server communications.

\begin{figure*}[!t]
    \centering 
    {\includegraphics[width = 0.9\linewidth]{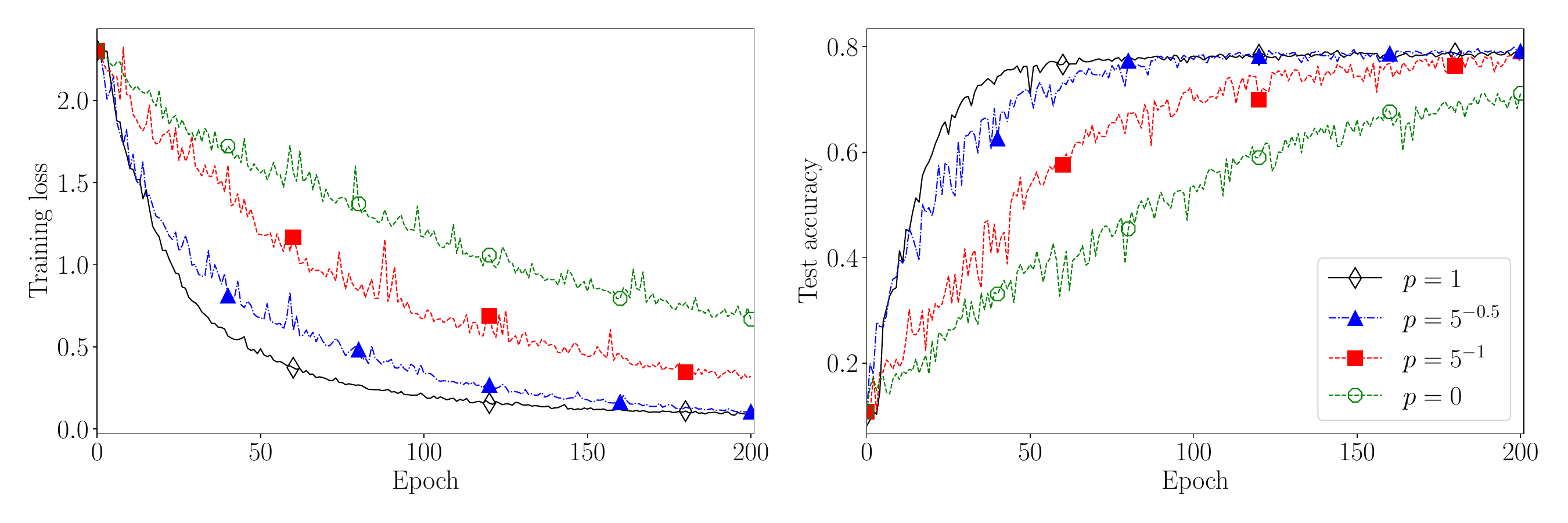}
    }
    \caption{The training loss and testing accuracy across epochs  with different probabilities $p=1,1/\sqrt{5},0.2,0$ and the number of local updates $T_o=4$, over a ring topology for CNN training on the sorted \texttt{CIFAR10} dataset.} 
    \label{fig:cnn}
\end{figure*}

\section{Conclusions}
In this paper, we develop a communication-efficient algorithm called \alg~for solving federated nonconvex optimization over semi-decentralized networks, which enjoys the linear speedup of local updates and addresses data dissimilarity without any additional assumptions. By leveraging the heterogeneous communication model, \alg~largely reduces communication overheads in terms of the network dependency with a few agent-to-server communication rounds, particularly evident in poorly-connected networks. Both theoretical guarantees and empirical experiments underscore \alg's outstanding communication efficiency and robustness to data heterogeneity and various network topologies. 

In the future, it will be of interest to incorporate variance reduction techniques \cite{li2020communication,bollapragada2018adaptive} into the algorithm design, apply communication compression \cite{zhao2022beer}  to further reduce the per-round communication costs, and enable varying agent-to-server communication probabilities \cite{saha2024privacy}, allowing for personalized and heterogeneous communication strategies for each agent.

 \section*{Acknowledgement}
 This work is supported in part by the grants ONR N00014-19-1-2404, NSF CIF-2007911, CNS-2148212, ECCS-2318441, AFRL FA8750-20-2-0504, and is supported in part by funds from federal agency and industry partners as specified in the Resilient \& Intelligent NextG Systems (RINGS) program. He Wang is also gratefully supported by the Bob Lee Gregory Fellowship at Carnegie Mellon University. 
\bibliographystyle{alphaabbr} 
\bibliography{references,distributed}

\appendix
\section{Technical Lemmas}
This section establishes several critical lemmas which will
be used in the proof of Theorem 1.
 For the ease of the analysis, we begin with rewriting the updates of \alg~ in a more compact form. 
First, by unrolling the local updates \eqref{eq:local_update} in \alg, we have
\begin{align}
    \bX^{k+1,T_o} &= \bX^{k}-\eta_l\bY^k-\eta_l\sum_{t=1}^{T_o-1} \bY^{k+1,t},\label{eq:unroll_local_x}\\
    \bY^{k+1,T_o} &= \bY^k +  \sum_{t=1}^{T_o}\left( \bG^{k+1,t}-\bG^{k+1,t-1}\right)\nonumber.
\end{align}
Then, summing \eqref{eq:local_update_y} from $t=1$ to $T_o-1$ and telescoping results in
\begin{align}
    \sum_{t=1}^{T_o-1} \bY^{k+1,t} &= \sum_{t=1}^{T_o-1}\bY^{k+1,t-1} + \sum_{t=1}^{T_o-1}\left( \bG^{k+1,t} - \bG^{k+1,t-1}\right)\nonumber\\
    & = \bY^{k} + \sum_{t=1}^{T_o-2} \bY^{k+1,t-1} + \sum_{t=1}^{T_o-2}\left( \bG^{k+1,t}- \bG^{k+1,t-1}\right) + \sum_{t=1}^{T_o-1}\left( \bG^{k+1,t}- \bG^{k+1,t-1}\right)\nonumber\\
    &\shortvdotswithin{=}
    & =(T_o-1)\bY^k + \sum_{l=1}^{T_o-1}\sum_{t=1}^l \left(\bG^{k+1,t}- \bG^{k+1,t-1}\right).\label{eq:sum_local_y}
\end{align}
Substituting the above equation \eqref{eq:sum_local_y} into \eqref{eq:unroll_local_x} and combining with \eqref{eq:communication_update}, the updates of \alg~can be rewritten as the following compact form:
\begin{subequations}\label{eq:compact-update}
\begin{align}
    \bX^{k+1} &= \left(\bX^{k}-\eta (T_o+1)\bY^k-\eta \sum_{l=1}^{T_o}\sum_{t=1}^l \left(\bG^{k+1,t}- \bG^{k+1,t-1}\right)\right)\bW^k\nonumber\\
    & = \left(\bX^k-\eta(T_o+1)\bY^k - \eta\sum_{t=1}^{T_o} \bG^{k+1,t} + \eta T_o\bG^{k}\right)\bW^k,\label{eq:compact-update-x}\\
    \bY^{k+1} & =  \left(\bY^k+ \bG^{k+1}-\bG^{k+1,T_o} +  \sum_{t=1}^{T_o}\left( \bG^{k+1,t}-\bG^{k+1,t-1}\right) \right)\bW^k\nonumber\\
    & = \left(\bY^k+ \bG^{k+1} - \bG^k\right)\bW^k,\label{eq:compact-update-y}
\end{align}
\end{subequations}
where $\eta = \eta_c\eta_l$. 

\subsection{Useful inequalities}\label{section:useful_inequalities}
Before proceeding, we introduce the following propositions which will be useful in the analysis.
\begin{proposition}\label{propo:sum_of_varaibles_decompose}
Let $\{\bv_i\}_{i=1}^M$ be a set of $M$ vectors in $\mathbb{R}^d$. Then, for any $0<\delta \le 1$,
\begin{align}
    \|\sum_{i=1}^M \bv_i\|_2^2 &\le M\sum_{i=1}^M \|\bv_i\|_2^2,\label{eq:proposition_sum}\\
    \|\bv_i+\bv_j\|_2^2 &\le (1+\delta)\|\bv_i\|_2^2 +  (1+\frac{1}{\delta})\|\bv_j\|_2^2\label{eq:proposition_jessen}.
\end{align}
\end{proposition}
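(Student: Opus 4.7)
The plan is to establish the two inequalities via standard convexity and Young-type arguments, treating them independently.

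For the first inequality \eqref{eq:proposition_sum}, I would appeal to the convexity of $\|\cdot\|_2^2$. The key observation is that $\frac{1}{M}\sum_{i=1}^M \bv_i$ is a uniform convex combination of the $\bv_i$'s, so Jensen's inequality gives
\begin{equation*}
    \Big\|\frac{1}{M}\sum_{i=1}^M \bv_i\Big\|_2^2 \le \frac{1}{M}\sum_{i=1}^M \|\bv_i\|_2^2.
\end{equation*}
Multiplying both sides by $M^2$ yields the claim. Alternatively, one could write the identity $\|\sum_i \bv_i\|_2^2 = \sum_i\sum_j \langle \bv_i,\bv_j\rangle$ and bound each cross term by Cauchy--Schwarz as $\langle \bv_i,\bv_j\rangle \le \tfrac{1}{2}(\|\bv_i\|_2^2+\|\bv_j\|_2^2)$, summing over the $M^2$ pairs. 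Either route is a one-line derivation, so this part is essentially routine.

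For the second inequality \eqref{eq:proposition_jessen}, I would expand the squared norm and apply Young's inequality to the cross term. Specifically,
\begin{equation*}
    \|\bv_i+\bv_j\|_2^2 = \|\bv_i\|_2^2 + 2\langle \bv_i,\bv_j\rangle + \|\bv_j\|_2^2,
\end{equation*}
and Young's inequality (equivalently, the AM--GM bound applied to the scaled vectors $\sqrt{\delta}\,\bv_i$ and $\bv_j/\sqrt{\delta}$) gives
\begin{equation*}
    2\langle \bv_i,\bv_j\rangle \le \delta\|\bv_i\|_2^2 + \tfrac{1}{\delta}\|\bv_j\|_2^2.
\end{equation*}
Substituting back produces exactly $(1+\delta)\|\bv_i\|_2^2 + (1+\tfrac{1}{\delta})\|\bv_j\|_2^2$.

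There is essentially no obstacle here, since both inequalities are folklore convexity facts. The only minor subtlety worth flagging is that the restriction $0<\delta\le 1$ in the statement is not needed for the algebraic derivation itself (Young's inequality is valid for every $\delta>0$); the upper bound $\delta\le 1$ is presumably imposed because the proposition is applied downstream only in that regime. I would state this observation briefly at the end so that later proofs may freely invoke \eqref{eq:proposition_jessen} with any positive $\delta$ if needed, without reproving it.
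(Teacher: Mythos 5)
Your proof is correct. The paper states this proposition without proof, treating both bounds as standard facts, and your derivation---Jensen's inequality (or Cauchy--Schwarz on the cross terms) for \eqref{eq:proposition_sum} and Young's inequality $2\langle \bv_i,\bv_j\rangle \le \delta\|\bv_i\|_2^2 + \tfrac{1}{\delta}\|\bv_j\|_2^2$ for \eqref{eq:proposition_jessen}---is exactly the canonical argument one would supply. Your side remark that the restriction $0<\delta\le 1$ is not needed for the inequality itself is also accurate; the paper only ever invokes \eqref{eq:proposition_jessen} with $\delta=\lambda_p/8\in(0,1]$, which is why the hypothesis is stated that way.
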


\begin{proposition}\label{propo:matrix_multiply_decompose} For $\boldsymbol{A},\boldsymbol{B}\in\mathbb{R}^{d\times n}$, $\|\boldsymbol{A}\boldsymbol{B}\|_{\F}\le \|\boldsymbol{A}\|_{\F}\|\boldsymbol{B}\|_2$.
\end{proposition}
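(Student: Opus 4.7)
The statement is the standard submultiplicative inequality relating the Frobenius norm and the spectral (operator) norm, so the plan is to reduce it to the definition of the spectral norm via a row-wise decomposition of the product $\boldsymbol{AB}$. Write $\boldsymbol{A}$ in terms of its rows as $\boldsymbol{A} = [\boldsymbol{a}_1, \boldsymbol{a}_2, \ldots, \boldsymbol{a}_d]^\top$ where each $\boldsymbol{a}_i \in \mathbb{R}^n$, so that the $i$-th row of $\boldsymbol{AB}$ is $\boldsymbol{a}_i^\top \boldsymbol{B}$, or equivalently its transpose is $\boldsymbol{B}^\top \boldsymbol{a}_i$. This reduction is the key move because it lets me attack each row independently and apply the defining property of the operator norm to the matrix $\boldsymbol{B}^\top$.

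From here, the computation is direct. I would write
\begin{equation*}
  \|\boldsymbol{AB}\|_{\F}^2 \;=\; \sum_{i=1}^{d} \|\boldsymbol{B}^\top \boldsymbol{a}_i\|_2^2 \;\le\; \sum_{i=1}^{d} \|\boldsymbol{B}^\top\|_2^2 \, \|\boldsymbol{a}_i\|_2^2 \;=\; \|\boldsymbol{B}\|_2^2 \sum_{i=1}^{d} \|\boldsymbol{a}_i\|_2^2 \;=\; \|\boldsymbol{B}\|_2^2 \, \|\boldsymbol{A}\|_{\F}^2,
\end{equation*}
where the inequality is the definition of the largest singular value applied to $\boldsymbol{B}^\top$ acting on the vector $\boldsymbol{a}_i$, and the final equality uses the fact that the Frobenius norm is the square root of the sum of squared row norms together with $\|\boldsymbol{B}^\top\|_2 = \|\boldsymbol{B}\|_2$. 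Taking square roots yields the claim.

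There is essentially no obstacle in this proof; the only thing worth flagging is a small dimensional inconsistency in the statement as written (if both $\boldsymbol{A}$ and $\boldsymbol{B}$ are $d\times n$, then $\boldsymbol{AB}$ is not defined unless $d=n$). I would either silently assume that the dimensions are chosen so that the product makes sense (the inequality itself holds whenever $\boldsymbol{A}\in\mathbb{R}^{d\times n}$ and $\boldsymbol{B}\in\mathbb{R}^{n\times m}$ for any $m$, with the same row-decomposition argument going through verbatim), or note the correction in a parenthetical remark. No additional lemmas or structural insights are needed beyond the definition of the operator norm and the row-sum representation of the Frobenius norm.
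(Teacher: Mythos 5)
Your proof is correct and is the standard row-decomposition argument; the paper states this proposition without proof as a known fact, so there is nothing to diverge from. Your side remark about the dimensions is also apt: as written the product $\boldsymbol{A}\boldsymbol{B}$ with both matrices in $\mathbb{R}^{d\times n}$ is only defined when $d=n$, whereas the paper actually applies the inequality with $\boldsymbol{A}\in\mathbb{R}^{d\times n}$ and $\boldsymbol{B}\in\mathbb{R}^{n\times n}$ (e.g.\ $\boldsymbol{B}=\bW^k-\bJ$), for which your argument goes through verbatim.
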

\begin{proposition}[\cite{karimireddy2020scaffold}]\label{propo:sum_of_noise}
    Let $\{\boldsymbol{\Xi}_i\}_{i=1}^n$ be $n$ random variables in $\mathbb{R}^d$, which may not be independent of each other. Suppose that $\E[\boldsymbol{\Xi}_i|\boldsymbol{\Xi}_{i-1},\ldots,\boldsymbol{\Xi}_1]=\bxi_i$ and  $\E\left[\|\boldsymbol{\Xi}_i-\bxi_i\|^2_2\right] \le \sigma^2$. Then, we have
    $$\E\left[\|\sum_{i=1}^n \left(\boldsymbol{\Xi}_i-\bxi_i\right)\|_2^2\right] \le n\sigma^2.$$
\end{proposition}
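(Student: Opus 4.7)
The plan is to prove the stated bound as a standard martingale-difference variance calculation: expand the squared norm into diagonal terms plus cross terms, use the conditional-expectation hypothesis to kill every cross term, and then invoke the per-index variance bound on each diagonal term. I will not need anything beyond basic properties of conditional expectation (tower property, pulling out $\mathcal{F}$-measurable factors).

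First, I would introduce the natural filtration $\mathcal{F}_i \coloneqq \sigma(\boldsymbol{\Xi}_1,\ldots,\boldsymbol{\Xi}_i)$ for $i \ge 1$ and set $\mathcal{F}_0$ to be the trivial $\sigma$-algebra, and define the centered quantity $\bDelta_i \coloneqq \boldsymbol{\Xi}_i - \bxi_i$. The hypothesis $\E[\boldsymbol{\Xi}_i \mid \boldsymbol{\Xi}_{i-1},\ldots,\boldsymbol{\Xi}_1] = \bxi_i$ means, by definition of conditional expectation, that $\bxi_i$ is $\mathcal{F}_{i-1}$-measurable and that $\E[\bDelta_i \mid \mathcal{F}_{i-1}] = \boldsymbol{0}$. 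In particular, $\{\bDelta_i\}_{i=1}^n$ is a martingale-difference sequence adapted to $\{\mathcal{F}_i\}$, and each $\bDelta_i$ is $\mathcal{F}_i$-measurable, hence $\mathcal{F}_{j-1}$-measurable for all $j > i$.

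Next, I would expand the squared norm coordinate-freely:
\begin{equation*}
\Bigl\|\sum_{i=1}^n \bDelta_i\Bigr\|_2^2 = \sum_{i=1}^n \|\bDelta_i\|_2^2 \;+\; 2\sum_{1 \le i < j \le n} \langle \bDelta_i, \bDelta_j\rangle.
\end{equation*}
For each cross term with $i < j$, I condition on $\mathcal{F}_{j-1}$. Since $\bDelta_i$ is $\mathcal{F}_{j-1}$-measurable it pulls out of the inner conditional expectation, and since $\E[\bDelta_j \mid \mathcal{F}_{j-1}] = \boldsymbol{0}$ we obtain
\begin{equation*}
\E\bigl[\langle \bDelta_i, \bDelta_j\rangle \mid \mathcal{F}_{j-1}\bigr] = \bigl\langle \bDelta_i,\, \E[\bDelta_j \mid \mathcal{F}_{j-1}] \bigr\rangle = 0.
\end{equation*}
The tower property then yields $\E[\langle \bDelta_i, \bDelta_j\rangle] = 0$ for all $i < j$. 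Taking expectation of the expansion and applying the per-index variance bound $\E[\|\bDelta_i\|_2^2] \le \sigma^2$ gives $\E\bigl[\|\sum_{i=1}^n \bDelta_i\|_2^2\bigr] \le n\sigma^2$, as claimed.

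There is no real obstacle: the only point to handle carefully is measurability, namely verifying that $\bxi_i$ and thus $\bDelta_i$ are $\mathcal{F}_{i-1}$- and $\mathcal{F}_i$-measurable respectively, so that the inner product pulls out correctly under the conditioning on $\mathcal{F}_{j-1}$ with $j > i$. Once this is spelled out, the orthogonality of martingale differences gives the result in one line.
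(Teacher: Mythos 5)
Your proof is correct. The paper itself does not prove this proposition---it is imported verbatim as Lemma~4 of the cited SCAFFOLD paper---and your martingale-difference argument (expand the square, kill the cross terms $\E[\langle \boldsymbol{\Xi}_i-\bxi_i,\boldsymbol{\Xi}_j-\bxi_j\rangle]$ for $i<j$ by conditioning on $\mathcal{F}_{j-1}$ and the tower property, then bound the $n$ diagonal terms by $\sigma^2$ each) is exactly the standard proof of that lemma, with the measurability of $\bxi_i$ and $\bDelta_i$ handled correctly.
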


\subsection{Property of gradient tracking}
In this section, the following lemma shows that $\barY^k = \bY^k\bJ$ is able to track the average of local stochastic gradients, \ie, $\barG^k =\bG^k\bJ =  \left(\frac{1}{n}\sum_{i=1}^n \bg_i^k\right)\boldsymbol{1}_n^{\top}$. We define the average model estimate as $\barx^k = \frac{1}{n}\sum_{i=1}^n\bx_i^k\in\mathbb{R}^d$ and $\barX^k = \bX^k\bJ$.

\begin{lemma}[Gradient tracking property]\label{lemma:y_tracking}
    Suppose Assumption \ref{assump:graph} holds. Then for any $k\in\mathbb{N}$, 
    \begin{equation*}\label{eq:tracking_property}
        \barY^k  =  \barG^k.
    \end{equation*}
    In addition, if Assumption \ref{assump:smooth} and \ref{assump:bounded_variance} hold, we have
    \begin{equation}\label{eq:barY}
        \E[\|\barY^k\|_{\normF}^2] \le \frac{3\sigma^2}{b} + 3L^2\E[\|\bPhi_x^k\|_{\normF}^2] + 3n\E[\|\nabla f(\barx^k)\|_2^2],
    \end{equation}
    where $\bPhi_x^k = \bX^k-\barX^k$ is the consensus error at iteration $k$.
\end{lemma}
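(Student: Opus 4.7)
The plan is to establish the two claims in sequence: the tracking identity \eqref{eq:tracking_property} by a short induction on $k$ using the doubly stochastic structure of $\bW^k$, and the bound \eqref{eq:barY} by decomposing the averaged stochastic gradient into three pieces and bounding each separately. For \eqref{eq:tracking_property}, the base case is immediate from the initialization $\bY^0 = \bG^0$ in Algorithm \ref{alg:GT-PCM}, giving $\barY^0 = \bY^0\bJ = \bG^0\bJ = \barG^0$. For the inductive step, I would right-multiply the compact update \eqref{eq:compact-update-y} by $\bJ$ and use the key observation that $\bW^k\bJ = \bJ$ holds for every realization, since both the gossip matrix $\bW$ and the server matrix $\bJ$ are doubly stochastic by Definition \ref{def:mixing_matrix}. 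Combined with the inductive hypothesis, this yields
\begin{equation*}
\barY^{k+1} = \bY^k\bJ + \bG^{k+1}\bJ - \bG^k\bJ = \barY^k + \barG^{k+1} - \barG^k = \barG^{k+1},
\end{equation*}
completing the induction.

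To obtain \eqref{eq:barY}, I would substitute the tracking identity to replace $\barY^k$ by $\barG^k$, noting that $\|\barG^k\|_{\normF}^2 = n\|\tfrac{1}{n}\sum_{i=1}^n \bg_i^k\|_2^2$ because $\barG^k = \bG^k\bJ$ has $n$ identical columns. Next, I would insert $\pm\nabla f_i(\bx_i^k)$ and $\pm\nabla f_i(\barx^k)$ to split $\tfrac{1}{n}\sum_i \bg_i^k$ into (i) the stochastic noise $\tfrac{1}{n}\sum_i(\bg_i^k-\nabla f_i(\bx_i^k))$, (ii) the Lipschitz drift $\tfrac{1}{n}\sum_i(\nabla f_i(\bx_i^k)-\nabla f_i(\barx^k))$, and (iii) the global gradient $\nabla f(\barx^k)$. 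Proposition \ref{propo:sum_of_varaibles_decompose} with $M=3$ accounts for the leading factor of $3$ in each term of the target bound. For (ii), Jensen's inequality together with Assumption \ref{assump:smooth} yields an upper bound of $\tfrac{L^2}{n}\|\bPhi_x^k\|_{\normF}^2$, which after multiplication by $3n$ becomes the consensus-error term $3L^2\|\bPhi_x^k\|_{\normF}^2$; for (iii), we simply get $3n\|\nabla f(\barx^k)\|_2^2$.

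The only genuinely nontrivial step is term (i): I would invoke Proposition \ref{propo:sum_of_noise} with $\boldsymbol{\Xi}_i=\bg_i^k$ and $\bxi_i=\nabla f_i(\bx_i^k)$, exploiting the per-agent i.i.d.\ mini-batch sampling (implicit in Assumption \ref{assump:bounded_variance}) so that cross terms vanish in conditional expectation. This yields $\E[\|\sum_i(\bg_i^k-\nabla f_i(\bx_i^k))\|_2^2]\le n\sigma^2/b$; dividing by $n^2$ and multiplying by $3n$ gives exactly $3\sigma^2/b$, as claimed. The main point to watch is to use this tighter $\sigma^2/(nb)$ bound on the averaged noise rather than the looser per-agent uniform bound $\sigma^2/b$, since losing the $1/n$ factor here would propagate through the subsequent descent analysis and destroy the linear speedup in $n$ announced in Corollary \ref{corollary:communication_complexity_minibatch}.
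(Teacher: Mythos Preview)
Your proposal is correct and follows essentially the same route as the paper: the tracking identity is obtained from the compact update \eqref{eq:compact-update-y} and $\bW^k\bJ=\bJ$ (the paper phrases it as a telescoping sum rather than induction, but the argument is identical), and the bound on $\E[\|\barY^k\|_{\normF}^2]$ uses exactly the three-term decomposition you describe, with Proposition~\ref{propo:sum_of_noise} supplying the crucial $1/n$ factor on the variance term.
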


\begin{proof}
    From the equation \eqref{eq:compact-update-y},
\begin{equation*}
    \bY^{k+1}\bJ= \left(\bY^k+ \bG^{k+1} - \bG^{k}\right) \bW^k\bJ.                  
\end{equation*}
Because of Assumption \ref{assump:graph}, \ie, $\bW^k\mathbf{1}_n = \mathbf{1}_n$, and the fact that $\barY^{k+1} = \bY^{k+1}\bJ$,
\begin{equation*}
\barY^{k+1} =\barY^{k}  + \barG^{k+1} - \barG^k.                      
\end{equation*}
Summing from $k=0$ to $t-1$ gives
\begin{align*}\barY^{t} &= \barY^0 + \barG^{t} - \barG^0  =  \barG^{t},
\end{align*}
due to the initialization of Algorithm \ref{alg:GT-PCM} s.t. $\bY^0 =  \bG^{0}$. 
Moreover, we define the distributed objective as $F(\boldsymbol{X}) = \sum_{i=1}^n f_i(\bx_i)$, where the distributed gradient is denoted by $\nabla F(\boldsymbol{X}) = \left[\nabla f_1(\bx_1), \nabla f_2(\bx_2), \ldots, \nabla f_n(\bx_n)\right]\in\mathbb{R}^{d\times n}.$
Then, we have
\begin{align}
        \E\left[\big\|\barY^{k}\big\|_{\normF}^2\right] & = \E\left[\big\|\barG^k\big\|_{\normF}^2\right]= \E\left[\big\|\bG^k\bJ\big\|_{\normF}^2\right]\nonumber\\
        &=\E\left[\big\|\left(\barG^k - \nabla F(\bX^k) + \nabla F(\bX^k) - \nabla F(\barX^k) + \nabla F(\barX^k)\right)\bJ\big\|_{\normF}^2\right] \nonumber\\
        &\overset{(a)}{=}n\E\Bigl[\big\|\frac{1}{n}\sum_{i=1}^n \left(\bg_i^k - \nabla f_i(\bx_i^k)+\nabla f_i(\bx_i^k) - \nabla f_i(\barx^k)+\nabla f_i(\barx^k)\right)\big\|_2^2\Bigr]\nonumber\\
        &\overset{(b)}{=}\frac{3}{n}\E\Bigl[\big\|\sum_{i=1}^n \left(\bg_i^k - \nabla f_i(\bx_i^k)\right)\big\|_2^2\Bigr] + 3\sum_{i=1}^n\E\left[\big\| \nabla f_i(\bx_i^k) - \nabla f_i(\barx^k)\big\|_2^2\right] +  3n\E\left[\big\|\nabla f(\barx^k)\big\|_2^2\right]\nonumber\\
        &\overset{(c)}{\le} 3\sigma^2/b + 3L^2\sum_{i=1}^n\E[\|\bx_i^k-\barx^k\|_2^2] + 3n\E[\|\nabla f(\barx^k)\|_2^2]\nonumber\\
        &= 3 \sigma^2/b + 3L^2\E\left[\|\bPhi_x^k\|_{\normF}^2\right] + 3n\E\left[\|\nabla f(\barx^k)\|_2^2\right],\label{eq:bar_Y}
\end{align}
where $(a)$ is based on $\|\bx\mathbf{1}_n^{\top}\|_{\normF}^2 = n\|\bx\|_2^2$, $(b)$ uses \eqref{eq:proposition_sum} in Proposition \ref{propo:sum_of_varaibles_decompose}, $(c)$ follows Assumption \ref{assump:smooth} and \ref{assump:bounded_variance} and applies Proposition \ref{propo:sum_of_noise}.

\end{proof}

\subsection{Progress improvement on successive iterates and averages}
The following two auxiliary lemmas bound the progress improvement between the successive iterates and their averages. Similar to $\bPhi_x^k = \bX^k - \barX^k$, we also use $\bPhi_y^k = \bY^k -\barY^k$ to represent the tracking error at the $k$-th iteration. As for the local updates, we define $t$-th local-update consensus error as $\bPhi_x^{k,t} = \bX^{k+1,t} - \barX^k$ and tracking error as $\bPhi_y^{k,t} = \bY^{k+1,t} - \barY^k$.

\begin{lemma}[Progress improvement between successive iterates]\label{lemma:progress_improvement_iterates} Suppose Assumption \ref{assump:graph}, \ref{assump:smooth} and \ref{assump:bounded_variance} hold. Then, we have
    \begin{align*}
        \E[\|\bX^{k}-\bX^{k-1}\|_{\normF}^2]
        &\le 12\left(1+2T_o^2L^2\eta^2\right)\E[\|\bPhi_x^{k-1}\|_{\normF}^2]+6(1-p)\lambda^2(T_o+1)^2\eta^2\E[\|\bPhi_y^{k-1}\|_{\normF}^2]+\frac{48nT_o^2\eta^2\sigma^2}{b}\\
        &+24T_o L^2\eta^2\sum_{t=1}^{T_o}\E[\|\bPhi_x^{k-1,t}\|_{\normF}^2]+3(T_o+1)^2\eta^2\E[\|\barY^{k-1}\|_{\normF}^2].
    \end{align*}
\end{lemma}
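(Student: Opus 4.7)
The plan is to expand $\bX^{k}-\bX^{k-1}$ via the compact update \eqref{eq:compact-update-x}, isolate a term responsible for the $\|\barY^{k-1}\|_{\normF}^2$ contribution on the right-hand side, and then bound the remaining pieces using the probabilistic mixing, $L$-smoothness, and the martingale-style noise bound. Since $\barX^{k-1}$ and $\barY^{k-1}$ have constant columns, the doubly-stochastic property gives $\barX^{k-1}\bW^{k-1}=\barX^{k-1}$ and $\barY^{k-1}\bW^{k-1}=\barY^{k-1}$. Writing $\bY^{k-1}=\barY^{k-1}+\bPhi_y^{k-1}$ and subtracting $\bX^{k-1}$, I would arrive at
\begin{equation*}
\bX^{k} - \bX^{k-1} = \bPhi_x^{k-1}(\bW^{k-1}-\bI) -\eta(T_o+1)\barY^{k-1} -\eta(T_o+1)\bPhi_y^{k-1}\bW^{k-1} - \eta\sum_{t=1}^{T_o}(\bG^{k,t}-\bG^{k-1})\bW^{k-1}.
\end{equation*}

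Next, I apply \eqref{eq:proposition_sum} with $M=3$, grouping the four terms above into the three summands $\{\bPhi_x^{k-1}(\bW^{k-1}-\bI)\}$, $\{-\eta(T_o+1)\barY^{k-1}\}$, and $\{-\eta(T_o+1)\bPhi_y^{k-1}\bW^{k-1}-\eta\sum_t(\bG^{k,t}-\bG^{k-1})\bW^{k-1}\}$. The middle summand produces exactly the target $3(T_o+1)^2\eta^2\E\|\barY^{k-1}\|_{\normF}^2$. For the first summand, conditioning on the two values of $\bW^{k-1}$ and using $\bPhi_x^{k-1}\bJ=0$, the triangle inequality, and $\|\bW-\bJ\|_2^2=\lambda^2\le 1$, I obtain $\E\|\bPhi_x^{k-1}(\bW^{k-1}-\bI)\|_{\normF}^2\le 4\|\bPhi_x^{k-1}\|_{\normF}^2$, which feeds the constant $12$ in front of $\|\bPhi_x^{k-1}\|_{\normF}^2$.

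For the third summand I apply \eqref{eq:proposition_sum} with $M=2$ to separate the tracking-error piece from the stochastic-gradient-difference piece. The first piece, using $\bPhi_y^{k-1}\bJ=0$ and $\|\bW-\bJ\|_2^2=\lambda^2$, yields the advertised $6(1-p)\lambda^2(T_o+1)^2\eta^2\E\|\bPhi_y^{k-1}\|_{\normF}^2$. For the second piece I first drop $\bW^{k-1}$ via Proposition \ref{propo:matrix_multiply_decompose} and $\|\bW^{k-1}\|_2=1$, then decompose each $\bG^{k,t}-\bG^{k-1}$ as the centered-noise term $(\bG^{k,t}-\nabla F(\bX^{k,t}))-(\bG^{k-1}-\nabla F(\bX^{k-1}))$ plus the smooth increment $\nabla F(\bX^{k,t})-\nabla F(\bX^{k-1})$, and apply \eqref{eq:proposition_sum} one last time with $M=2$. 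The noise part is bounded via Proposition \ref{propo:sum_of_noise} applied conditionally on the past iterates (so that $\E[\bG^{k,t}\mid\text{past}]=\nabla F(\bX^{k,t})$), giving at most $4T_o^2\cdot n\sigma^2/b$ before the outer factors; the smooth part is bounded via Assumption \ref{assump:smooth} together with the identity $\bX^{k,t}-\bX^{k-1}=\bPhi_x^{k-1,t}-\bPhi_x^{k-1}$, and produces both the $24T_oL^2\eta^2\sum_t\|\bPhi_x^{k-1,t}\|_{\normF}^2$ term and an additional $2T_o^2L^2\eta^2\|\bPhi_x^{k-1}\|_{\normF}^2$ contribution that merges with the standalone $12\|\bPhi_x^{k-1}\|_{\normF}^2$ to yield the composite $12(1+2T_o^2L^2\eta^2)$ coefficient.

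The main obstacle is the tight bookkeeping of constants: the target coefficients $12$, $6$, $48$, $24$, $3$ require exactly \emph{one} $M=3$ split followed by precisely two $M=2$ splits; introducing an extra $M=3$ split on the noise/smooth decomposition would inflate $48$ to a larger constant and similarly enlarge the $\sum_t\|\bPhi_x^{k-1,t}\|_{\normF}^2$ coefficient. A secondary subtlety is the correct invocation of Proposition \ref{propo:sum_of_noise}: because $\bX^{k,t}$ depends on the previous local samples $\mathcal{Z}^{k,1},\ldots,\mathcal{Z}^{k,t-1}$, one must use the conditional-variance (martingale) version rather than treating the local-gradient noises as independent.
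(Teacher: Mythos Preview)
Your proposal is correct and follows essentially the same route as the paper's proof: the same $M=3$ split into $\bPhi_x^{k-1}(\bW^{k-1}-\bI)$, $-\eta(T_o+1)\barY^{k-1}$, and the combined $\bPhi_y^{k-1}$/gradient-difference piece, followed by the same $M=2$ splits and the same use of $\E[\|\bW^{k-1}-\bJ\|_2^2]=(1-p)\lambda^2$ and $\|\bW^{k-1}\|_2\le 1$. The only cosmetic difference is that the paper bounds each $\|\bG^{k,t}-\bG^{k-1}\|_{\normF}^2$ separately via a four-term split through $\nabla F(\barX^{k-1})$ (yielding \eqref{eq:gradient_improvement_local_update}), whereas you keep the sum intact, split noise from smooth, and invoke Proposition~\ref{propo:sum_of_noise} on the accumulated noise; both bookkeepings land on the identical constants $12,6,48,24,3$.
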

\begin{proof}
    Define $\bDelta_x^{k-1} = \bX^k-\bX^{k-1}\bW^{k-1}$. Then,
    \begin{align}
        \E[\|\bX^{k}-\bX^{k-1}\|_{\normF}^2]\nonumber
        &= \E\left[\big\|\bX^{k-1}\left(\bW^{k-1}-\bI\right)+\bDelta_x^{k-1}\big\|_{\normF}^2\right]\nonumber\\
        &= \E\left[\big\|\left(\bX^{k-1}-\barX^{k-1}\right)\left(\bW^{k-1}-\bI\right)+\bDelta_x^{k-1} + \eta(T_o+1)\barG^{k-1}-\eta(T_o+1)\barG^{k-1}\big\|_{\normF}^2\right]\nonumber\\
        &\overset{(a)}{\le} 3 \E\left[\big\|\bPhi_x^{k-1}\big\|_{\normF}^2\big\|\bW^{k-1}-\boldsymbol{I}\big\|_2^2 + \big\|\bDelta_x^{k-1}+\eta(T_o+1)\barG^{k-1}\big\|_{\normF}^2 +\eta^2(T_o+1)^2\big\|\barG^{k-1}\big\|_{\normF}^2\right]\nonumber\\
        &\overset{(b)}{\le} 12\E[\|\bPhi_x^{k-1}\|_{\normF}^2] + 3\E[\|\bDelta_x^{k-1}+\eta(T_o+1)\barG^{k-1}\|_{\normF}^2] +3\eta^2(T_o+1)^2\E[\|\barG^{k-1}\|_{\normF}^2],\label{eq:progress_successive_origin}
    \end{align}
    where $(a)$ is from Proposition \ref{propo:sum_of_varaibles_decompose} and $(b)$ is due to $\|\bW^{k-1}-\bI\|_2\le 2$, for any $k\ge 1$.
    
    Next, we are going to bound the second term $ \E[\|\bDelta_x^{k-1}+\eta(T_o+1)\barG^{k-1}\|_{\normF}^2]$. Because of \eqref{eq:compact-update-x}, we have 
    $$\bDelta_x^{k-1} = -\eta\left((T_o+1)\bY^{k-1} + \sum_{t=1}^{T_o}\bG^{k,t} - T_o\bG^{k-1}\right)\bW^{k-1}.$$
    Then,
    \begin{align}
        &\E[\|\bDelta_x^{k-1}+\eta(T_o+1)\barG^{k-1}\|_{\normF}^2]\nonumber\\
        &= \eta^2\E\left[\big\|(T_o+1)\left(\bY^{k-1}-\barY^{k-1}\right)\left(\bW^{k-1}-\bJ\right) + \sum_{t=1}^{T_o} \left(\bG^{k,t}-\bG^{k-1}\right)\bW^{k-1}\big\|_{\normF}^2\right]\nonumber\\
        &\le 2\eta^2\left((T_o+1)^2\E[\|\bPhi_{y}^{k-1}\|_{\normF}^2\|\bW^{k-1}-\bJ\|_2^2]+\E\left[\|\sum_{t=1}^{T_o} \left(\bG^{k,t}-\bG^{k-1}\right)\|_{\normF}^2\|\bW^{k-1}\|_2^2\right]\right)\nonumber\\
        &\le 2\eta^2\left((1-p)\lambda^2(T_o+1)^2\E[\|\bPhi_{y}^{k-1}\|^2_{\normF}]+T_o\sum_{t=1}^{T_o}\E\left[ \|\bG^{k,t}-\bG^{k-1}\|_{\normF}^2\right]\right)\label{eq:process_term2_origin},
    \end{align}
    where the first equality is based on Lemma \ref{lemma:y_tracking}, the penultimate inequality follows the Proposition \ref{propo:sum_of_varaibles_decompose} and \ref{propo:matrix_multiply_decompose} and the last one is from Proposition \ref{propo:sum_of_varaibles_decompose} and Assumption \ref{assump:graph}, \ie, $\E[\|\bW^{k-1}-\bJ\|_2^2] = (1-p)\lambda^2$.
    Here,
    \begin{align}
        &\E\left[\big\|\bG^{k,t}-\bG^{k-1}\big\|_{\normF}^2\right]\nonumber\\
        &= \E\left[\bG^{k,t} \!-\! \nabla F(\bX^{k,t}) \!+\! \nabla F(\bX^{k,t}) \!-\! \nabla F(\barX^{k-1})\!-\! \bG^{k-1} +\nabla F(\bX^{k-1}) \!-\! \nabla F(\bX^{k-1}) \!+\! \nabla F(\barX^{k-1})\big\|_{\normF}^2\right]\nonumber\\
        &\overset{(a)}{\le} 4\left(\E[\|\bG^{k,t} - \nabla F(\bX^{k,t})\|_\normF^2] + \E[\|\bG^{k-1} - \nabla F(\bX^{k-1})\|_{\normF}^2] + L^2\E[\|\bPhi_x^{k-1,t}\|_{\normF}^2] + L^2\E[\|\bPhi_x^{k-1}\|_{\normF}^2]\right) \nonumber\\
        &\overset{(b)}{\le} \frac{8n\sigma^2}{b} + 4L^2 \E[\|\bPhi_x^{k-1,t}\|_{\normF}^2] + 4L^2\E[\|\bPhi_x^{k-1}\|_{\normF}^2],\label{eq:gradient_improvement_local_update}
    \end{align}
    where $(a)$ is based on Proposition \ref{propo:sum_of_varaibles_decompose}  and Assumption \ref{assump:smooth} and $(b)$ is from Assumption \ref{assump:bounded_variance}. 
    Then, 
    \begin{equation}\label{eq:progress_term2}
    \begin{aligned}
        \E[\|\bDelta_x^{k-1}+\eta(T_o+1)\barG^{k-1}\|_{\normF}^2] &\le 2(1-p)\lambda^2(T_o+1)^2\eta^2\E[\|\bPhi_{y}^{k-1}\|^2_{\normF}]+ \frac{16nT_o^2 \eta^2\sigma^2}{b}\\
        &\quad+ 8T_o L^2\eta^2\sum_{t=1}^{T_o} \E[\|\bPhi_x^{k-1,t}\|_{\normF}^2] + 8T_o^2L^2\eta^2\E[\|\bPhi_x^{k-1}\|_{\normF}^2].
    \end{aligned}
    \end{equation}
    Finally, substituting \eqref{eq:progress_term2} into \eqref{eq:progress_successive_origin} and applying Lemma \ref{lemma:y_tracking} result in
    \begin{align*}
        \E[\|\bX^{k}-\bX^{k-1}\|_{\normF}^2] &\le 12\left(1+2T_o^2L^2\eta^2\right)\E[\|\bPhi_x^{k-1}\|_{\normF}^2]+6(1-p)\lambda^2(T_o+1)^2\eta^2\E[\|\bPhi_y^{k-1}\|_{\normF}^2]\\
        &\quad+\frac{48nT_o^2\eta^2\sigma^2}{b}+24T_o L^2\eta^2\sum_{t=1}^{T_o}\E[\|\bPhi_x^{k-1,t}\|_{\normF}^2]+3(T_o+1)^2\eta^2\E[\|\barY^{k-1}\|_{\normF}^2].
    \end{align*}
\end{proof}

\begin{lemma}[Progress improvement between the averages]\label{lemma:progress_improvement_average}
    Suppose Assumption \ref{assump:graph}, \ref{assump:smooth} and \ref{assump:bounded_variance} hold. Then, we have
    \begin{equation*}\label{eq:diff_barx_error}
    \begin{aligned}
         \E\!\left[\big\|\barX^{k+1}-\barX^k\big\|_{\normF}^2\right] &\le \frac{3T_o\eta^2\sigma^2}{b}\!+\!3T_oL^2\eta^2\!\sum_{t=0}^{T_o}\E\left[\big\|\bPhi_x^{k,t}\big\|_{\normF}^2\right] +3nT_o^2\eta^2\E\left[\big\|\nabla f(\barx^k)\big\|_2^2\right].
    \end{aligned}
    \end{equation*}
    \end{lemma}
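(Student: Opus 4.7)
The plan is to expose $\barX^{k+1} - \barX^k$ as a short sum of average stochastic gradients collected over the round, and then decompose each summand into noise, smoothness-induced bias, and the exact global gradient at $\barx^k$. First, I would right-multiply the compact $x$-update \eqref{eq:compact-update-x} by the averaging operator $\bJ$. Because $\bW^k\bJ = \bJ$ (Assumption \ref{assump:graph} ensures doubly-stochastic mixing) and $\barY^k = \barG^k$ by Lemma \ref{lemma:y_tracking}, the $-\eta(T_o+1)\bY^k$ contribution partially cancels against the $+\eta T_o\bG^k$ correction. With the initialization $\bG^{k+1,0}=\bG^k$ built into the algorithm, this collapses to
\begin{equation*}
\barX^{k+1} - \barX^k = -\eta\sum_{t=0}^{T_o}\barG^{k+1,t}.
\end{equation*}

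Next, for each $t$ I would insert $\pm \nabla F(\bX^{k+1,t})\bJ$ and $\pm \nabla F(\barX^k)\bJ$ and split
\begin{equation*}
\barG^{k+1,t} = \bigl(\bG^{k+1,t}-\nabla F(\bX^{k+1,t})\bigr)\bJ + \bigl(\nabla F(\bX^{k+1,t})-\nabla F(\barX^k)\bigr)\bJ + \nabla F(\barX^k)\bJ,
\end{equation*}
noting that $\nabla F(\barX^k)\bJ = \nabla f(\barx^k)\mathbf{1}_n^{\top}$ because every column of $\barX^k$ equals $\barx^k$. Applying the three-term form of Proposition \ref{propo:sum_of_varaibles_decompose} to $\E[\|\barX^{k+1}-\barX^k\|_{\normF}^2]$ then isolates three pieces carrying a common $3\eta^2$ prefactor.

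Bounding the three pieces is essentially routine. For the stochastic-noise piece, the sequence $\{\bG^{k+1,t}-\nabla F(\bX^{k+1,t})\}_{t=0}^{T_o}$ has zero conditional mean because the mini-batches $\calZ^{k+1,t}$ are drawn independently across $t$; hence Proposition \ref{propo:sum_of_noise} combined with Assumption \ref{assump:bounded_variance} (and the sharpening $\|\bM\bJ\|_{\normF}^2 = \frac{1}{n}\|\bM\mathbf{1}_n\|_2^2$, which inserts an extra $1/n$ factor into the per-step variance) yields a bound of order $T_o\,\sigma^2/b$. For the smoothness piece, Jensen applied to the $(T_o{+}1)$-term sum together with $L$-smoothness (Assumption \ref{assump:smooth}) gives $T_o L^2\sum_{t=0}^{T_o}\|\bPhi_x^{k,t}\|_{\normF}^2$. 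The exact-gradient piece contributes $T_o^2 n\|\nabla f(\barx^k)\|_2^2$. Assembling these and multiplying by $3\eta^2$ produces \eqref{eq:diff_barx_error}.

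The only mildly non-mechanical step is the first one, namely propagating the gradient-tracking identity through both the local and communication updates so that the $-\eta(T_o+1)\bY^k + \eta T_o\bG^k$ portion collapses to $-\eta\bG^k$ after averaging. This rests squarely on $\barY^k = \barG^k$ (Lemma \ref{lemma:y_tracking}), which in turn uses the initialization $\bY^0 = \bG^0$ and the telescoping of \eqref{eq:local_update_y}--\eqref{eq:compact-update-y}. Once that cancellation is in place, the remainder is a textbook error decomposition with no further obstacles.
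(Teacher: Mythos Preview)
Your proposal is correct and mirrors the paper's own proof almost exactly: both derive $\barX^{k+1}-\barX^k=-\eta\sum_{t=0}^{T_o}\barG^{k+1,t}$ from the compact update via $\bW^k\bJ=\bJ$ and Lemma~\ref{lemma:y_tracking}, then split each $\barG^{k+1,t}$ into noise, smoothness bias, and $\nabla f(\barx^k)$, and bound the three pieces using Proposition~\ref{propo:sum_of_noise}, Assumption~\ref{assump:smooth}, and Proposition~\ref{propo:sum_of_varaibles_decompose}. The only cosmetic difference is that the paper writes the decomposition agent-by-agent from the start, whereas you keep the matrix notation and invoke $\|\bM\bJ\|_{\normF}^2=\tfrac{1}{n}\|\bM\mathbf{1}_n\|_2^2$ to recover the $1/n$ factor.
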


\begin{proof}
    From Lemma \ref{lemma:y_tracking} and the \eqref{eq:compact-update-x}, we have
    \begin{equation}\label{eq:diff_barx_compact}
        \barX^{k+1} = \barX^k -\eta\sum_{t=0}^{T_o} \barG^{k+1,t}.
    \end{equation}
    Similar to \eqref{eq:bar_Y}, we have
    \begin{align*}
        &\frac{1}{\eta^2}\E\left[\big\|\barX^{k+1}-\barX^k\big\|_{\normF}^2\right]\\
        &= \E\Bigl[\big\|\sum_{t=0}^{T_o}\barG^{k+1,t}\big\|_{\normF}^2\Bigr]\\
        &= n \E\Bigl[\big\|\frac{1}{n}\sum_{i=1}^n\sum_{t=0}^{T_o} \left(\bg_i^{k+1,t}- \nabla f_i(\bx_i^{k+1,t}) + \nabla f_i(\bx_i^{k+1,t}) - \nabla f_i(\barx^k) + \nabla f_i(\barx^k)\right)\big\|_2^2\Bigr]\\
        &\le \frac{3}{n} \E\Bigl[\big\|\sum_{i=1}^n\sum_{t=0}^{T_o}\left(\bg_i^{k+1,t} -\nabla f_i(\bx_i^{k+1,t})\right)\big\|_2^2\Bigr] + 3T_o \sum_{i=1}^n\sum_{t=0}^{T_o}\E\left[\big\|\nabla f_i (\bx_i^{k+1,t}) \!-\! \nabla f_i(\barx^k)\big\|_2^2\right] + 3nT_o^2 \E\left[\big\|\nabla f(\barx^k)\big\|_2^2\right]\\
        &\le \frac{3T_o\sigma^2}{b} + 3T_oL^2 \sum_{i=1}^n\sum_{t=0}^{T_o} \E\left[\big\|\bx_i^{k+1,t}-\barx^k\big\|_2^2\right] + 3nT_o^2 \E\left[\big\|\nabla f(\barx^k)\big\|_2^2\right] \\
        &\le \frac{3T_o\sigma^2}{b}+3T_oL^2\sum_{t=0}^{T_o}\E\left[\big\|\bPhi_x^{k,t}\big\|_{\normF}^2\right]+3nT_o^2\E\left[\big\|\nabla f(\barx^k)\big\|_2^2\right].
    \end{align*}
\end{proof}
\subsection{Bounding consensus errors and tracking errors}
Next, we present the following lemma for bounding the accumulated consensus errors for local updates, in order to control the consensus and tracking errors at every iteration.

\begin{lemma}[Accumulated consensus drift for local updates]\label{lemma:consensus_local_drift}
Suppose Assumption \ref{assump:smooth} and \ref{assump:bounded_variance} hold.
If $\eta_l \le \frac{1}{8L(T_o+1)}$, we have
\begin{align*}
    \sum_{t=1}^{T_o}\E[\|\bPhi_x^{k,t}\|_{\normF}^2]&\le 9T_o\E[\|\bPhi_x^k\|_{\normF}^2]+ 8\eta_l^2(T_o+1)^3\E[\|\bPhi_y^k\|_{\normF}^2]+\!\frac{64n\eta_l^2 T_o^3 \sigma^2}{b} \!+\! 3\eta_l^2 T_o (T_o+1)^2\E[\|\barY^k\|_{\normF}^2].
\end{align*}
\end{lemma}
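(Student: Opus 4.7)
The plan is to unroll the local-update recursions in closed form and then close a self-referential inequality via the stated step-size condition. First I would unroll \eqref{eq:local_update_x} into $\bX^{k+1,t}=\bX^k-\eta_l\sum_{s=0}^{t-1}\bY^{k+1,s}$ and telescope \eqref{eq:local_update_y} into $\bY^{k+1,s}=\bY^k+\bG^{k+1,s}-\bG^k$ (which is valid for $s=0$ as well, by the initialization $\bY^{k+1,0}=\bY^k$ and $\bG^{k+1,0}=\bG^k$). Subtracting $\barX^k$ from both sides and splitting $\bY^k=\bPhi_y^k+\barY^k$ yields the four-piece algebraic identity
\begin{equation*}
\bPhi_x^{k,t} \;=\; \bPhi_x^k \;-\; \eta_l\, t\, \bPhi_y^k \;-\; \eta_l\, t\, \barY^k \;-\; \eta_l\sum_{s=1}^{t-1}\bigl(\bG^{k+1,s}-\bG^k\bigr),
\end{equation*}
which is the starting point for the rest of the argument.

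Next I would apply Proposition \ref{propo:sum_of_varaibles_decompose}\eqref{eq:proposition_sum} to bound $\|\bPhi_x^{k,t}\|_{\normF}^2$ by a weighted sum of the squared norms of the four pieces, and apply \eqref{eq:proposition_sum} a second time inside the drift term so that it becomes $(t-1)\sum_{s=1}^{t-1}\|\bG^{k+1,s}-\bG^k\|_{\normF}^2$. Each individual $\E[\|\bG^{k+1,s}-\bG^k\|_{\normF}^2]$ would then be controlled in exactly the way that produced \eqref{eq:gradient_improvement_local_update} inside the proof of Lemma \ref{lemma:progress_improvement_iterates}: by inserting $\pm\nabla F(\bX^{k+1,s})$, $\pm\nabla F(\bX^k)$, and $\pm\nabla F(\barX^k)$ and invoking Assumption \ref{assump:smooth}, Assumption \ref{assump:bounded_variance}, and Proposition \ref{propo:sum_of_noise}, giving $\E[\|\bG^{k+1,s}-\bG^k\|_{\normF}^2]\le \tfrac{8n\sigma^2}{b}+4L^2\E\|\bPhi_x^{k,s}\|_{\normF}^2+4L^2\E\|\bPhi_x^k\|_{\normF}^2$.

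Finally I would sum the per-$t$ inequality over $t=1,\dots,T_o$, exchange the order of the inner double sum via $\sum_{t=1}^{T_o}(t-1)\sum_{s=1}^{t-1}(\cdot)\le T_o^2\sum_{s=1}^{T_o}(\cdot)$, and use the standard estimate $\sum_{t=1}^{T_o} t^2\le (T_o+1)^3/3$ to gather the four target terms together with a self-referential remainder of the form $cL^2\eta_l^2 T_o^2\sum_{s=1}^{T_o}\E\|\bPhi_x^{k,s}\|_{\normF}^2$ for some small constant $c$. The stated condition $\eta_l\le 1/(8L(T_o+1))$ gives $L^2\eta_l^2(T_o+1)^2\le 1/64$, which drives this coefficient strictly below $1$ and permits absorption of the remainder onto the left-hand side; loosening the surviving constants (using $T_o\le T_o+1$ and $T_o(T_o+1)\le (T_o+1)^2$) then matches the stated coefficients $9T_o$, $8\eta_l^2(T_o+1)^3$, $64n\eta_l^2 T_o^3\sigma^2/b$, and $3\eta_l^2 T_o(T_o+1)^2$. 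The main obstacle is precisely this self-absorption step: because the gradient-drift bound reinjects $\|\bPhi_x^{k,s}\|_{\normF}^2$ on the right-hand side, the Proposition-\ref{propo:sum_of_varaibles_decompose} weights used in the two splittings and the step-size slack $1/64$ have to be tuned together so that after absorption the cubic $(T_o+1)^3$ growth lands on the $\bPhi_y^k$ term while the $\barY^k$ term retains only the quadratic $T_o(T_o+1)^2$ scaling, as the claim requires.
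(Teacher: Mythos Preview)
Your proposal is correct and follows essentially the same route as the paper. The only cosmetic difference is that the paper writes the decomposition as $\bPhi_x^{k,t}=\bPhi_x^k-\eta_l\bPhi_y^k-\eta_l\sum_{l=1}^{t-1}\bPhi_y^{k,l}-t\eta_l\barY^k$ and then bounds $\E\|\bPhi_y^{k,l}\|_{\normF}^2\le 2\E\|\bPhi_y^k\|_{\normF}^2+2\E\|\bG^{k+1,l}-\bG^k\|_{\normF}^2$ before invoking \eqref{eq:gradient_improvement_local_update}, whereas you substitute $\bY^{k+1,s}=\bY^k+\bG^{k+1,s}-\bG^k$ one step earlier and go straight to the gradient differences; both arguments close the identical self-referential loop via the same step-size slack.
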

\begin{proof}
    For the $t$-th local update at the $k$-th iteration, it follows \eqref{eq:local_update_x} that 
    \begin{align}
        \bPhi_x^{k,t} &= \bX^{k+1,t} - \barX^k = \bX^k - \eta_l \bY^k - \eta_l\sum_{l=1}^{t-1} \bY^{k+1,l} -\barX^k+t\eta_l\barY^k-t\eta_l\barY^k\nonumber\\
        &= \bX^k- \barX^k - \eta_l(\bY^k - \barY^k) -\eta_l\sum_{l=1}^{t-1}(\bY^{k+1,l}-\barY^k)-t\eta_l\barY^k \nonumber\\
        &= \bPhi_x^k-\eta_l \bPhi_y^k -\eta_l \sum_{l=1}^{t-1}\bPhi_y^{k,l} -t\eta_l\barY^k. \label{eq:consensus_drift_localupdates}
    \end{align}
    
    To bound the accumulated consensus errors for $T_o$ local updates, we sum the the expectation of the squared norm of \eqref{eq:consensus_drift_localupdates} over $t$ from $t=1$ to $T_o$, leading to
    \begin{align}\label{eq:total_consensus_drift_1}
         \sum_{t=1}^{T_o} \E[\|\bPhi_x^{k,t}\|_{\normF}^2] &\overset{\eqref{eq:proposition_sum}}{\le} 4T_o\E[\|\bPhi_x^k\|_{\normF}^2] + 4\eta_l^2T_o\E[\|\bPhi_y^k\|_{\normF}^2] + 4\eta_l^2\sum_{t=2}^{T_o}(t-1)\sum_{l=1}^{t-1}\E[\|\bPhi_y^{k,l}\|_{\normF}^2] + \sum_{t=1}^{T_o}4\eta_l^2t^2 \E[\|\barY^k\|_{\normF}^2]\nonumber\\
         &~\le 4T_o\E[\|\bPhi_x^k\|_{\normF}^2] + 4\eta_l^2T_o\E[\|\bPhi_y^k\|_{\normF}^2] + 4\eta_l^2 \sum_{t=1}^{T_o-1} \frac{(T_o+t-1)(T_o-t)}{2}\E[\|\bPhi_y^{k,t}\|_{\normF}^2]\nonumber\\ 
         &~\quad+ \frac{2\eta_l^2 T_o(T_o+1)(2T_o+1)}{3}\E[\|\barY^k\|_{\normF}^2],
    \end{align}
    where the last inequality uses $\sum_{t=1}^{T_o} t^2 = \frac{T_o(T_o+1)(2T_o+1)}{6}$.
    
    From \eqref{eq:local_update_y}, we have
    $$\bY^{k+1,t} = \bY^{k}+\bG^{k+1,t}-\bG^{k},$$ such that
    \begin{align}
        \E[\|\bPhi_y^{k,t}\|_{\normF}^2] &~= \E[\|\bY^{k}+ \bG^{k+1,t} - \bG^{k}-\barY^k\|_{\normF}^2]\le 2\E[\|\bPhi_y^k\|_{\normF}^2] + 2\E[\| \bG^{k+1,t} - \bG^{k}\|_{\normF}^2]\nonumber\\
        &\overset{\eqref{eq:gradient_improvement_local_update}}{\le} 2\E[\|\bPhi_y^k\|_{\normF}^2] + 16n\sigma^2/b+8L^2\E[\|\bPhi_x^{k,t}\|_{\normF}^2]  + 8L^2\E[\|\bPhi_x^{k}\|_{\normF}^2].\label{eq:delta_y_local}
    \end{align}
    Due to the fact $\frac{(T_o+t-1)(T_o-t)}{2}\le \frac{T_o^2}{2}$ for $1\le t\le T_o$,
    \begin{equation*}
    \sum_{t=1}^{T_o-1} \frac{(T_o+t-1)(T_o-t)}{2}\E[\|\bPhi_y^{k,t}\|_{\normF}^2]\le{T_o^3} \left(\E[\|\bPhi_y^k\|_{\normF}^2] + 8n\sigma^2/b+4L^2\E[\|\bPhi_x^k\|_{\normF}^2]\right)
    +4L^2\sum_{t=1}^{T_o}T_o^2\E[\|\bPhi_x^{k,t}\|_{\normF}^2].
    \end{equation*}
    Combining with \eqref{eq:total_consensus_drift_1}, we have
    \begin{align*}
        \sum_{t=1}^{T_o}\E[\|\bPhi_x^{k,t}\|_{\normF}^2] &\le (4T_o+16\eta_l^2T_o^3 L^2)\E[\|\bPhi_x^k\|_{\normF}^2] + (4\eta_l^2T_o+4\eta_l^2T_o^3) \E[\|\bPhi_y^k\|_{\normF}^2] + 32n\eta_l^2 T_o^3 \sigma^2/b\\ 
        &\quad+ 16\eta_l^2 L^2 \sum_{t=1}^{T_o}T_o^2\E[\|\bPhi_x^{k,t}\|_{\normF}^2] +\frac{2\eta_l^2 T_o(T_o+1)(2T_o+1)}{3}\E[\|\barY^k\|_{\normF}^2].
    \end{align*}
    Therefore, if $\eta_l \le \frac{1}{8L(T_o+1)}$, we have
    \begin{align*}
        &\sum_{t=1}^{T_o}\E[\|\bPhi_x^{k,t}\|_{\normF}^2]\\
        &\le 8T_o(1+4\eta_l^2T_o^2 L^2)\E[\|\bPhi_x^k\|_{\normF}^2]+ 8\eta_l^2T_o(T_o^2+1)\E[\|\bPhi_y^k\|_{\normF}^2]+64n\eta_l^2 T_o^3 \sigma^2/b + 3\eta_l^2 T_o(T_o+1)^2\E[\|\barY^k\|_{\normF}^2].
    \end{align*}
\end{proof}


With Lemma \ref{lemma:consensus_local_drift} in hand, we are ready to bound the consensus error $\E[\|\bPhi_x\|_{\normF}^2]$ and the tracking error $\E[\|\bPhi_y\|_{\normF}^2]$, respectively.
\begin{lemma}[Consensus error for communication updates]\label{lemma:consensus_error_communication_updates}
Suppose Assumption \ref{assump:graph},\ref{assump:smooth} and \ref{assump:bounded_variance} hold. If $\eta_l \le  \frac{1}{8L(T_o+1)}$ and $\eta\le\frac{\lambda_p}{80L(T_o+1)}$, we have
\begin{align*}
    \E[\|\bPhi_x^k\|_\normF^2] &< (1-p)\left[\frac{1+(1+p)\lambda^2}{2}\E[\|\bPhi_x^{k-1}\|_\normF^2]+\frac{40\lambda^2}{\lambda_p}(T_o+1)^2\eta^2\E[\|\bPhi_y^{k-1}\|_\normF^2] \right.\\
    &\quad+\frac{240\lambda^2L^2(T_o+1)^4\eta^2\eta_l^2}{\lambda_p}\E[\|\barY^{k-1}\|_\normF^2] \left.+ \frac{320\lambda^2n(T_o+1)^2\eta^2}{\lambda_p}\frac{\sigma^2}{b}\right].
\end{align*}
\end{lemma}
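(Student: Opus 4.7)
The plan is to unwind the compact update \eqref{eq:compact-update-x}, take expectation over the random mixing matrix $\bW^{k-1}$ to produce the leading $(1-p)$ prefactor, apply Young's inequality with a carefully tuned parameter to peel off $\|\bPhi_x^{k-1}\|_\normF^2$ with the coefficient $\tfrac{1+(1+p)\lambda^2}{2}$, and finally control the residual by invoking the gradient-difference bound from \eqref{eq:gradient_improvement_local_update} together with Lemma \ref{lemma:consensus_local_drift}.

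\textbf{Step 1: Extract the mixing action.} Define $\bZ^{k-1} := \bX^{k-1}-\eta(T_o+1)\bY^{k-1}-\eta\sum_{t=1}^{T_o}\bG^{k,t}+\eta T_o\bG^{k-1}$ so that by \eqref{eq:compact-update-x}, $\bX^k=\bZ^{k-1}\bW^{k-1}$. Since $\bW^{k-1}\bJ=\bJ$, we have $\barX^k=\bZ^{k-1}\bJ$, hence
\begin{equation*}
\bPhi_x^k \;=\;(\bZ^{k-1}-\bZ^{k-1}\bJ)(\bW^{k-1}-\bJ).
\end{equation*}
Conditioning on $\mathcal{F}_{k-1}$ and using the probabilistic model together with $\|\bW-\bJ\|_2=\lambda$, the case $\bW^{k-1}=\bJ$ contributes zero while $\bW^{k-1}=\bW$ contributes at most $\lambda^2\|\bZ^{k-1}-\bZ^{k-1}\bJ\|_\normF^2$, yielding $\E[\|\bPhi_x^k\|_\normF^2\mid\mathcal{F}_{k-1}]\le(1-p)\lambda^2\|\bZ^{k-1}-\bZ^{k-1}\bJ\|_\normF^2$.

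\textbf{Step 2: Young's inequality with a tuned parameter.} Expand $\bZ^{k-1}-\bZ^{k-1}\bJ$ as $\bPhi_x^{k-1}$ plus the remainder $\bR:=-\eta(T_o+1)\bPhi_y^{k-1}-\eta\sum_{t=1}^{T_o}(\bG^{k,t}-\bar{\bG}^{k,t})+\eta T_o(\bG^{k-1}-\bar{\bG}^{k-1})$. Applying \eqref{eq:proposition_jessen} with $\delta=\lambda_p/(2\lambda^2)$ and exploiting the identity $1-(1-p)\lambda^2=\lambda_p$, the prefactor of $\|\bPhi_x^{k-1}\|_\normF^2$ collapses to the targeted $(1-p)\tfrac{1+(1+p)\lambda^2}{2}$, while the remainder carries a factor $(1-p)\lambda^2(1+1/\delta)\le 3(1-p)\lambda^2/\lambda_p$.

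\textbf{Step 3: Control the remainder.} Split $\|\bR\|_\normF^2$ by another application of \eqref{eq:proposition_sum} into a $\|\bPhi_y^{k-1}\|_\normF^2$ term and gradient-difference terms of the form $\|\bG^{k,t}-\bG^{k-1}\|_\normF^2$. For the latter, the inequality \eqref{eq:gradient_improvement_local_update} derived in Lemma \ref{lemma:progress_improvement_iterates} converts the stochastic-gradient differences into $n\sigma^2/b$ plus $L^2\|\bPhi_x^{k-1,t}\|_\normF^2$ and $L^2\|\bPhi_x^{k-1}\|_\normF^2$. Substituting Lemma \ref{lemma:consensus_local_drift} for $\sum_{t=1}^{T_o}\E[\|\bPhi_x^{k-1,t}\|_\normF^2]$ expresses these in terms of $\|\bPhi_x^{k-1}\|_\normF^2$, $\|\bPhi_y^{k-1}\|_\normF^2$, $\|\barY^{k-1}\|_\normF^2$, and the variance. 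Finally, the step-size choices $\eta_l\le 1/(8L(T_o+1))$ and $\eta\le\lambda_p/(80L(T_o+1))$ make every auxiliary contribution to $\|\bPhi_x^{k-1}\|_\normF^2$ strictly smaller than the slack in the main coefficient $\tfrac{1+(1+p)\lambda^2}{2}$ (this is why the inequality in the lemma is strict), and they shape the residual constants into the displayed $40,\,240,\,320$.

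\textbf{Anticipated difficulty.} The main obstacle is the bookkeeping in the final step: the residual $\bR$ mixes the tracking error $\bPhi_y^{k-1}$ with local-step quantities, and after invoking Lemma \ref{lemma:consensus_local_drift} one must carefully absorb several $\eta^2 L^2(T_o+1)^2$-type terms back into $\|\bPhi_x^{k-1}\|_\normF^2$ using the precise ratio $\eta/\lambda_p$ encoded in the step-size condition. The algebra is routine but the numeric constants are tight, and any looser grouping would break the clean factorization $(1-p)[\cdots]$ in the displayed bound.
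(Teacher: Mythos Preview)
Your plan follows the same structure as the paper's proof (extract the $(1-p)\lambda^2$ factor from the mixing step, apply Young's inequality to separate $\bPhi_x^{k-1}$ from the remainder, then feed in \eqref{eq:gradient_improvement_local_update} and Lemma~\ref{lemma:consensus_local_drift}), and it would go through with one correction. The problem is your choice $\delta=\lambda_p/(2\lambda^2)$ in Step~2. With this value, the Young coefficient $(1-p)\lambda^2(1+\delta)$ equals \emph{exactly} $(1-p)\tfrac{1+(1+p)\lambda^2}{2}$, so there is no slack left. But in Step~3 the gradient-difference bound \eqref{eq:gradient_improvement_local_update} produces an additional $4L^2\E[\|\bPhi_x^{k-1}\|_\normF^2]$ per local step, and Lemma~\ref{lemma:consensus_local_drift} produces another $9T_o\E[\|\bPhi_x^{k-1}\|_\normF^2]$; after multiplying by $\tfrac{3(1-p)\lambda^2}{\lambda_p}\cdot \eta^2 T_o$, these auxiliary contributions are strictly positive and push the total coefficient of $\E[\|\bPhi_x^{k-1}\|_\normF^2]$ \emph{above} the target. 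Your sentence ``strictly smaller than the slack in the main coefficient $\tfrac{1+(1+p)\lambda^2}{2}$'' is therefore internally inconsistent: that quantity \emph{is} the target, not slack, and you have already consumed it.

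The fix is exactly what the paper does: take a smaller Young parameter, $\delta_1=\lambda_p/8$, so that $(1-p)\lambda^2(1+\delta_1)$ lands at $(1-p)\lambda^2(1+\lambda_p/8)$, well below the target; the step-size condition $\eta\le \lambda_p/(80L(T_o+1))$ then guarantees that the auxiliary $\|\bPhi_x^{k-1}\|_\normF^2$ terms raise the combined coefficient only to $(1-p)\lambda^2(1+\lambda_p/4)$, which one checks is strictly less than $(1-p)\tfrac{1+(1+p)\lambda^2}{2}$ (this is the source of the strict inequality). A secondary point: your $\delta=\lambda_p/(2\lambda^2)$ can exceed $1$ (e.g.\ when $\lambda^2$ is small), so you would also be invoking \eqref{eq:proposition_jessen} outside its stated range $0<\delta\le 1$; the inequality is of course still valid for all $\delta>0$, but it is worth noting. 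With $\delta_1=\lambda_p/8$ both issues disappear and the rest of your plan is identical to the paper's argument.
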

\begin{proof}
    Let $\bDelta_x^{k-1} = \bX^{k}-\bX^{k-1}\bW^{k-1}$ and $\overline{\bDelta}_x^{k-1} = \barX^{k}-\barX^{k-1}$. Then, 
    \begin{align}
       \E[\|\bPhi_x^k\|_\normF^2]  &\le \E[\|(\bX^{k-1} - \barX^{k-1})(\bW^{k-1}-\bJ)+\bDelta_x^{k-1} - \overline{\bDelta}_x^{k-1}\|_\normF^2]\nonumber\\
       &\overset{(a)}{\le} (1+\delta_1)\E[\|\bPhi_x^{k-1}\|_\normF^2\|\bW^{k-1}-\bJ\|_2^2]+(1+\frac{1}{\delta_1})\E[\|\bDelta_x^{k-1} - \overline{\bDelta}_x^{k-1} \|_\normF^2]\nonumber\\
       &\overset{(b)}{\le} (1-p)(1+\delta_1)\lambda^2\E[\|\bPhi_x^{k-1}\|_\normF^2] + (1+\frac{1}{\delta_1})\E[\|\bDelta_x^{k-1} - \overline{\bDelta}_x^{k-1}\|_\normF^2],\label{eq:phi_x_successive}
    \end{align}
    where $(a)$ is based on \eqref{eq:proposition_jessen} and Proposition \ref{propo:matrix_multiply_decompose}, and $(b)$ is from the Assumption \ref{assump:graph}, \ie, $\E[\|\bW^{k-1}-\bJ\|_2^2] = 1-\lambda_p$.
    Further from \eqref{eq:compact-update-x}, we have
    \begin{align}
        \E[\big\|\bDelta_x^{k-1} - \overline{\bDelta}_x^{k-1}\big\|_\normF^2] &= \eta^2\E\left[\big\|\left((T_o+1)\bPhi_y^{k-1}-\sum_{t=1}^{T_o}(\bG^{k,t}-\bG^{k-1})\right)(\bW^{k-1}-\bJ)\big\|_\normF^2\right]\nonumber\\
        &\le (1-p)\lambda^2\eta^2\E[\big\| (T_o+1)\bPhi_y^{k-1}-\sum_{t=1}^{T_o}(\bG^{k,t}-\bG^{k-1})\big\|_\normF^2]\nonumber\\
        &\le 2(1-p)\lambda^2\eta^2\left(\E[\|(T_o+1)\bPhi_y^{k-1}\|_\normF^2]+T_o \sum_{t=1}^{T_o}\E[\|(\bG^{k,t}-\bG^{k-1})\|_\normF^2]\right).\label{eq:delta_x_consensus_error}
    \end{align}
    Substituting \eqref{eq:delta_x_consensus_error} and \eqref{eq:gradient_improvement_local_update} into \eqref{eq:phi_x_successive},
    \begin{align*}
        &\E[\|\bPhi_x^k\|_\normF^2] \nonumber\\
        &\le  (1-p)\lambda^2\left[(1+\delta_1)\E[\|\bPhi_x^{k-1}\|_\normF^2] + 2(1+\frac{1}{\delta_1})\eta^2\left((T_o+1)^2\E[\|\bPhi_y^{k-1}\|_\normF^2 + T_o\sum_{t=1}^{T_o}\|(\bG^{k,t}-\bG^{k-1})\|_\normF^2]\right)\right]\\
        &\le (1-p)\lambda^2\left[\left(1+\delta_1+8(1+\frac{1}{\delta_1})T_o^2L^2\eta^2\right)\E[\|\bPhi_x\|_\normF^2] + 2(1+\frac{1}{\delta_1})(T_o+1)^2\eta^2\E[\|\bPhi_y^{k-1}\|_\normF^2] \right.\\
        &\quad+\left. 8(1+\frac{1}{\delta_1})T_oL^2\eta^2\sum_{t=1}^{T_o}\E[\|\bPhi_x^{k-1,t}\|_\normF^2] + 16(1+\frac{1}{\delta_1})nT_o^2\eta^2\sigma^2/b\right].
    \end{align*}
    By Lemma \ref{lemma:consensus_local_drift}, the consensus error $\E[\|\bPhi_x\|_\normF^2]$ can be upper bounded by
    \begin{align*}
        \E[\|\bPhi_x^k\|_\normF^2] 
        &\le(1-p)\lambda^2\left[\theta_0\E[\|\bPhi_x^{k-1}\|_\normF^2]+2(1+\frac{1}{\delta_1})(T_o+1)^2\eta^2\left(32(T_o+1)^2L^2\eta_l^2 + 1\right)\E[\|\bPhi_y^{k-1}\|_\normF^2]\right.\\
        &\quad\left.+~24(1+\frac{1}{\delta_1})L^2T_o^2(T_o+1)^2\eta^2\eta_l^2\E[\|\barY^{k-1}\|_\normF^2] + 16(1+\frac{1}{\delta_1})nT_o^2\eta^2\left(32T_o^2 L^2\eta_l^2+1\right)\sigma^2/b\right],
    \end{align*}
    where $\theta_0 = 1+\delta_1+80(1+\frac{1}{\delta_1})T_o^2L^2\eta^2$.
    Let $\delta_1 = \frac{\lambda_p}{8}\in (0,1]$. If $\eta\le\frac{\lambda_p}{80(T_o+1)L}$, we have 
    \begin{equation*}
        \theta_0 \le 1+\frac{\lambda_p}{4} \quad \text{and}\quad 1+\frac{1}{\delta_1} < \frac{10}{\lambda_p}.
    \end{equation*}
    Note that $\lambda^2(1 + \frac{1-(1-p)\lambda^2}{4}) < \frac{1+(1+p)\lambda^2}{2}$. Then, 
    \begin{align*}
        \E[\|\bPhi_x^k\|_\normF^2] &< (1-p)\left[\frac{1+(1+p)\lambda^2}{2}\E[\|\bPhi_x^{k-1}\|_\normF^2] + \frac{40\lambda^2}{\lambda_p}(T_o+1)^2\eta^2\E[\|\bPhi_y^{k-1}\|_\normF^2]\right.\\
        &\quad\left.+\frac{240\lambda^2L^2(T_o+1)^4\eta^2\eta_l^2}{\lambda_p}\E[\|\barY^{k-1}\|_\normF^2] + \frac{320\lambda^2n(T_o+1)^2\eta^2}{\lambda_p}\frac{\sigma^2}{b}\right],
    \end{align*}
    since $\eta_l\le\frac{1}{8L(T_o+1)}$.
\end{proof}

\begin{lemma}[Tracking error for communication updates]\label{lemma:tracking_error_communication_updates}
Suppose Assumption \ref{assump:graph},\ref{assump:smooth} and \ref{assump:bounded_variance} hold. If $\eta_l\le\frac{1}{8L(T_o+1)}$ and $\eta\le \frac{\lambda_p}{80L(T_o+1)}$, we have
\begin{align*}
    \E[\|\bPhi_y^k\|_{\normF}^2]
    &\le (1-p)\lambda^2\!\left[\frac{1+(1+p)}{2} \E[\|\bPhi_y^{k-1}\|_{\normF}^2]\!+\!\frac{400}{\lambda_p}L^2\E[\|\bPhi_x^{k-1}\|_{\normF}^2] \right]\\
    &\quad+\frac{(1-p)\lambda^2}{\lambda_p}\left[125L^2\eta^2(T_o+1)^2\E[\|\barY^{k-1}\|_{\normF}^2] + 180n \frac{\sigma^2}{b}\right].
\end{align*}
\end{lemma}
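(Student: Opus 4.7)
The plan is to follow the same blueprint used for Lemma~\ref{lemma:consensus_error_communication_updates}, now applied to the tracking error. First I would start from the compact update~\eqref{eq:compact-update-y} and note that $\bW^{k-1}\bJ = \bJ$ by double stochasticity, so $\barY^k = \bY^k\bJ = (\bY^{k-1}+\bG^k-\bG^{k-1})\bJ$. Subtracting, and using the identity $\barY^{k-1}(\bW^{k-1}-\bJ)=\mathbf{0}$ (again by double stochasticity), yields the key one-step decomposition
\[
\bPhi_y^k = (\bPhi_y^{k-1} + \bG^k - \bG^{k-1})(\bW^{k-1} - \bJ).
\]
Applying Proposition~\ref{propo:matrix_multiply_decompose} together with $\E[\|\bW^{k-1}-\bJ\|_2^2] = (1-p)\lambda^2$ from Assumption~\ref{assump:graph} and then the weighted splitting \eqref{eq:proposition_jessen} with parameter $\delta_2$ gives
\[
\E[\|\bPhi_y^k\|_\normF^2] \le (1-p)\lambda^2\Bigl[(1+\delta_2)\E[\|\bPhi_y^{k-1}\|_\normF^2] + \bigl(1+\tfrac{1}{\delta_2}\bigr)\E[\|\bG^k - \bG^{k-1}\|_\normF^2]\Bigr].
\]

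Next I would control the stochastic-gradient increment. Adding and subtracting $\nabla F(\bX^k) - \nabla F(\bX^{k-1})$, then applying Proposition~\ref{propo:sum_of_noise} to the two noise residuals and the $L$-smoothness of Assumption~\ref{assump:smooth} to the deterministic part, yields $\E[\|\bG^k - \bG^{k-1}\|_\normF^2] \le 6n\sigma^2/b + 3L^2\E[\|\bX^k - \bX^{k-1}\|_\normF^2]$. I would then substitute Lemma~\ref{lemma:progress_improvement_iterates} for the iterate difference and invoke Lemma~\ref{lemma:consensus_local_drift} to absorb the resulting $\sum_{t=1}^{T_o}\E[\|\bPhi_x^{k-1,t}\|_\normF^2]$ term. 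After these two substitutions, the right-hand side is a linear combination of $\E[\|\bPhi_x^{k-1}\|_\normF^2]$, $\E[\|\bPhi_y^{k-1}\|_\normF^2]$, $\E[\|\barY^{k-1}\|_\normF^2]$, and $\sigma^2/b$, with coefficients that are polynomials in $\eta$, $\eta_l$, $L$, and $T_o$.

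Finally, I would choose $\delta_2 = \lambda_p/8$, so that $1+1/\delta_2 \le 10/\lambda_p$ since $\lambda_p\in(0,1]$, and invoke the step-size constraints $\eta_l \le 1/(8L(T_o+1))$ and $\eta \le \lambda_p/(80L(T_o+1))$ to collapse all higher-order $\eta$- and $\eta_l$-contributions into the leading coefficients. The promised contraction factor then rests on the algebraic observation $\lambda^2(1+\lambda_p/8) \le \tfrac{1+(1+p)\lambda^2}{2}$, which, after substituting $\lambda_p = 1-(1-p)\lambda^2$, reduces to $\lambda^2\lambda_p/4 \le \lambda_p$ and is trivially true. The main obstacle will be the bookkeeping in this last stage: the cascade Lemma~\ref{lemma:progress_improvement_iterates}~$\to$~Lemma~\ref{lemma:consensus_local_drift} produces several cross terms coupled to $\E[\|\bPhi_y^{k-1}\|_\normF^2]$ and $\E[\|\barY^{k-1}\|_\normF^2]$, and the step-size thresholds must be tight enough that each such cross term contributes at most an $O(\lambda_p)$ fraction of its target coefficient, enabling the clean constants $400$, $125$, and $180$ in the stated bound.
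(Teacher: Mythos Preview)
Your proposal is correct and follows essentially the same route as the paper. The paper writes the decomposition as $\bPhi_y^k=\bPhi_y^{k-1}(\bW^{k-1}-\bJ)+(\bG^k-\bG^{k-1})(\bW^{k-1}-\bJ)$, applies the weighted Young inequality with the same choice $\delta=\lambda_p/8$, bounds $\E[\|\bG^k-\bG^{k-1}\|_\normF^2]\le 6n\sigma^2/b+3L^2\E[\|\bX^k-\bX^{k-1}\|_\normF^2]$, and then feeds in Lemma~\ref{lemma:progress_improvement_iterates} followed by Lemma~\ref{lemma:consensus_local_drift} exactly as you outline; the only cosmetic difference is that the paper tracks the extra $\bPhi_y^{k-1}$ contributions from those lemmas explicitly, obtaining $c_1\le 1+\lambda_p/4$ (rather than $1+\lambda_p/8$) before invoking the same inequality $\lambda^2(1+\lambda_p/4)\le\tfrac{1+(1+p)\lambda^2}{2}$.
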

\begin{proof}
    Let $\bDelta_y^{k-1} = \bY^{k}-\bY^{k-1}\bW^{k-1}$ and $\overline{\bDelta}_y^{k-1} = \barY^{k}-\barY^{k-1}$. From \eqref{eq:compact-update-y} and Lemma \ref{lemma:y_tracking}, we have 
    \begin{align*}
        \E[\big\|\bDelta_y^{k-1} - \overline{\bDelta}_y^{k-1}\big\|_\normF^2] &= \E[\big\|\left(\bG^{k}-\bG^{k-1}\right)(\bW^{k-1}-\bJ)\big\|_\normF^2]\nonumber\\
        &\le (1-p)\lambda^2\E[\big\|\bG^k - \bG^{k-1} \big\|_\normF^2]\\
        &= (1-p)\lambda^2 \E[\|\bG^{k} - \nabla F(\bX^k) + \nabla F(\bX^k) -\nabla F(\bX^{k-1})-\left(\bG^{k-1} - \nabla F(\bX^{k-1})\right)\|_{\normF}^2]\\
        &\le (1-p)\lambda^2\left(\frac{6n\sigma^2}{b} + 3\E[\|\nabla F(\bX^k) -\nabla F(\bX^{k-1})\|_{\normF}^2]\right)\\
        &\le (1-p)\lambda^2\left(\frac{6n\sigma^2}{b} + 3L^2\E[\|\bX^k-\bX^{k-1}\|_{\normF}^2]\right).
    \end{align*}
    where the first inequality is based on Assumption \ref{assump:graph}, the penultimate inequality is due to Assumption \ref{assump:bounded_variance} and Proposition \ref{propo:sum_of_varaibles_decompose}, and the last one is from Assumption \ref{assump:smooth}.
    Together with Lemma \ref{lemma:y_tracking}, we have
    \begin{align}
    \E[\|\bPhi_y^k\|_\normF^2]
    &= \E[\|(\bY^{k-1} - \barY^{k-1})\left(\bW^{k-1}-\bJ\right) + \bDelta_y^{k-1}-\overline{\bDelta}_y^{k-1}\|_\normF^2]\nonumber\\
    &\le (1-p)\lambda^2(1+\delta_1)\E[\|\bPhi_y^{k-1}\|_{\normF}^2] + (1+\frac{1}{\delta_1}) \E[\|\bDelta_y^{k-1}-\overline{\bDelta}_y^{k-1}\|_{\normF}^2]\nonumber\\
    &\le  (1-p)\lambda^2\left[(1+\delta_1)\E[\|\bPhi_y^{k-1}\|_{\normF}^2] + (1+\frac{1}{\delta_1}) \left(\frac{6n\sigma^2}{b} + 3L^2\E[\|\bX^k-\bX^{k-1}\|_{\normF}^2]\right)\right].\label{eq:phi_y_k}
    \end{align}
    Moreover, from Lemma \ref{lemma:progress_improvement_iterates} and Lemma \ref{lemma:consensus_local_drift}, if $\eta_l \le \frac{1}{8L(T_o+1)}$,
    \begin{align*}
    &\E[\|\bX^k-\bX^{k-1}\|_{\normF}^2]\\
    &\le 6\eta^2(T_o+1)^2\left((1-p)\lambda^2+32T_o(T_o+1)L^2\eta_l^2\right)\E[\|\bPhi_y^{k-1}\|_{\normF}^2] +12\left(1+20T_o^2L^2\eta^2\right)\E[\|\bPhi_x^{k-1}\|_{\normF}^2] \\
    &\quad+ 48nT_o^2\eta^2\left(1+32T_o^2 L^2 \eta_l^2\right)\sigma^2/b + 3\eta^2(T_o+1)^2\left(1+24T_o^2L^2\eta_l^2\right)\E[\|\barY^{k-1}\|_{\normF}^2]
    \end{align*}
    It follows that
    \begin{align*}
       \E[\|\bPhi_y^k\|_\normF^2] &\le (1-p)\lambda^2\left[\theta_1\E[\|\bPhi_y^{k-1}\|_{\normF}^2]+ 36(1+\frac{1}{\delta_1})(1+20T_o^2L^2\eta^2)L^2 \E[\|\bPhi_x^{k-1}\|_{\normF}^2]+6n(1+\frac{1}{\delta_1})\right.\\
       &\quad\left(1+48T_o^2L^2\eta^2(1+32T_o^2L^2\eta_l^2)\right)\frac{\sigma^2}{b} + \left.9(1+\frac{1}{\delta_1})L^2(T_o+1)^2(1+24T_o^2L^2\eta_l^2)\eta^2\E[\|\barY^{k-1}\|_{\normF}^2] \right]
    \end{align*}
    where $\theta_1 = 1+\delta_1+ 36(1+\frac{1}{\delta_1})(T_o+1)^2L^2\eta^2$. If $\eta\le \frac{\lambda_p}{80L(T_o+1)}$ and $\delta_1 = \frac{\lambda_p}{8}\in(0,1)$,
    
    \begin{equation*}
        \theta_1 \le 1+\frac{\lambda_p}{4} \quad \text{and}\quad 1+\frac{1}{\delta_1}\le \frac{10}{\lambda_p}.
    \end{equation*}
    Note that $\lambda^2(1+\frac{1-(1-p)\lambda^2}{4}) \le \frac{1+(1+p)\lambda^2}{2}$. 
    Thus, 
    \begin{align*}
        \E[\|\bPhi_y^k\|_{\normF}^2] &\le (1-p)\left[
            \frac{1+(1+p)\lambda^2}{2} \E[\|\bPhi_y^{k-1}\|_{\normF}^2] +\frac{360\lambda^2}{\lambda_p}\left(1+20T_o^2L^2\eta^2\right)L^2\E[\|\bPhi_x^{k-1}\|_{\normF}^2] \right.\\
        &\quad\left.+\frac{90\lambda^2}{\lambda_p}L^2\eta^2(T_o+1)^2\left(1+24T_o^2L^2\eta_l^2\right)\E[\|\barY^{k-1}\|_{\normF}^2] + \frac{180n\lambda^2}{\lambda_p}\frac{\sigma^2}{b}\right].
    \end{align*}
    Since $\eta_l\le\frac{1}{8L(T_o+1)}$ and $\eta\le\frac{\lambda_p}{80L(T_o+1)}$, we arrive at Lemma \ref{lemma:tracking_error_communication_updates}.

\end{proof}

\subsection{Descent lemma}
\begin{lemma}[Descent lemma]\label{lemma:descent}
    Suppose Assumption \ref{assump:graph}, \ref{assump:smooth} and \ref{assump:bounded_variance} hold. If 
    $\eta_l \le \frac{1}{8L(T_o+1)}$ and $\eta\le\frac{1}{6L(T_o+1)}$, we have
    \begin{align}
            \E [f(\barx^{k+1})] - \E [f(\barx^{k})]\nonumber
            &\le -\frac{\eta(T_o+1)}{4}\E[\|\nabla f(\barx^k)\|_2^2]+\frac{3L^2 (T_o+1)^3\eta\eta_l^2}{n}\E[\|\barY^k\|_{\normF}^2]\nonumber\\
            &\quad + \frac{10(T_o+1)L^2\eta}{n}\left(\E[\|\bPhi_x^k\|_{\normF}^2]+(T_o+1)^2\eta_l^2\E[\|\bPhi_y^k\|_{\normF}^2]\right)\nonumber\\
            &\quad+\left(64L^2T_o^3\eta\eta_l^2+\frac{3LT_o\eta^2}{2n}\right)\frac{\sigma^2}{b},\label{eq:descent_lemma}
    \end{align}
for any $k\ge 0$.
\end{lemma}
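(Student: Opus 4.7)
The plan is to combine the standard $L$-smoothness descent inequality with the compact average update \eqref{eq:diff_barx_compact} and the two auxiliary lemmas already established (Lemma~\ref{lemma:progress_improvement_average} and Lemma~\ref{lemma:consensus_local_drift}). Starting from $L$-smoothness,
\begin{equation*}
f(\barx^{k+1}) \le f(\barx^k) + \langle\nabla f(\barx^k),\, \barx^{k+1}-\barx^k\rangle + \frac{L}{2}\|\barx^{k+1}-\barx^k\|_2^2,
\end{equation*}
and using $\barx^{k+1}-\barx^k = -\eta\sum_{t=0}^{T_o}\frac{1}{n}\sum_{i=1}^{n}\bg_i^{k+1,t}$ from \eqref{eq:diff_barx_compact}, I would take the conditional expectation of the inner product. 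By the unbiasedness assumption, this reduces to an inner product involving $\sum_{t=0}^{T_o}\frac{1}{n}\sum_i\nabla f_i(\bx_i^{k+1,t})$, which I decompose around $\barx^k$ as $\nabla f_i(\bx_i^{k+1,t}) = \nabla f_i(\barx^k) + (\nabla f_i(\bx_i^{k+1,t}) - \nabla f_i(\barx^k))$.

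Next, I would apply Young's inequality (with parameter $1$) term by term for $t=0,\ldots,T_o$ and use Assumption~\ref{assump:smooth} together with the definition $\bPhi_x^{k,t} = \bX^{k+1,t}-\barX^k$ to obtain
\begin{equation*}
\E[\langle\nabla f(\barx^k), \barx^{k+1}-\barx^k\rangle] \le -\frac{\eta(T_o+1)}{2}\E[\|\nabla f(\barx^k)\|_2^2] + \frac{L^2\eta}{2n}\sum_{t=0}^{T_o}\E[\|\bPhi_x^{k,t}\|_{\normF}^2].
\end{equation*}
For the quadratic term $\frac{L}{2}\E[\|\barx^{k+1}-\barx^k\|_2^2] = \frac{L}{2n}\E[\|\barX^{k+1}-\barX^k\|_{\normF}^2]$, I would plug in Lemma~\ref{lemma:progress_improvement_average} directly, which introduces additional copies of $\E[\|\nabla f(\barx^k)\|_2^2]$, $\sum_{t=0}^{T_o}\E[\|\bPhi_x^{k,t}\|_{\normF}^2]$, and a $\sigma^2/b$ term.

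The step-size condition $\eta \le \frac{1}{6L(T_o+1)}$ then plays two roles: first, it makes $\frac{3LT_o^2\eta^2}{2} \le \frac{\eta(T_o+1)}{4}$, so the undesired positive multiple of $\E[\|\nabla f(\barx^k)\|_2^2]$ is absorbed, leaving the clean $-\frac{\eta(T_o+1)}{4}$ coefficient; second, it ensures $\frac{\eta L^2}{2n} + \frac{3L^3T_o\eta^2}{2n} \le \frac{L^2\eta}{n}$, giving a single clean factor in front of $\sum_{t=0}^{T_o}\E[\|\bPhi_x^{k,t}\|_{\normF}^2]$. Finally, I would split this sum as $\E[\|\bPhi_x^k\|_{\normF}^2] + \sum_{t=1}^{T_o}\E[\|\bPhi_x^{k,t}\|_{\normF}^2]$ and invoke Lemma~\ref{lemma:consensus_local_drift}, after which the elementary bounds $1+9T_o \le 10(T_o+1)$, $8 \le 10$, and $T_o(T_o+1)^2 \le (T_o+1)^3$ yield exactly the four terms appearing in \eqref{eq:descent_lemma}.

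The routine part is the algebra; the main obstacle is purely bookkeeping, namely ensuring the coefficients produced by Lemma~\ref{lemma:progress_improvement_average} and Lemma~\ref{lemma:consensus_local_drift} line up with the target form $\frac{10(T_o+1)L^2\eta}{n}$ for the consensus term and $\frac{3L^2(T_o+1)^3\eta\eta_l^2}{n}$ for the tracking term. The slight asymmetry between the $\bPhi_x^k$ coefficient (scaling as $T_o+1$) and the $\bPhi_y^k$ coefficient (scaling as $(T_o+1)^3\eta_l^2$) forces the combined factorization $10(T_o+1)\bigl(\E[\|\bPhi_x^k\|_{\normF}^2] + (T_o+1)^2\eta_l^2\E[\|\bPhi_y^k\|_{\normF}^2]\bigr)$, and one must verify that the step-size condition is compatible with the preconditions of Lemma~\ref{lemma:consensus_local_drift}, which it is since $\frac{1}{6L(T_o+1)} \le \frac{1}{8L(T_o+1)}$ on $\eta_l$ is enforced independently.
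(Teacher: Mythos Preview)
Your proposal is correct and follows essentially the same route as the paper: start from $L$-smoothness, use \eqref{eq:diff_barx_compact} and unbiasedness to bound the inner product via Young's inequality as in \eqref{eq:smooth_firstterm}, invoke Lemma~\ref{lemma:progress_improvement_average} for the quadratic term, absorb the positive $\frac{3LT_o^2\eta^2}{2}$ contribution using $\eta\le\frac{1}{6L(T_o+1)}$, and then apply Lemma~\ref{lemma:consensus_local_drift} with the elementary overbounds you list. The only cosmetic difference is that you simplify $(1+3T_oL\eta)\le 2$ before applying the drift lemma while the paper carries this factor through and simplifies at the end; the resulting constants coincide.
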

\begin{proof}
    Because of the $L$-smooth of the objective $f$,
    \begin{equation}\label{eq:smooth_def}
        \begin{aligned}
        &\E [f(\barx^{k+1})] - \E [f(\barx^{k})]
        & \le \E [ \nabla f(\barx^k)^{\top}(\barx^{k+1}-\barx^k)] + \frac{L}{2n}\E[\|\barX^{k+1}-\barX^{k}\|_{\normF}^2].
        \end{aligned}
    \end{equation}
    We first handle the first term on the right-hand side. Let $\bg^{k+1,t} = \frac{1}{n}\sum_{i=1}^n \bg_i^{k+1,t}$. With Lemma \ref{lemma:y_tracking} and the \eqref{eq:compact-update-x} in hand, we have
    \begin{align}
        &\E [\nabla f(\barx^k)^{\top}(\barx^{k+1}-\barx^k)]\nonumber\\
        &=  -\eta\sum_{t=0}^{T_o} \E[ \nabla f(\barx^k)^{\top} \barg^{k+1,t}]\nonumber\\
        &\overset{(a)}{=} -\eta (T_o+1) \E[\|\nabla f(\barx^k)\|_{2}^2] -\eta \sum_{t=0}^{T_o} \E\left[\nabla f(\barx^k)^{\top}\left(\frac{1}{n}\sum_{i=1}\nabla f_i(\bx_i^{k+1,t})-\nabla f(\barx^k)\right)\right]\nonumber\\
        &\overset{(b)}{\le}-\eta (T_o+1) \E[\| \nabla f(\barx^k)\|_2^2] +\frac{\eta}{2}\sum_{t=0}^{T_o} \E\left[\|\nabla f(\barx^k)\|_2^2+ \|\frac{1}{n}\sum_{i=1}\nabla f_i(\bx_i^{k+1,t})-\nabla f(\barx^k)\|_2^2\right]\nonumber\\
        &\le -\frac{\eta (T_o+1)}{2}\E[\|\nabla f(\barx^k)\|_2^2]+ \frac{\eta}{2}\sum_{t=0}^{T_o} \E\left[\|\frac{1}{n}\sum_{i=1}\nabla f_i(\bx_i^{k+1,t})-\nabla f(\barx^k)\|_2^2 \right]\nonumber\\
        &\overset{(c)}{\le} -\frac{\eta (T_o+1)}{2}\E[\|\nabla f(\barx^k)\|_2^2] + \frac{\eta}{2n}\sum_{t=0}^{T_o}\sum_{i=1}^n \E[\|\nabla f_i(\bx_i^{k+1,t}) - \nabla f_i(\barx^k)\|_2^2]\nonumber\\
        &\overset{(d)}{\le}-\frac{\eta (T_o+1)}{2}\E[\|\nabla f(\barx^k)\|_2^2] + \frac{\eta L^2}{2n}\sum_{t=0}^{T_o} \E[\|\bPhi_x^{k,t}\|_{\normF}^2]\label{eq:smooth_firstterm}.
    \end{align}
    Here, $(a)$ is from Assumption \ref{assump:bounded_variance}, \ie, $\E[\bg_i^{k+1,t}]=\nabla f_i(\bx_i^{k+1,t})$;
     $(b)$ is based on $2\boldsymbol{a}^{\top}\boldsymbol{b}\le \|\boldsymbol{a}\|_2^2+ \|\boldsymbol{b}\|_2^2$ for any vectors $\boldsymbol{a},\boldsymbol{b}\in\mathbb{R}^d$;
    $(c)$ uses the Proposition \ref{propo:sum_of_varaibles_decompose}; and $(d)$ follows Assumption \ref{assump:smooth}. \\
    
    Next, we will bound the second term on the right-hand side of \eqref{eq:smooth_def}. From Lemma \ref{lemma:progress_improvement_average}, we can control the term $\E[\|\barX^{k+1}-\barX^k\|_F^2]$ by
    \begin{equation}\label{eq:smooth_secondterm}
        \E[\|\barX^{k+1}-\barX^k\|_{\normF}^2]
        \le 3\eta^2\left[\frac{T_o\sigma^2}{b}+T_oL^2\sum_{t=0}^{T_o}\E[\|\bPhi_x^{k,t}\|_{\normF}^2]+nT_o^2\E[\|\nabla f(\barx^k)\|_2^2]\right].
    \end{equation}
    
    Substituting \eqref{eq:smooth_firstterm} and \eqref{eq:smooth_secondterm} into \eqref{eq:smooth_def}, we have
    \begin{align*}
        &\E [f(\barx^{k+1})] - \E [f(\barx^{k})]\\
        &\le \left(-\frac{\eta(T_o+1)}{2}+\frac{3LT_o^2\eta^2}{2}\right)\E[\|\nabla f(\barx^k)\|_2^2] + \frac{\eta L^2}{2n}(1 + 3T_oL\eta)\sum_{t=0}^{T_o}\E[\|\bPhi_x^{k,t}\|_{\normF}^2] + \frac{3LT_o\eta^2\sigma^2}{2nb}\\
        &\le-\frac{\eta (T_o+1)}{4}\E[\|\nabla f(\barx^k)\|_2^2] + \frac{\eta L^2}{2n}(1 + 3T_oL\eta)\sum_{t=0}^{T_o}\E[\|\bPhi_x^{k,t}\|_{\normF}^2] + \frac{3LT_o\eta^2\sigma^2}{2nb}.
    \end{align*}
    where the last inequality holds if $\eta\le \frac{1}{6L(T_o+1)}$.
    Recall Lemma  \ref{lemma:consensus_local_drift} that if $\eta_l \le \frac{1}{8L(T_o+1)}$, we have
    \begin{equation*}
        \sum_{t=0}^{T_o}\E[\|\bPhi_x^{k,t}\|_{\normF}^2]\le 10T_o\E[\|\bPhi_x^k\|_{\normF}^2]+ 8\eta_l^2(T_o+1)^3\E[\|\bPhi_y^k\|_{\normF}^2]+\frac{64n\eta_l^2 T_o^3 \sigma^2}{b} + 3\eta_l^2  (T_o+1)^3\E[\|\barY^k\|_{\normF}^2].
    \end{equation*}
    
    Therefore,
    \begin{align*}
        &\E [f(\barx^{k+1})] - \E [f(\barx^{k})]\\
        &\le -\frac{\eta(T_o+1)}{4}\E[\|\nabla f(\barx^k)\|_2^2] + \frac{5 T_oL^2\eta}{n}(1 + 3T_oL\eta)\E[\|\bPhi_x^k\|_{\normF}^2] +  \frac{4 (T_o+1)^3L^2\eta\eta_l^2}{n}(1 + 3T_oL\eta)\E[\|\bPhi_y^k\|_{\normF}^2]\\
        &\quad+\frac{64n(1+3T_oL\eta)L^2T_o^3\eta\eta_l^2+3LT_o\eta^2}{2nb}\sigma^2 + \frac{3L^2 (T_o+1)^3\eta\eta_l^2}{2n}(1 + 3T_oL\eta)\E[\|\barY^k\|_{\normF}^2]\\
        &\le -\frac{\eta(T_o+1)}{4}\E[\|\nabla f(\barx^k)\|_2^2] + \frac{10(T_o+1)L^2\eta}{n}\left(\E[\|\bPhi_x^k\|_{\normF}^2]+(T_o+1)^2\eta_l^2\E[\|\bPhi_y^k\|_{\normF}^2]\right)\\
        &\quad+\left(64L^2T_o^3\eta\eta_l^2+\frac{3LT_o\eta^2}{2n}\right)\frac{\sigma^2}{b}+\frac{3L^2 (T_o+1)^3\eta\eta_l^2}{n}\E[\|\barY^k\|_{\normF}^2].
    \end{align*}
\end{proof}

\section{Proof of Theorem \ref{thm:stochastic}}\label{appendix:proof_thm1}

In this section, we present the convergence analysis of \alg.
From the descent lemma, \ie, Lemma \ref{lemma:descent}, summing \eqref{eq:descent_lemma} from $k=0$ to $k=K-1$ gives
\begin{align}
   \frac{\eta (T_o+1)}{4}\sum_{k=0}^{K-1}\E[\|\nabla f(\barx^k)\|_2^2]
   &\le \tilde{f}+ \frac{3L^2(T_o+1)^3\eta\eta_l^2}{n}\sum_{k=0}^{K-1}\E[\|\barY^k\|_{\normF}^2]\nonumber\\
   & +\frac{10(T_o+1)L^2\eta }{n}\sum_{k=0}^{K-1}\left( \E[\|\bPhi_x^k\|_{\normF}^2] + (T_o+1)^2\eta_l^2\E[\|\bPhi_y^k\|_{\normF}^2]\right)\nonumber\\
   & +\left(64L^2T_o^3\eta\eta_l^2+\frac{3LT_o\eta^2}{2n}\right)\frac{K\sigma^2}{b}, \label{eq:sum_descent_lemma}
\end{align}
where $\tilde{f} = f(\barx^0) - f^*$.

To show the convergence of $\frac{1}{K}\sum_{k=0}^{K-1}\E[\|\nabla f(\barx^k)\|_2^2]$, we need to bound right hand side of \eqref{eq:sum_descent_lemma}.
To this end, we first formulate the dynamics of consensus errors and tracking errors. For any $k>0$, let 
\begin{equation*}
    \bPhi^k \triangleq  \begin{bmatrix}\E[\|\bPhi_x^k\|_{\normF}^2]\\\E[\|\bPhi_y^k\|_{\normF}^2]\end{bmatrix}
    \qquad\text{and}\qquad
    \be^k \triangleq \begin{bmatrix}\E[\|\barY^k\|_{\normF}^2]\\ \frac{\sigma^2}{b}\end{bmatrix}.
\end{equation*}
Assuming that $\eta \le \frac{\lambda_p}{80L(T_o+1)}$ and $\eta_l\le\frac{1}{8L(T_o+1)}$, we can formulate the dynamics from Lemma \ref{lemma:consensus_error_communication_updates} and Lemma \ref{lemma:tracking_error_communication_updates},
$$\bPhi^{k+1} \le (1-p)\bA \bPhi^k + \frac{(1-p)\lambda^2}{\lambda_p}\bB\be^k,$$
where 
\begin{align}
    \bA &= \begin{bmatrix}
        \dfrac{1+(1+p)\lambda^2}{2} & \dfrac{40\lambda^2(T_o+1)^2\eta^2}{\lambda_p}\\
        \dfrac{400\lambda^2L^2}{\lambda_p} &  \dfrac{1+(1+p)\lambda^2}{2} 
    \end{bmatrix}\label{eq:A},\\[7pt]
    \bB &= \begin{bmatrix}
        240L^2(T_o+1)^4\eta^2\eta_l^2 & 320n(T_o+1)^2\eta^2\\
        125L^2(T_o+1)^2\eta^2 & 180n
    \end{bmatrix}.\label{eq:B}
\end{align}
By telescoping, we have
\begin{equation}\label{eq:phi_unrolling}
    \bPhi^{k}\le (1-p)^k\bA^k\bPhi^0 + \!\sum_{t=0}^{k-1}\!\left((1-p)\bA\right)^t\!\frac{(1-p)\lambda^2}{\lambda_p}\bB\be^{k-1-t}.
\end{equation}
Summing \eqref{eq:phi_unrolling} from $k=0$ to $K$ gives 
\begin{align}
    \sum_{k=0}^K \bPhi^k &\le \sum_{k=0}^K (1-p)^k\bA^k\bPhi^0 + \sum_{k=0}^K \sum_{t=0}^{k-1}\left((1-p)\bA\right)^t\frac{(1-p)\lambda^2}{\lambda_p}\bB\be^{k-1-t}\nonumber\\
    &\le \left(\sum_{k=0}^{\infty} (1-p)^k\bA^k\right)\bPhi^0 + \left(\sum_{k=0}^{\infty} \left((1-p)\bA\right)^k\right)\sum_{k=0}^{K-1}\frac{(1-p)\lambda^2}{\lambda_p}\bB\be^{k},\label{eq:phi_unrolling_sum}
\end{align}
where we define $0^0 = 1$.

To control  $\sum_{k=0}^{\infty} (1-p)^k\bA^k$ in \eqref{eq:phi_unrolling_sum}, we then establish the following lemma   implying that $\bI-(1-p)\bA$ is invertible.

\begin{lemma}[The spectral radius of $\bA$]\label{lemma:spectral_radius_A} 
    If $\eta\le\frac{(1+p)\lambda_p^2}{80\sqrt{10}(T_o+1)L}$,
$$\rho\left((1-p)\bA\right)< 1,$$
where $\rho(\bA)$ denotes the spectral radius of $\bA$ defined in \eqref{eq:A}.
\end{lemma}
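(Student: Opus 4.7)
The plan is to exploit the very special structure of $\bA$: it is a $2\times 2$ matrix whose two diagonal entries coincide, so its eigenvalues are available in closed form. Writing $a=\tfrac{1+(1+p)\lambda^2}{2}$, $b=\tfrac{40\lambda^2(T_o+1)^2\eta^2}{\lambda_p}$ and $c=\tfrac{400\lambda^2L^2}{\lambda_p}$, the characteristic polynomial is $(a-\mu)^2-bc=0$, hence the eigenvalues are $a\pm\sqrt{bc}$ and $\rho(\bA)=a+\sqrt{bc}$. A direct computation gives
\begin{equation*}
\sqrt{bc}=\sqrt{\frac{16000\,\lambda^4 L^2(T_o+1)^2\eta^2}{\lambda_p^2}}=\frac{40\sqrt{10}\,\lambda^2 L(T_o+1)\eta}{\lambda_p}.
\end{equation*}

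Next I would translate $\rho((1-p)\bA)<1$, i.e., $(1-p)\rho(\bA)<1$, into an explicit condition on $\eta$. The key identity, which I would establish first, uses the relation $\lambda_p=1-(1-p)\lambda^2$ coming from $\lambda_w=1-\lambda^2$ and $\lambda_p=\lambda_w+p(1-\lambda_w)$: namely
\begin{equation*}
1-(1-p)\cdot\frac{1+(1+p)\lambda^2}{2}=\frac{(1+p)\lambda_p}{2}.
\end{equation*}
This is a short algebraic simplification (expand $(1-p)(1+(1+p)\lambda^2)=(1-p)+(1-p^2)\lambda^2$, factor $1-(1-p)\lambda^2=\lambda_p$). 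Thanks to this identity, the inequality $(1-p)\rho(\bA)<1$ reduces exactly to
\begin{equation*}
\frac{(1-p)\cdot 40\sqrt{10}\,\lambda^2 L(T_o+1)\eta}{\lambda_p}<\frac{(1+p)\lambda_p}{2},
\end{equation*}
which rearranges to $\eta<\dfrac{(1+p)\lambda_p^{2}}{80\sqrt{10}\,(1-p)\lambda^{2}L(T_o+1)}$.

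Finally I would close the argument by observing that $(1-p)\lambda^{2}\le 1$ (both factors lie in $[0,1]$), so the stated hypothesis $\eta\le\dfrac{(1+p)\lambda_p^{2}}{80\sqrt{10}\,(T_o+1)L}$ is a strictly stronger condition and therefore suffices. The only substantive step is the identity above; once it is in hand, the rest is bookkeeping. I expect the mildly tricky point to be keeping track of the difference between the bound stated in the lemma (which drops the $(1-p)\lambda^{2}$ factor for cleanliness) and the sharper bound that actually comes out of the computation; it is worth explicitly pointing out that the stated condition is sufficient but not necessary.
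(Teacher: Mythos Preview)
Your proposal is correct and follows essentially the same approach as the paper: both compute the eigenvalues of the $2\times 2$ matrix $\bA$ in closed form as $\tfrac{1+(1+p)\lambda^2}{2}\pm\tfrac{40\sqrt{10}\lambda^2(T_o+1)L\eta}{\lambda_p}$ and then reduce $\rho((1-p)\bA)<1$ to an explicit step-size condition via the identity $1-(1-p)\lambda^2=\lambda_p$. The only cosmetic difference is that the paper first bounds $\rho(\bA)$ and then multiplies by $(1-p)$, whereas you multiply by $(1-p)$ first and isolate the sharp threshold on $\eta$; one minor point to make explicit in your write-up is that $(1-p)\lambda^2<1$ \emph{strictly} (equivalently $\lambda_p>0$, which is part of Assumption~\ref{assump:graph}), so that the non-strict hypothesis $\eta\le\cdots$ still yields the strict conclusion $\rho((1-p)\bA)<1$.
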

\begin{proof}
    
    The eigenvalue $x$ of $\bA$ satisfies
    \begin{equation}\label{eq:poly}
        p(x) \coloneqq x^2 - (1+(1+p)\lambda^2)x + \left(\frac{(1+(1+p)\lambda^2)^2}{4} - \frac{16000\lambda^4(T_o+1)^2L^2\eta^2}{\lambda_p^2}\right)=0.
    \end{equation}
 Suppose that $x_1$ and $x_2$ are the two solutions to \eqref{eq:poly}.  Without the loss of generality, we have $x_1\ge x_2$ and
 \begin{align*}
     x_1 &= \frac{1+(1+p)\lambda^2}{2}+ \frac{40\sqrt{10}\lambda^2(T_o+1)L\eta}{\lambda_p},\\
     x_2 &=  \frac{1+(1+p)\lambda^2}{2} - \frac{40\sqrt{10}\lambda^2(T_o+1)L\eta}{\lambda_p}.
 \end{align*}
 If $\eta\le\frac{(1+p)\lambda_p^2}{80\sqrt{10}(T_o+1)L}$, we have 
 \begin{equation*}
    \frac{40\sqrt{10}\lambda^2(T_o+1)L\eta}{\lambda_p}\le \frac{(1+p)\lambda_p\lambda^2}{2}.
\end{equation*}
Correspondingly,
\begin{equation*}
    \rho(\bA) = \max\{|x_1|,|x_2|\} = |x_1| \le \frac{1+(1+p)\lambda^2(1+\lambda_p)}{2}
\end{equation*}
and 
\begin{equation*}
    \rho((1-p)\bA)\le \frac{1-p + (1+p)(1-\lambda_p^2)}{2} < 1,
\end{equation*}
where the last line is based on Assumption \ref{assump:graph}.
\end{proof}

Due to the invertibility of $\bI-(1-p)\bA$, it follows \cite[Corollary 5.6.16]{horn2012matrix} that
\begin{equation*}
    \sum_{k=0}^{\infty} (1-p)^k\bA^k = \left(\bI - (1-p)\bA\right)^{-1},
\end{equation*}
such that \eqref{eq:phi_unrolling_sum} becomes
\begin{equation}\label{eq:sum_of_consensus_error}
    \sum_{k=0}^K \bPhi^k \le \left(\bI-(1-p)\bA\right)^{-1}\bPhi^0 + \bC\sum_{k=0}^{K-1}\be^{k},
\end{equation}
where
\begin{equation*}
    \bC = \left(\bI-(1-p)\bA\right)^{-1}\frac{(1-p)\lambda^2}{\lambda_p}\bB.
\end{equation*}

Now, we are going to control the upper bound of the $\left(\bI-(1-p)\bA\right)^{-1}$, \ie, the upper bound of 
\begin{equation*}
     \frac{1}{\det(\bI-(1-p)\bA)}\operatorname{adj}(\bI-(1-p)\bA),
\end{equation*}
where $\det(\bA)$ means the determinant of $\bA$ and $\operatorname{adj}(\bA)$ represents the adjugate of $\bA$.
If the step-size further satisfies $\eta \le \frac{(1+p)\lambda_p^2}{360(T_o+1)L}$, we have
\begin{align*}
    \det(\bI-(1-p)\bA)
    & = \left(\frac{(1+p)\lambda_p}{2}\right)^2 -  \frac{16000\lambda^4(T_o+1)^2L^2(1-p)^2\eta^2}{\lambda_p^2}\\
    &\ge \frac{(1+p)^2\lambda_p^2}{4}  - \frac{(1+p)^2\lambda_p^2}{8} = \frac{(1+p)^2\lambda_p^2}{8}.
\end{align*}
Then,
\begin{align*}
        (\bI-(1-p)\bA)^{-1}
        &\le \frac{4}{(1+p)^2\lambda_p^3} \begin{bmatrix}
            (1+p)\lambda_p^2 & 80(1-\lambda_p)(T_o+1)^2\eta^2\\
             800(1-\lambda_p)L^2  & (1+p)\lambda_p^2
        \end{bmatrix},
    \end{align*}
and 
\begin{align*}
    \bC \!&\le \!\frac{240(1-\lambda_p)}{(1+p)^2\lambda_p^4}\begin{bmatrix}
        4c_{1} L^2(T_o+1)^4\eta^2\eta_l^2 &6nc_{2}(T_o+1)^2\eta^2\\
        3c_{3} L^2(T_o+1)^2\eta^2& 3nc_{4}
    \end{bmatrix},
\end{align*}
where
\begin{align*}
    c_{1} &= \left((1+p)\lambda_p^2 + 60(1-\lambda_p)\eta_c^2\right),\\
    c_{2} &= \left((1+p)\lambda_p^2 + 40(1-\lambda_p)\right),\\
    c_{3} &= \left((1+p)\lambda_p^2 + 1100(1-\lambda_p)L^2(T_o+1)^2\eta_l^2\right),\\
    c_{4} &= \left((1+p)\lambda_p^2  + 1600(1-\lambda_p)L^2(T_o+1)^2\eta^2 \right).
\end{align*}

Thus, if $\bX^0 = \bx^0\boldsymbol{1}_n^{\top}$,
\begin{subequations}\label{eq:sum_phi_case2}
\begin{equation}\label{eq:sum_phi_x_case2}
\begin{aligned}
    \sum_{k=0}^{K-1} \E[\|\bPhi_x^k\|_{\normF}^2]\le\sum_{k=0}^K \E[\|\bPhi_x^k\|_{\normF}^2]
    &\le \frac{320(1-\lambda_p)(T_o+1)^2\eta^2}{(1+p)^2\lambda_p^3}\E[\|\bPhi_y^{0}\|_{\normF}^2] \\
    &\quad+\frac{960c_{1}(1-\lambda_p) L^2(T_o+1)^4\eta^2\eta_l^2}{(1+p)^2\lambda_p^4}\sum_{k=0}^{K-1}\E[\|\barY^k\|_{\normF}^2] \\
    &\quad+ \frac{1440c_{2}(1-\lambda_p)(T_o+1)^2\eta^2}{b(1+p)^2\lambda_p^4}nK\sigma^2,
\end{aligned}
\end{equation}
and
\begin{equation}\label{eq:sum_phi_y_case2}
\begin{aligned}
    \sum_{k=0}^{K-1} \E[\|\bPhi_y^k\|_{\normF}^2]\le\sum_{k=0}^K \E[\|\bPhi_y^k\|_{\normF}^2]
    &\le \frac{4}{(1+p)\lambda_p}\E[\|\bPhi_y^{0}\|_{\normF}^2] \\
    &\quad+ \frac{720c_{3}(1-\lambda_p)L^2(T_o+1)^2\eta^2}{(1+p)^2\lambda_p^4}\sum_{k=0}^{K-1}\E[\|\barY^k\|_{\normF}^2] \\
    &\quad+ \frac{720 c_{4}(1-\lambda_p)}{b(1+p)^2\lambda_p^4}nK\sigma^2.
\end{aligned}
\end{equation}
\end{subequations}

Substituting \eqref{eq:sum_phi_case2} into the \eqref{eq:sum_descent_lemma}, we have
\begin{equation}\label{eq:finish_up1}
\begin{aligned}
   \frac{\eta (T_o+1)}{4}\sum_{k=0}^{K-1}\E[\|\nabla f(\barx^k)\|_2^2]
   & \le \tilde{f}+ \underset{T_1}{\underbrace{\frac{15L^2(T_o+1)^3\eta\eta_l^2}{n}\sum_{k=0}^{K-1} \E[\|\barY^k\|_{\normF}^2]}} +\underset{T_2}{\underbrace{\frac{80c_{1}L^2(T_o+1)^3\eta\eta_l^2}{n(1+p)^2\lambda_p^3}\E[\|\bPhi_y^0\|_{\normF}^2]}}\\
   & \quad+ \left(T_3+\frac{3LT_o\eta^2}{2n}\right)K\sigma^2/b
\end{aligned}
\end{equation}
where $T_3 =\frac{7200(84\eta_c^2+2(1+p)\lambda_p^2)(1-\lambda_p)L^2(T_o+1)^3\eta\eta_l^2}{(1+p)^2\lambda_p^4}+64L^2(T_o+1)^3\eta\eta_l^2$.
Together with \eqref{eq:barY} and \eqref{eq:sum_phi_x_case2}, 
we have
\begin{align*}
    \sum_{k=0}^{K-1} \E[\|\barY^k\|_{\normF}^2] 
    &\le 3K\sigma^2/b + 3L^2 \sum_{k=0}^{K-1}\E[\|\bPhi_x^k\|_{\normF}^2]+3n\sum_{k=0}^{K-1}\E[\|\nabla f(\barx^k)\|_2^2]\\
    &\le 6n\sum_{k=0}^{K-1}\E[\|\nabla f(\barx^k)\|_2^2] + \frac{1920(1-\lambda_p)L^2(T_o+1)^2\eta^2}{(1+p)^2\lambda_p^3}\E[\|\bPhi_y^{0}\|_{\normF}^2]\\
    &\quad+ \left(\frac{1440c_{2}(1-\lambda_p) L^2(T_o+1)^2\eta^2}{(1+p)^2\lambda_p^4}+\frac{1}{n}\right)6nK\sigma^2/b
\end{align*}
where the last inequality is due to the step-size conditions, \ie, $\eta\le\frac{(1+p)\lambda_p^2}{360L(T_o+1)}$ and $\eta_l\le \frac{1}{8L(T_o+1)}$ s.t. 
\begin{equation*}
    \frac{3\cdot 960 c_{1}L^4(T_o+1)^4\eta^2\eta_l^2}{(1+p)^2\lambda_p^4}\le \frac{1}{2}.
\end{equation*}
By further assuming that $\eta_l\le\frac{1}{27L(T_o+1)}$ we have 
\begin{align*}
   T_1 &\le  \frac{\eta(T_o+1)}{8}\sum_{k=0}^{K-1}\E[\|\nabla f(\barx^k)\|_2^2]+\frac{\alpha_0(1-p)(T_o+1)\eta}{n}\E[\|\bPhi_y^{0}\|_{\normF}^2]\!+\!\alpha_1 L^2(T_o+1)^3\eta\eta_l^2K\frac{\sigma^2}{b},
\end{align*}
for some positive absolute constant $\alpha_0,\alpha_1$. Suppose that $\eta_c=\alpha\sqrt{1+p}\lambda_p$ and $\eta_l\le \frac{\sqrt{1+p}\lambda_p}{360\alpha L(T_o+1)}$ for some positive $\alpha>0.1$. Then,
\begin{align*}
    &T_2\le \alpha_2\left(\frac{(T_o+1)\eta}{n}\right)\E[\|\bPhi_y^0\|_{\normF}^2],\\
    &T_3 \le \alpha_3\left(\frac{(1-p)L^2(T_o+1)^3\eta^3}{(1+p)^2\lambda_p^4} +L^2(T_o+1)^3\eta\eta_l^2\right)
\end{align*}
for some positive absolute constants $\alpha_2$ and $\alpha_3$.
Therefore, 
\begin{align*}
    &\frac{\eta (T_o+1)}{8}\sum_{k=0}^{K-1}\E[\|\nabla f(\barx^k)\|_2^2]\\
    &\le  \tilde{f}+\!O\!\left(\!\frac{(T_o+1)\eta}{n} \!\right)\!\E[\|\bPhi_y^0\|_{\normF}^2] \!+\!O\!\left(\!\frac{(1-p)L^2(T_o+1)^2\eta^2}{(1+p)^2\lambda_p^4}\right.+\left. L^2(T_o+1)^2\eta_l^2+\frac{3L\eta}{2n}\right)\frac{K(T_o+1)\eta\sigma^2}{b},
 \end{align*}
 which is equivalent to \eqref{eq:thm1_rate}.


\section{Proof of Corollary \ref{corollary:communication_complexity_minibatch}}\label{appendix:proof_corollary}
By rearranging and relaxing \eqref{eq:thm1_rate}, we have 
\begin{align} 
        \frac{1}{K}\sum_{k=0}^{K-1}\E[\|\nabla f(\barx^k)\|_2^2]
        &\le O\left(\frac{\tilde{f}}{\eta_l \eta_c T_o K} + \frac{\eta_l\eta_cL\sigma^2}{nb}+ \frac{1}{(1+p)\lambda_p^2}\frac{L^2T_o^2\eta_l^2\sigma^2}{b}\right.\left.+ \frac{1}{nK}\E[\|\bPhi_y^0\|_{\normF}^2] \right)\label{eq:simple_rate}.
\end{align}
To further fine-tune the step-size and obtain the exact convergence rate, we establish the following lemma, which is slightly different from the Lemma 17 in \cite{koloskova2020unified}.

\begin{lemma}\label{lemma:fine-tune}
   For any parameters $r_0\ge 0,  a_1> 0, a_2>0$, if $K$ is sufficiently large s.t. $\eta^\prime =  \min \left \{\left(\frac{r_0}{a_1 K}\right)^{\frac{1}{2}},  \left(\frac{r_0}{a_2 K}\right)^{\frac{1}{3}} \right \}\le \bar{\eta}$, \ie,
$$
       K\ge \max \left \{\frac{r_0}{a_1\bar{\eta}^2},\frac{r_0}{a_2\bar{\eta}^3}\right \}, $$
   we have 
   \begin{align*}
       \Psi^K &= \frac{r_0}{\eta^\prime K} + a_1 \eta^\prime + a_2 (\eta^\prime)^2 \le 2\left(\frac{a_1r_0}{K} \right)^{\frac{1}{2}} + 2\left(\frac{\sqrt{a_2}r_0}{K} \right)^{\frac{2}{3}}. 
   \end{align*}
\end{lemma}
\begin{proof}
   We mainly follow the proof of \cite[Lemma 17]{koloskova2020unified}.
   \begin{itemize}
       \item If $\eta^\prime = \left(\frac{r_0}{a_1 K}\right)^{\frac{1}{2}}\le  \left(\frac{r_0}{a_2 K}\right)^{\frac{1}{3}}$, 
           \begin{align*}
               \Psi^K &\le 2\bigg(\frac{r_0a_1}{K}\bigg)^{\frac{1}{2}} + a_2\bigg(\frac{r_0}{a_1 K}\bigg)\le 2\bigg(\frac{r_0a_1}{K}\bigg)^{\frac{1}{2}}  + \bigg(\frac{\sqrt{a_2}r_0}{K} \bigg)^{\frac{2}{3}}.
           \end{align*}
       \item  If $\eta^\prime = \left(\frac{r_0}{a_2 K}\right)^{\frac{1}{3}}\le \left(\frac{r_0}{a_1 K}\right)^{\frac{1}{2}}$,
       \begin{align*}
           \Psi^K &\le 2\left(\frac{\sqrt{a_2}r_0}{K} \right)^{\frac{2}{3}} + a_1\left(\frac{r_0}{a_2 K}\right)^{\frac{1}{3}} \le  2\left(\frac{\sqrt{a_2}r_0}{K} \right)^{\frac{2}{3}}  + \bigg(\frac{r_0a_1}{K}\bigg)^{\frac{1}{2}}.
       \end{align*}
       From $K\ge \max \left \{\frac{r_0}{a_1\bar{\eta}^2},\frac{r_0}{a_2\bar{\eta}^3}\right \}$, we have $\eta^\prime\le\bar{\eta}$.
   \end{itemize}
\end{proof}
Then,  applying Lemma \ref{lemma:fine-tune} with $\eta' = \alpha T_o \eta_l$ and 
\begin{align*}
    r_0 = \frac{\tilde{f}}{\sqrt{1+p}\lambda_p}, \qquad a_1 = \frac{\sqrt{1+p}\lambda_pL\sigma^2}{nT_ob},\qquad
     a_2 = \frac{L^2\sigma^2}{(1+p)\lambda_p^2b},
\end{align*}
we bound the right hand side of \eqref{eq:simple_rate} as
\begin{align*}
    O\left(\bigg(\frac{L\tilde{f}\sigma^2}{nT_obK}\bigg)^{\frac{1}{2}} + \bigg(\frac{L\tilde{f}\sigma}{(1+p)\lambda_p^2\sqrt{b}K}\bigg)^{\frac{2}{3}}+
    \frac{1}{nK}\E[\|\bPhi_y^0\|_{\normF}^2] \right),
\end{align*}
if the number of communication rounds $K$ is sufficiently large, i.e., 
$K\ge\max\left\{\frac{360^2 nT_o bL\tilde{f}}{(1+p)^2\lambda_p^4\sigma^2}, \frac{360^3 bL\tilde{f}}{(1+p)\lambda_p^2\sigma^2} \right\}$
such that $\min \Big\{\left(\frac{r_0}{a_1 K}\right)^{\frac{1}{2}}, \left(\frac{r_0}{a_2 K}\right)^{\frac{1}{3}}\Big\}\le \bar{\eta} = \frac{\sqrt{1+p}\lambda_p}{360L}$.

\end{document}